\DeclareFixedFont{\ttb}{T1}{txtt}{bx}{n}{9.5} 
\DeclareFixedFont{\ttm}{T1}{txtt}{m}{n}{9.5}  
\definecolor{codeblue}{rgb}{0,0,0.6}
\definecolor{codegreen}{rgb}{0,0.6,0}
\definecolor{dark-blue}{rgb}{0.15,0.15,0.4}
\definecolor{codepurple}{rgb}{0.6,0,0.6}
\theoremstyle{definition}
\theoremstyle{plain}
\newtheorem{theorem}{Theorem}
\newtheorem{proposition}{Proposition}
\newtheorem{lemma}{Lemma}
\newenvironment{manualtheorem}[1]{%
  \manualtheoreminner
}{\endmanualtheoreminner}
\newcommand\pythonstyle{\lstset{
    language=Python,
    basicstyle=\scriptsize\ttfamily,
    otherkeywords={self,with},             
    keywordstyle=\color{codepurple},
    emph={__init__, dim, None},
    emphstyle=\color{codeblue},
    stringstyle=\color{codegreen},
    commentstyle=\color{codegreen},
    frame=none,              
    showstringspaces=false,
    breaklines=true,
    numbers=left,
    numbersep=3pt,
    tabsize=2,
    breakatwhitespace=false,
    abovecaptionskip=2ex,
    captionpos=b,
}}
\DeclareMathOperator*{\argmax}{arg\,max}
\DeclareMathOperator*{\variance}{Var}
\newif\ifboldmatrix
\newcommand{\bx}{\ensuremath{\mathbf{x}}}
\newcommand{\by}{\ensuremath{\mathbf{y}}}
\newcommand{\bbE}{\ensuremath{\mathbb{E}}}
\newcommand{\bbP}{\ensuremath{\mathbb{P}}}
\newcommand{\bbR}{\ensuremath{\mathbb{R}}}
\newcommand{\bbX}{\ensuremath{\mathbb{X}}}
\newcommand{\calD}{\ensuremath{\mathcal{D}}}
\newcommand{\botorch}{\textsc{BoTorch}}
\newcommand{\OKG}{\textsc{OKG}}
\newcommand{\papertitle}{\botorch: A Framework for Efficient Monte-Carlo Bayesian Optimization}
\title{\papertitle}
\author{%
  Maximilian Balandat\\
  Facebook\\
  \texttt{balandat@fb.com} \\
  \And
  Brian Karrer\\
  Facebook\\
  \texttt{briankarrer@fb.com} \\
  \And
  Daniel R. Jiang\\
  Facebook\\
  \texttt{drjiang@fb.com} \\
  \And
  Samuel Daulton\\
  Facebook\\
  \texttt{sdaulton@fb.com} \\
  \And
  Benjamin Letham\\
  Facebook\\
  \texttt{bletham@fb.com} \\
  \And
  Andrew Gordon Wilson\\
  New York University\\
  \texttt{andrewgw@cims.nyu.edu} \\
  \And
  Eytan Bakshy\\
  Facebook\\
  \texttt{ebakshy@fb.com} \\
}
\begin{document}

\maketitle

\begin{abstract}
Bayesian optimization provides sample-efficient global optimization for a broad range of applications, including automatic machine learning, engineering, physics, and experimental design. We introduce \botorch{}, a modern programming framework for Bayesian optimization that combines Monte-Carlo (MC) acquisition functions, a novel sample average approximation optimization approach, auto-differentiation, and variance reduction techniques. \botorch{}'s modular design facilitates flexible specification and optimization of probabilistic models written in \mbox{PyTorch}, simplifying implementation of new acquisition functions. 
Our approach is backed by novel theoretical convergence results and made practical by a distinctive algorithmic foundation that leverages fast predictive distributions, hardware acceleration, and deterministic optimization. We also propose a novel ``one-shot'' formulation of the Knowledge Gradient, enabled by a combination of our theoretical and software contributions. In experiments, we demonstrate the improved sample efficiency of \botorch{} relative to other popular libraries.
\end{abstract}

\section{Introduction}
\label{sec:Introduction}

Computational modeling and machine learning (ML) have led to an acceleration of scientific innovation in diverse areas, ranging from drug design to robotics to material science. These tasks often involve solving time- and resource-intensive global optimization problems to achieve optimal performance. Bayesian optimization (BO)~\citep{ohagan1978curve, jones98, osborne2010bayesian}, an established methodology for sample-efficient sequential optimization, has been proposed as an effective solution to such problems, and has been applied successfully to tasks ranging from hyperparameter optimization \citep{feurer2015automl,shahriari16review,wu2019cfkg}, robotic control~\citep{calandra2016gait,antonova2017deep}, chemical  design~\citep{griffiths2017constrained,li2017rapid,zhang2020materials}, and tuning and policy search for internet-scale software systems~\citep{agarwal2018bayesopt,letham2019noisyei,letham2019bayesian,cbo}. 
Meanwhile, ML research has been undergoing a revolution driven largely by new programming frameworks and hardware that reduce the time from ideation to execution \citep{jia2014caffe, chen2015mxnet, abadi2016tensorflow, paszke2017automatic}. 
While BO has become rich with new methodologies, today there is no coherent framework that leverages these computational advances to simplify and accelerate BO research in the same way that modern  frameworks have for deep learning.
In this paper, we address this gap by introducing \botorch{}, a modular and scalable Monte Carlo (MC) framework for BO that is built around modern paradigms of computation, and theoretically grounded in novel convergence results. Our contributions include:

\begin{itemize}[leftmargin=2.6ex,topsep=-0.25ex,itemsep=0.05ex]
    \item A novel approach to optimizing MC acquisition functions that effectively combines with deterministic higher-order optimization algorithms and variance reduction techniques.
    \item The first 
    convergence results for sample average approximation (SAA) of MC acquisition functions, including novel general convergence results for SAA via randomized quasi-MC.
    \item A new, SAA-based ``one-shot'' formulation of the Knowledge Gradient, a look-ahead acquisition function, with improved performance over the state-of-the-art.
    \item Composable model-agnostic abstractions for MC BO that leverage modern computational technologies, including auto-differentiation and scalable parallel computation on CPUs and GPUs.
\end{itemize}

We discuss related work in Section \ref{subsec:Introduction:RelatedWork} and then present the methodology underlying \botorch{} in Sections \ref{sec:Acquisition} and \ref{sec:OptimizeAcq}. Details of the \botorch{} framework, including its modular abstractions and implementation examples, are given in Section \ref{sec:ModularBO}. Numerical results are provided in Section \ref{sec:Experiments}.

\section{Background and Related Work}
\label{subsec:Introduction:RelatedWork}

In BO, we aim to solve $\max_{x \in \mathbb X} f_{\mathrm{true}}(x)$, where $f_{\mathrm{true}}$ is an expensive-to-evaluate function and $\mathbb X \!\subset\! \bbR^d$ is a feasible set. BO consists of two main components: a \emph{probabilistic surrogate model} of the observed function---most commonly, a Gaussian process (GP)---and an \emph{acquisition function} that encodes a strategy for navigating the exploration vs. exploitation trade-off~\citep{shahriari16review}. Taking a model-agnostic view, our focus in this paper is on MC acquisition functions.

Popular libraries for BO include
Spearmint \citep{snoek2012practical},
GPyOpt \citep{gpyopt2016},
Cornell-MOE~\citep{wu2016parallel},
RoBO \citep{klein17robo},
Emukit \citep{emukit2018},
and Dragonfly \citep{kandasamy2019dragonfly}.
We provide further discussion of these packages in Appendix~\ref{appdx:sec:PackageComparison}.
Two other libraries, ProBO \citep{neiswanger2019probo} and GPFlowOpt \citep{knudde2017gpflowopt}, are of particular relevance. 
ProBO is a recently suggested framework\footnote{No implementation of ProBO is available at the time of this writing.} 
for using general probabilistic programming in BO. While its model-agnostic approach is similar to ours, ProBO, unlike \botorch{}, does not benefit from  gradient-based optimization provided by differentiable programming, or algebraic methods designed to exploit GPU acceleration.
GPFlowOpt inherits support for auto-differentiation and hardware acceleration from TensorFlow \citep[via GPFlow,][]{matthews2017gpflow}, but unlike \botorch{}, it does not use algorithms designed to specifically exploit this potential. Neither ProBO nor GPFlowOpt naturally support MC acquisition functions. 
In contrast to all existing libraries, \botorch{} is a modular programming framework and employs novel algorithmic approaches that achieve a high degree of flexibility and performance.

The MC approach to optimizing acquisition functions has been considered in the BO literature to an extent, typically using stochastic methods for optimization \citep{wang2016parallel,wu2016parallel,wu2017bayesiangrad, wilson2017reparamacq}. Our work takes the distinctive view of sample average approximation (SAA), an approach that combines sampling with deterministic optimization and variance reduction techniques. To our knowledge, we provide the first theoretical analysis and systematic implementation of this approach in the BO setting. 

\section{Monte-Carlo Acquisition Functions}
\label{sec:Acquisition}

We begin by describing a general formulation of BO in the context of MC acquisition functions.
Suppose we have collected data $\calD = \{(x_i, y_i)\}_{i=1}^n$, where $x_i \in \bbX$ and $y_i = f_\mathrm{true}(x_i) + v_i(x_i)$ with $v_i$ some noise corrupting the true function value $f_\mathrm{true}(x_i)$. We allow~$f_\mathrm{true}$ to be multi-output, in which case $y_i, v_i \in \bbR^m$. In some applications we may also have access to distributional information of the noise $v_i$, such as its (possibly heteroskedastic) variance. %
Suppose further that we have a probabilistic surrogate model $f$ that for any  $\bx := \{x_1, \dotsc, x_q\}$ provides a distribution over $f(\bx) := (f(x_1), \dotsc, f(x_q))$ and $y(\bx) := (y(x_1), \dotsc, y(x_q))$. We denote by $f_{\calD}(\bx)$ and $y_{\calD}(\bx)$ the respective $\emph{posterior}$ distributions conditioned on data~$\calD$.
In BO, the model $f$ traditionally is a GP, and the $v_i$ are assumed i.i.d. normal, in which case both $f_{\calD}(\bx)$ and $y_{\calD}(\bx)$ are multivariate normal. The MC framework we consider here makes no particular assumptions about the form of these posteriors.

The next step in BO is to optimize an acquisition function evaluated on $f_{\calD}(\bx)$ 
over the \emph{candidate set}~$\bx$.
Following \citep{wilson2018maxbo, bect2019supermartingaleGP}, many acquisition functions can be written as
\begin{align}
\label{eq:Acquisition:Myopic:Basic}
    \alpha(\bx; \Phi, \calD) = \bbE \bigl[ a(g(f(\bx)), \Phi) \,|\, \calD\bigr], 
\end{align}
where $g: \bbR^{q \times m\!} \rightarrow \bbR^q$ is a (composite) \emph{objective function}, $\Phi \!\in\! \mathbf{\Phi}$ are parameters independent of~$\bx$ in some set $\mathbf{\Phi}$, and $a: \bbR^{q\!} \times \mathbf{\Phi} \!\rightarrow\! \bbR$ is a \emph{utility function} that defines the acquisition function. 

In some situations, the expectation over  $f_{\calD}(\bx)$  in~\eqref{eq:Acquisition:Myopic:Basic} and its gradient $\nabla_{\bx} \alpha(\bx; \Phi, \calD)$ can be computed analytically, e.g. if one considers a single-output ($m\!=\!1$) model, a single candidate ($q\!=\!1$) point~$x$, a Gaussian posterior $f_\calD(x) = \mathcal{N}(\mu_x, \sigma_x^2)$, and the identity objective $g(f) \equiv f$. Expected Improvement (EI) is a popular acquisition function that maximizes the expected difference between the currently observed best value $f^*$ (assuming noiseless observations) and $f$ at the next query point, through the utility  $a(f, f^*) = \max(f - f^*, 0)$. EI and its gradient have a well-known analytic form~\citep{jones98}.

In general, analytic expressions are not available for arbitrary objective functions~$g(\cdot)$, utility functions $a(\cdot\, , \cdot)$, non-Gaussian model posteriors, or collections of points $\bx$ which are to be evaluated in a parallel or asynchronous fashion \citep{ginsbourger11, snoek2012practical,wu2016parallel,wang2016parallel, wilson2017reparamacq}. 
Instead, MC integration can be used to approximate the expectation~\eqref{eq:Acquisition:Myopic:Basic} using samples from the posterior. 
An MC approximation $\hat{\alpha}_{\!N}(\bx; \Phi,\calD)$ of~\eqref{eq:Acquisition:Myopic:Basic} using $N$ samples $\xi_\calD^i(\bx) \sim f_\calD(\bx)$ is straightforward:
\begin{align}
\label{eq:Acquisition:MyopicMC}
    \hat{\alpha}_{\!N}(\bx; \Phi,\calD) = \frac{1}{N}\! \sum_{i=1}^N a(g(\xi_\calD^i(\bx)), \Phi).
\end{align}
The obvious way to evaluate~\eqref{eq:Acquisition:MyopicMC} is to draw i.i.d. samples $\xi_\calD^i(\bx)$. Alternatively, randomized quasi-Monte Carlo (RQMC) techniques \citep{caflisch1998monte} can be used to significantly reduce the variance of the estimate and its gradient (see Appendix~\ref{appdx:sec:NonStochOpt} for additional details).

\begin{figure}[t]
  \centering
  \includegraphics[width=0.95\textwidth]{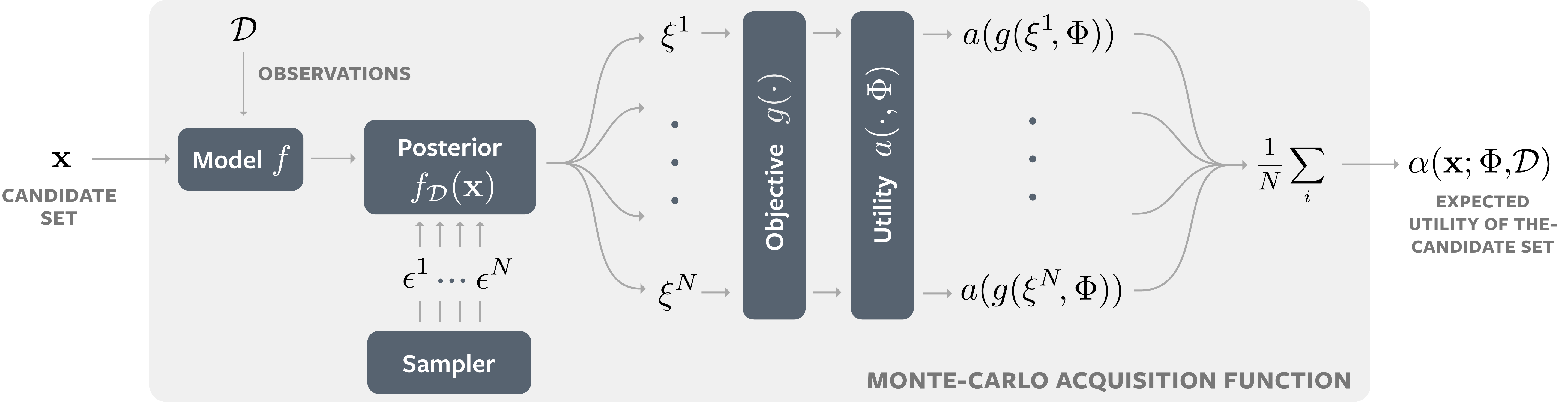}\\[-2ex]
  \caption{MC acquisition functions. 
  Samples $\xi_\calD^i$ from the posterior $f_\calD(\bx)$ 
  provided by the model $f$ at $\bx$ 
  are evaluated in parallel and averaged as in~\eqref{eq:Acquisition:MyopicMC}. All operations are fully differentiable.}
  \label{fig:Acquisition:MCAcq}
\vspace{-2ex}
\end{figure}

\section{MC Bayesian Optimization via Sample Average Approximation}
\label{sec:OptimizeAcq}

To generate a new candidate set $\mathbf{x}$, one must optimize the acquisition function~$\alpha$. 
Doing this effectively, especially in higher dimensions, typically requires using gradient information. 
For differentiable analytic acquisition functions (e.g. EI, UCB), one can either manually implement gradients, or use auto-differentiation to compute $\nabla_{\!x} \alpha(x; \Phi,\calD)$, provided one can differentiate through the posterior parameters (as is the case for Gaussian posteriors).

\subsection{Optimizing General MC Acquisition Functions}
An unbiased estimate of the MC acquisition function  gradient $\nabla_{\!\bx} \alpha(\bx; \Phi,\calD)$ can often be obtained from~\eqref{eq:Acquisition:MyopicMC} via the reparameterization trick~\citep{kingma2013reparam,rezende2014stochbackprop}. The basic idea is that $\xi \sim f_\calD(\bx)$ can be expressed as a suitable (differentiable) deterministic transformation $\xi = h_{\calD}(\bx, \epsilon)$ of an auxiliary random variable $\epsilon$ independent of~$\bx$.
For instance, if $f_\calD(\bx) \sim \mathcal{N}(\mu_\bx, \Sigma_\bx)$, then $h_{\calD}(\bx, \epsilon) = \mu_\bx + L_{\bx} \epsilon$, with $\epsilon \sim \mathcal{N}(0, I)$ and $L_{\bx}L_{\bx}^T = \Sigma_\bx$.
If $a( \cdot ,\Phi)$ and $g( \cdot )$ are differentiable, then $\nabla_{\!\bx} a(g(\xi), \Phi) = \nabla_{\!g} a \nabla_{\!\xi} g \nabla_{\!\bx} h_{\calD}(\bx, \epsilon)$. 

Our primary methodological contribution is to take a sample average approximation \citep{kleywegt2002sample} approach to BO. The conventional way of optimizing MC acquisition functions of the form~\eqref{eq:Acquisition:MyopicMC} is to re-draw samples from $\epsilon$ for each evaluation and apply stochastic first-order
methods such as Stochastic Gradient Descent (SGD) \citep{wilson2018maxbo}.
In our SAA approach, rather than re-drawing samples from $\epsilon$ for each evaluation of the acquisition function, we draw a set of base samples $E:=\{\epsilon^i\}_{i=1}^N$ once, and hold it fixed between evaluations throughout the course of optimization (this can be seen as a specific incarnation of the method of common random numbers). Conditioned on~$E$, the resulting MC estimate $\hat{\alpha}_{\!N}(\bx;\Phi,\calD)$ is deterministic. We then obtain the candidate set~$\hat{\bx}_{\!N}^*$ as
\begin{align}
\label{eq:OptimizeAcq:SAA:optimizer}
    \hat{\bx}_{N}^* \in \argmax_{\bx \in \bbX^q} \hat{\alpha}_{\!N}(\bx; \Phi, \calD).
\end{align}
The gradient $\nabla_\bx\hat{\alpha}_{\!N}(\bx; \Phi, \calD)$ can be computed as the average of the sample-level gradients, exploiting auto-differentiation. We emphasize that whether this average is a ``proper'' (i.e., unbiased, consistent) estimator of $\nabla_\bx \alpha(\bx; \Phi,\calD)$ is irrelevant for the convergence results we will derive below.

While the convergence properties of MC integration are well-studied \citep{caflisch1998monte}, the respective literature on SAA, i.e., convergence of the \emph{optimizer}~\eqref{eq:OptimizeAcq:SAA:optimizer}, is far less comprehensive. 
Here, we derive what, to the best of our knowledge, are the first SAA convergence results for (RQ)MC acquisition functions in the context of BO. 
To simplify our exposition, we limit ourselves to GP surrogates 
and i.i.d. base samples; more general 
results and proofs are presented in Appendix~\ref{appdx:sec:GeneralSAAresults}.
For notational simplicity, we will drop the dependence of $\alpha$ and $\hat{\alpha}_{\!N}$ on $\Phi$ and $\calD$ for the remainder of this section. Let $\alpha^* := \max_{\bx\in \bbX^q} \alpha(\bx)$, and denote by $\mathcal{X}^*$ the associated set of maximizers. 
Similarly, let $\hat{\alpha}_{\!N}^* := \max_{\bx\in \bbX^q} \hat{\alpha}_{\!N}(\bx)$. 
With this we have the following key result:
\begin{theorem}
\label{thm:OptimizeAcq:SAAConvergence}
Suppose (i) $\bbX$ is compact,
(ii) $f$ has a GP prior with continuously differentiable mean and covariance functions, and (iii) $g( \cdot )$ and $a( \cdot , \Phi)$ are Lipschitz continuous. 
If the base samples $\{\epsilon^i\}_{i=1}^N$ are  i.i.d. $\mathcal N(0,1)$, 
then 
(1) $\hat{\alpha}_{\!N}^* \rightarrow \alpha^*$ a.s., and
(2) $\textnormal{dist}(\hat{\bx}_{\!N}^*, \mathcal{X}^*) \rightarrow 0$ a.s..
Under additional regularity conditions, (3) $\forall\, \delta>0$, $\exists\, K <\infty$, $\beta > 0$ s.t.  $\mathbb{P}\bigl(\textnormal{dist}(\hat{\bx}_{\!N}^*, \mathcal{X}^*) > \delta \bigr) \le K e^{-\beta N}\!,$  $\forall \, N \geq 1$.
\end{theorem}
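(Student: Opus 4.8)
The plan is to follow the standard two-stage recipe for sample average approximation: first establish a uniform strong law of large numbers (ULLN) over the compact set $\bbX^q$, from which parts (1) and (2) follow by classical $M$-estimator arguments, and then upgrade this to a uniform large-deviation bound to obtain the exponential rate in part (3). The common foundation is a Lipschitz estimate on the per-sample integrand. Writing $F(\bx,\epsilon) := a\bigl(g(\mu_\bx + L_\bx\epsilon)\bigr)$, assumption (ii) guarantees via the GP conditioning formulas that $\bx \mapsto \mu_\bx$ and $\bx \mapsto L_\bx$ are continuously differentiable, hence have bounded derivatives on the compact set $\bbX^q$; composing with the Lipschitz maps $g$ and $a$ from (iii) yields $|F(\bx,\epsilon) - F(\bx',\epsilon)| \le \kappa(\epsilon)\,\|\bx - \bx'\|$ with $\kappa(\epsilon) = C(1 + \|\epsilon\|)$ for a constant $C$ depending only on the Lipschitz constants and $\sup_{\bx}\|\nabla L_\bx\|$. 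Since $\epsilon \sim \mathcal N(0,I)$ has all moments, $\bbE[\kappa(\epsilon)] < \infty$, and each $F(\cdot,\epsilon^i)$ is continuous; this is exactly the domination-plus-continuity hypothesis required for a ULLN.

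First I would prove (1) and (2). The ULLN gives $\sup_{\bx \in \bbX^q} |\hat{\alpha}_N(\bx) - \alpha(\bx)| \to 0$ almost surely. Part (1) is then immediate, since $|\hat{\alpha}_N^* - \alpha^*| \le \sup_{\bx}|\hat{\alpha}_N(\bx) - \alpha(\bx)|$. For part (2) I would argue by contradiction using compactness: on any positive-probability event where $\textnormal{dist}(\hat{\bx}_N^*, \mathcal{X}^*) \not\to 0$, pass to a convergent subsequence $\hat{\bx}_{N_k}^* \to \bar{\bx}$ with $\bar{\bx} \notin \mathcal{X}^*$; uniform convergence together with continuity of $\alpha$ (which inherits the Lipschitz bound $\bbE[\kappa(\epsilon)]$) gives $\hat{\alpha}_{N_k}^* = \hat{\alpha}_{N_k}(\hat{\bx}_{N_k}^*) \to \alpha(\bar{\bx})$, while part (1) forces this limit to equal $\alpha^*$, contradicting $\bar{\bx} \notin \mathcal{X}^*$.

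For part (3) I would first reduce the set-distance event to a uniform-deviation event by a purely deterministic argument. For fixed $\delta > 0$, compactness and continuity of $\alpha$ make the gap $\eta := \alpha^* - \max\{\alpha(\bx) : \textnormal{dist}(\bx,\mathcal{X}^*) \ge \delta\}$ strictly positive. On the event $\{\textnormal{dist}(\hat{\bx}_N^*,\mathcal{X}^*) > \delta\}$, optimality of $\hat{\bx}_N^*$ for $\hat{\alpha}_N$ together with $\alpha(\hat{\bx}_N^*) \le \alpha^* - \eta$ forces $\sup_{\bx}|\hat{\alpha}_N(\bx) - \alpha(\bx)| \ge \eta/2$, so it suffices to prove $\bbP\bigl(\sup_{\bx}|\hat{\alpha}_N(\bx) - \alpha(\bx)| \ge \eta/2\bigr) \le K e^{-\beta N}$. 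This uniform large-deviation bound I would obtain by a finite-cover argument, for which the additional regularity condition supplies a uniform exponential-moment (MGF) bound on $F(\bx,\epsilon)$ — for a Gaussian posterior and Lipschitz $g,a$ this holds because $F(\bx,\cdot)$ grows at most linearly in $\|\epsilon\|$ and is therefore sub-Gaussian uniformly in $\bx$ on $\bbX^q$. Covering $\bbX^q$ by finitely many balls $B(\bx_j,r)$, Cram\'er's bound controls $\bbP(|\hat{\alpha}_N(\bx_j) - \alpha(\bx_j)| \ge \eta/4)$ at each of the finitely many centers, while the sample Lipschitz modulus $\tfrac1N\sum_i \kappa(\epsilon^i)$ — which concentrates exponentially around $\bbE[\kappa]$ by a second Cram\'er bound, $\kappa(\epsilon)$ being sub-exponential — controls the fluctuation inside each ball once $r$ is small enough. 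A union bound over the finitely many centers then yields the exponential rate, with $K,\beta$ depending on $\eta$, $\sup_\bx\|\nabla L_\bx\|$, and the cover size.

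The main obstacle is part (3), and specifically the interplay between the covering radius and the random Lipschitz modulus of $\hat{\alpha}_N$: because the per-realization Lipschitz constant is the sample average $\tfrac1N\sum_i \kappa(\epsilon^i)$ rather than a deterministic bound, one cannot simply fix $r$ in advance but must either (a) show this average concentrates exponentially and intersect with that good event, or (b) invoke a Talagrand-type concentration bound for $\sup_\bx|\hat{\alpha}_N - \alpha|$. I would take route (a), which keeps the argument elementary and self-contained but requires verifying that $\kappa(\epsilon)$ has a finite moment generating function near the origin; this, together with making the pointwise MGF bound on $F(\bx,\cdot)$ uniform over the compact set, is the technical crux that the ``additional regularity conditions'' are designed to supply.
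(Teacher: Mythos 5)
Your proof is correct, and its foundation---the per-sample Lipschitz modulus $\kappa(\epsilon) = C(1+\|\epsilon\|)$, obtained from continuous differentiability of the posterior mean and the Cholesky factor on the compact set $\bbX^q$ composed with Lipschitz $g$ and $a$---is exactly the paper's Proposition establishing $|A(\bx,\epsilon)-A(\by,\epsilon)| \le \ell(\epsilon)\|\bx-\by\|$ with $\ell(\epsilon) = C_\mu + C_L\|\epsilon\|$. Where you diverge is in what happens after this verification: the paper treats the SAA convergence machinery as a black box, checking the hypotheses of two results of Homem-de-Mello (2008) (pointwise SLLN plus the Lipschitz condition for parts (1)--(2); finiteness of the moment generating functions of $A(\bx,\epsilon)$ and $\ell(\epsilon)$ near the origin for the exponential rate in (3)), whereas you reconstruct that machinery from scratch: a uniform LLN plus the standard subsequence/compactness $M$-estimation argument for (1)--(2), and for (3) the gap reduction via $\eta = \alpha^* - \max\{\alpha(\bx): \textnormal{dist}(\bx,\mathcal{X}^*)\ge\delta\}$, a finite cover, Cram\'er bounds at the centers, and exponential concentration of the random Lipschitz modulus $\tfrac1N\sum_i\kappa(\epsilon^i)$---which is essentially the content of the cited Homem-de-Mello proofs, re-derived. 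Your handling of the ``additional regularity conditions'' also parallels the paper: you argue $F(\bx,\cdot)$ grows at most linearly in $\|\epsilon\|$ and is hence uniformly sub-Gaussian, while the paper's Lemma proves finiteness of both MGFs for \emph{all} $t$ via the Tsirelson--Ibragimov--Sudakov inequality applied to $A(\bx,\cdot)$ and to $\ell$; in both treatments the extra conditions are automatic under (i)--(iii). What your route buys is self-containedness and an explicit mechanism for where the rate comes from; what the paper's buys is brevity, plus a technically weaker requirement (MGF finiteness only near $t=0$ at each fixed $\bx$, with uniformity supplied by the Lipschitz modulus rather than a uniform sub-Gaussian bound). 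Two details to tighten if you write this out in full: the ULLN needs an integrable envelope, which follows from $|F(\bx,\epsilon)| \le |F(\bx_0,\epsilon)| + \kappa(\epsilon)\,\mathrm{diam}(\bbX^q)$ together with $\bbE|F(\bx_0,\epsilon)| < \infty$ (itself a consequence of Lipschitzness of $F(\bx_0,\cdot)$ in $\epsilon$, a step the paper also makes explicit before invoking the SLLN); and continuous differentiability of the Cholesky map requires the posterior covariance to be positive definite---the paper cites Murray (2016) for this and elides the same degeneracy issue.
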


Under relatively weak conditions,\footnote{Many utility functions~$a$ are Lipschitz, including those representing (parallel) EI and UCB \citep{wilson2017reparamacq}. Lipschitzness is a sufficient condition, and convergence can also be shown in less restrictive settings (see Appendix~\ref{appdx:sec:GeneralSAAresults}).}
Theorem~\ref{thm:OptimizeAcq:SAAConvergence} ensures not only that the optimizer $\hat{\bx}_{\!N}^*$ of $\hat{\alpha}_{\!N}$ converges to an optimizer of the true $\alpha$ with probability one, but also that the convergence (in probability) happens at an exponential rate. 
We stated Theorem~\ref{thm:OptimizeAcq:SAAConvergence} informally and for i.i.d. base samples for simplicity. In Appendix~\ref{appdx:subsec:GeneralSAAresults:RQMC} we give a formal statement, and extend it to base samples generated by a family of RQMC methods, leveraging recent theoretical advances \citep{owen2020rqmcslln}. 
While at this point we do not characterize improvements in theoretical convergence rates of RQMC over MC for SAA, we observe empirically that RQMC methods work remarkably well in practice (see Figures~\ref{fig:OptimizeAcq:MC_QMC_qualitative} and~\ref{fig:OptimizeAcq:SAAConvergence:MCqMC}).\\[-3ex]
\begin{figure*}[ht]
    \begin{minipage}[b]{0.55\columnwidth}
        \centering
        \hspace{-2ex}
        \includegraphics[width=\textwidth]{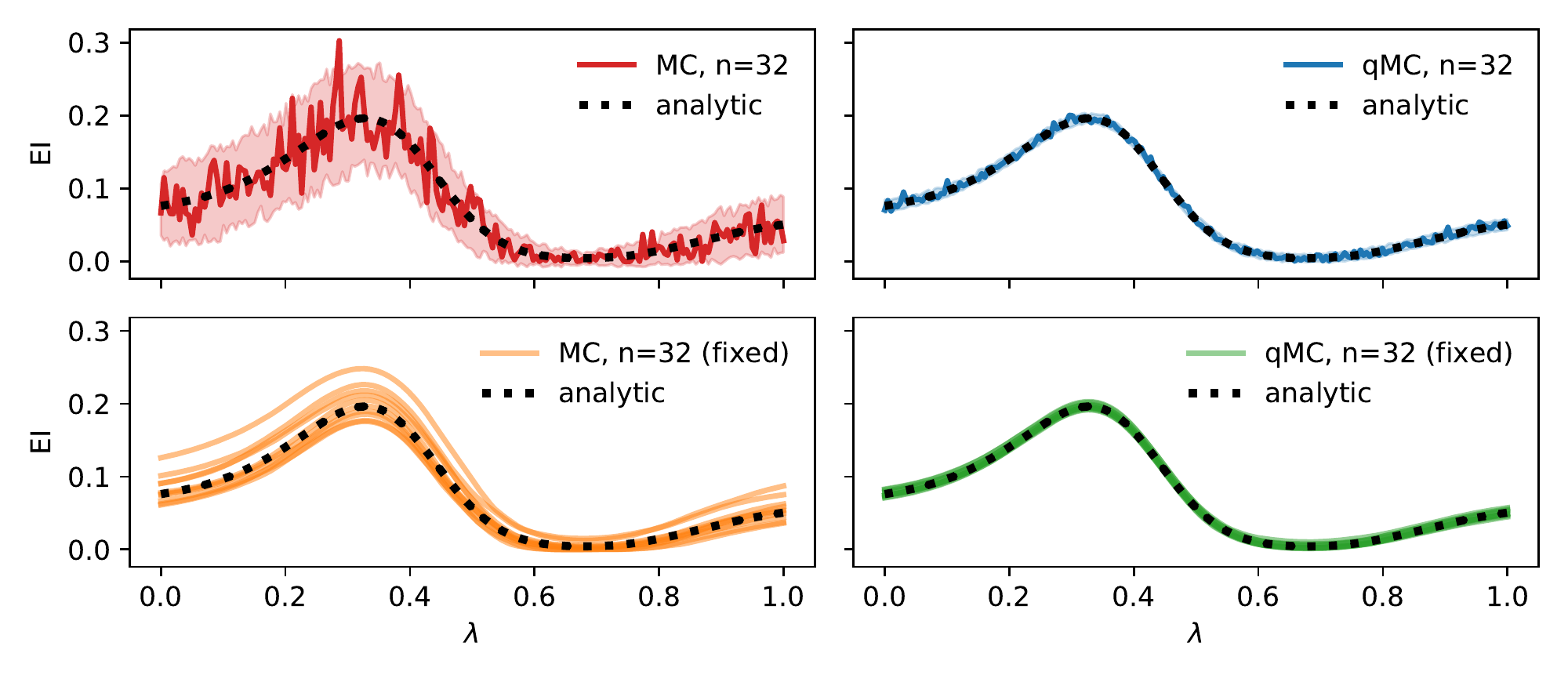}\\[-3ex]
        \caption{MC and RQMC acquisition functions, with and without (``fixed'') re-drawing base samples between evaluations. The model is a GP fit on 15 points randomly sampled from $\bbX = [0, 1]^6$ and evaluated on the Hartmann6 function along the slice $x(\lambda) = \lambda \mathbf{1}$.}
        \label{fig:OptimizeAcq:MC_QMC_qualitative}
    \end{minipage}
    \hfill
    \begin{minipage}[b]{0.41\columnwidth}
        \centering
        \hspace{-4.1ex}
        \includegraphics[width=1.1\textwidth]{./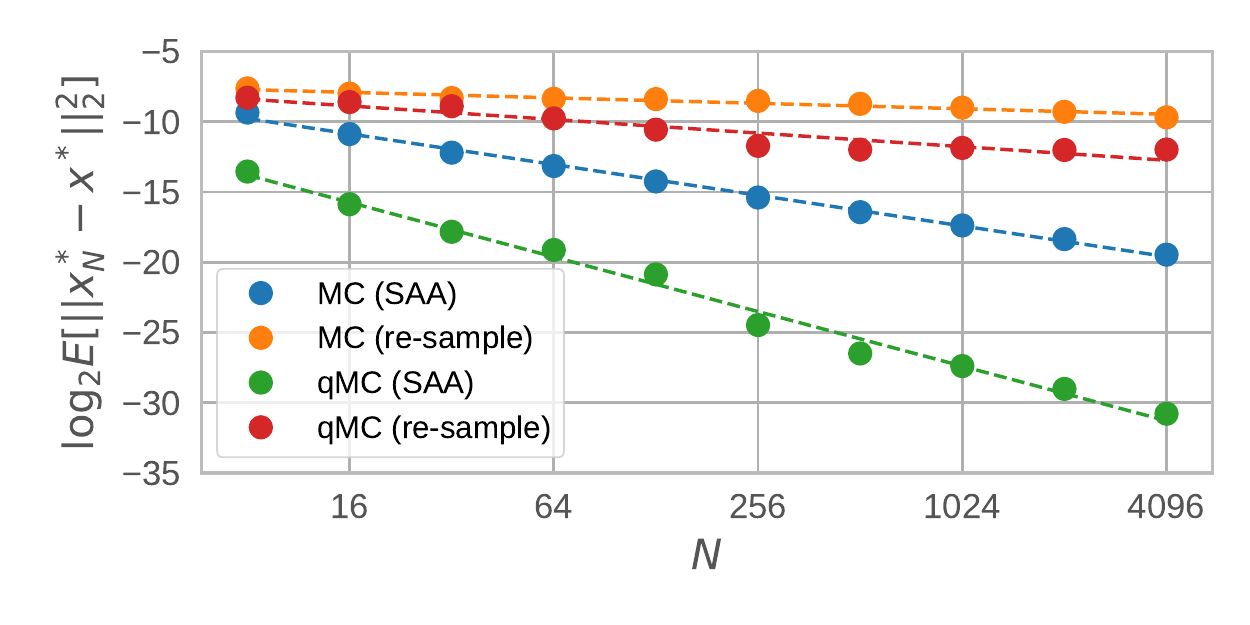}\\[-3ex]
        \caption{Empirical convergence rates of the optimizer for EI using MC / RQMC sampling under SAA / stochastic optimization (``re-sample''). Appendix~\ref{appdx:sec:NonStochOpt} provides additional detail and discussion.}
        \label{fig:OptimizeAcq:SAAConvergence:MCqMC}
    \end{minipage}
\vspace{-2ex}
\end{figure*}

The primary benefit from SAA comes from the fact that in order to optimize $\hat{\alpha}_{\!N}(\bx; \Phi, \calD)$ for fixed base samples~$E$, one can now employ the full toolbox of deterministic optimization, including quasi-Newton methods that provide faster convergence speeds and are generally less sensitive to optimization hyperparameters than stochastic first-order methods.
By default, we use multi-start optimization via \mbox{L-BFGS-B} in conjunction with an initialization heuristic that exploits fast batch evaluation of acquisition functions (see Appendix~\ref{appdx:subsec:ImplementatonDetails:ICHeuristic}). We find that in practice the bias from using SAA only has a minor effect on the performance relative to using the analytic ground truth, and often improves performance relative to stochastic approaches (see Appendix~\ref{appdx:sec:NonStochOpt}), while avoiding tedious tuning of optimization hyperparameters such as learning rates. 

\subsection{One-Shot Formulation of the Knowledge Gradient using SAA}
\label{sec:OneShotKG}

The acquisition functions mentioned above, such as EI and UCB, are \emph{myopic}, that is, they do not take into account the effect of an observation on the model in future iterations. In contrast, \emph{look-ahead} methods do. Our SAA approach enables a novel formulation of a class of look-ahead acquisition functions. For the purpose of this paper we focus on the Knowledge Gradient (KG)~\citep{frazier2008knowledge}, but our methods extend to other look-ahead acquisition functions such as two-step EI~\citep{wu2019practical}.

KG quantifies the expected increase in the maximum of~$f$ from 
obtaining the additional (random) observation data $\{\bx, y_\calD(\bx)\}$.
KG often shows improved BO performance relative to simpler, myopic acquisition functions such as EI~\citep{scott11kg}, but in its traditional form it is computationally expensive and hard to implement, two challenges that we address in this work. 
Writing $\calD_{\bx} := \calD \cup \{\bx, \by_\calD(\bx)\}$, we introduce a generalized variant of parallel KG (qKG) \citep{wu2016parallel}: 
\begin{align}
\label{eq:Acquisition:Lookahead:GenKG}
    \alpha_{\mathrm{KG}}(\bx; \calD) = \bbE \Bigl[\, \max_{x' \in \bbX} \bbE \bigl[ g(f(x')) \, | \, \calD_{\bx} \bigr] \, | \, \calD \Bigr] - \mu_\calD^*,
\end{align}
with $\mu_\calD^* := \max_{x \in \bbX}\bbE[g(f(x)) \!\mid\! \mathcal{D}]$. 
Equation \eqref{eq:Acquisition:Lookahead:GenKG} quantifies the expected increase in the maximum posterior mean of $g \circ f$ after gathering samples at~$\bx$.
For simplicity, we only consider standard BO here, but extensions for multi-fidelity optimization \citep{poloczek2017misokg,wu2019cfkg} are also available in \botorch{}.

Maximizing KG requires solving a nested optimization problem. 
The standard approach is to optimize the inner and outer problems separately, in an iterative fashion. The outer problem is handled using stochastic gradient ascent, with each gradient observation potentially being an average over multiple samples \citep{wu2016parallel,wu2017bayesiangrad}. For each sample $y_\calD^i(\bx) \sim y_\calD(\bx)$, the inner problem $\max_{x_i \in \bbX} \bbE \left[ f(x_i) \mid \calD_\bx^i \right]$ is solved numerically, either via another stochastic gradient ascent \citep{wu2017bayesiangrad} or multi-start L-BFGS-B \citep{frazier2018tutorial}. An unbiased stochastic gradient can then be computed by leveraging the envelope theorem. 
Alternatively, the inner problem can be discretized \citep{wu2016parallel}. 
The computational expense of this nested optimization can be quite large; our main insight is that it may also be unnecessary.

We treat optimizing $\alpha_{\mathrm{KG}}(\bx, \calD)$ in~\eqref{eq:Acquisition:Lookahead:GenKG} as an entirely deterministic problem using SAA.
Using the reparameterization trick, we express $y_\calD(\bx) = h_\calD^y(\bx, \epsilon)$ for some deterministic~$h_\calD$,\footnote{For a GP, $h^y_{\calD}(\bx, \epsilon) = \mu_\calD(\bx) + L^\sigma_\calD(\bx) \epsilon$, with $L^\sigma_\calD(\bx)$ a root decomposition of $\Sigma^\sigma_\calD(\bx) := \Sigma_\calD(\bx,\bx) + \Sigma^v(\bx)$.} 
and draw $N$ fixed base samples~$\{ \epsilon^i\}_{i=1}^{N}$ for the outer expectation. The resulting MC approximation of KG is:
\begin{align}
\label{eq:OneShotKG:SAA:nested}
    \hat{\alpha}_{\mathrm{KG},N}(\bx; \calD) = \frac{1}{N} \sum_{i=1}^{N} \max_{x_i \in \mathbb{X}} \bbE \bigl[ g(f(x_i)) \, | \, \calD_\bx^i \bigr] - \mu^*.
\end{align}
\begin{theorem}
\label{thm:OneShotKG:OneStep:Convergence}
    Suppose conditions (i) and (ii) of Theorem \ref{thm:OptimizeAcq:SAAConvergence} hold, and that (iii) $g( \cdot )$ is affine. If the base samples $\{\epsilon^i\}_{i\geq 1}$ are drawn i.i.d from $\mathcal N(0,1)$, then 
    (1) $\hat{\alpha}_{\mathrm{KG},N}^* \rightarrow \alpha_{\mathrm{KG}}^*$ a.s.,
    (2) $\textnormal{dist}(\hat{\bx}_{\mathrm{KG},N}^*, \mathcal{X}_{\mathrm{KG}}^*) \rightarrow 0$ a.s.,
    and (3) $\forall\, \delta>0$, $\exists\, K <\infty$, $\beta > 0$ s.t.  $\mathbb{P}\bigl(\textnormal{dist}(\hat{\bx}_{\mathrm{KG},N}^*, \mathcal{X}_{\mathrm{KG}}^*) > \delta \bigr) \le K e^{-\beta N}$ for all $N \ge 1$.
\vspace{-0.25ex}
\end{theorem}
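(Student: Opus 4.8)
The plan is to reduce Theorem~\ref{thm:OneShotKG:OneStep:Convergence} to the abstract SAA machinery underlying Theorem~\ref{thm:OptimizeAcq:SAAConvergence} (in its general appendix form) by exhibiting a single \emph{per-sample} function whose empirical average is $\hat{\alpha}_{\mathrm{KG},N}$ and whose expectation is $\alpha_{\mathrm{KG}}$. The key simplification comes from assumption (iii): when $g$ is affine, $g(f)=Af+b$, the inner conditional expectation collapses to $g$ evaluated at the posterior mean,
\begin{align*}
    \bbE\bigl[g(f(x')) \mid \calD_\bx\bigr] = g\bigl(\bbE[f(x')\mid\calD_\bx]\bigr) = g\bigl(\mu_\calD(x') + W(x',\bx)\,\epsilon\bigr),
\end{align*}
where, substituting the reparameterized fantasy $\by_\calD(\bx) = \mu_\calD(\bx) + L^\sigma_\calD(\bx)\epsilon$ into the GP posterior-mean update, $W(x',\bx) := \Sigma_\calD(x',\bx)\,(\Sigma^\sigma_\calD(\bx))^{-1} L^\sigma_\calD(\bx)$ depends only on $(x',\bx)$. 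I would therefore set $v(\bx,\epsilon) := \max_{x'\in\bbX} g\bigl(\mu_\calD(x') + W(x',\bx)\,\epsilon\bigr)$, so that $\hat{\alpha}_{\mathrm{KG},N}(\bx) = \tfrac{1}{N}\sum_{i=1}^N v(\bx,\epsilon^i) - \mu^*$ and $\alpha_{\mathrm{KG}}(\bx) = \bbE_\epsilon[v(\bx,\epsilon)] - \mu^*$, the deterministic constant $\mu^*$ cancelling in their difference. This is exactly the SAA template, with $v$ in the role of the composite utility.

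The technical core is verifying that $v$ inherits the regularity the general SAA result requires. Using (i) and (ii), continuous differentiability of the mean and covariance functions makes $\Sigma_\calD(\cdot,\cdot)$ and $L^\sigma_\calD(\cdot)$ continuous, while compactness together with the positive-definite noise $\Sigma^v$ keeps $\Sigma^\sigma_\calD(\bx)$ uniformly away from singularity, so $(\Sigma^\sigma_\calD(\bx))^{-1}$ is continuous and uniformly bounded; hence $W$ is jointly continuous and Lipschitz in $\bx$ uniformly over $x'$, with some modulus $L_W$. Because $g$ is affine and $\bbX$ is compact, the inner maximum is attained, and writing $v(\bx,\epsilon)=\max_{x'}[A\mu_\calD(x')+b+AW(x',\bx)\epsilon]$ and using $|\max h_1 - \max h_2|\le\max|h_1-h_2|$ shows $\bx\mapsto v(\bx,\epsilon)$ is Lipschitz with constant $C|\epsilon|$ for a deterministic $C$. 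Since $\epsilon\sim\mathcal N(0,1)$ has $\bbE|\epsilon|<\infty$, these moduli are integrable; and for fixed $\bx$, $v(\bx,\cdot)$ is globally Lipschitz in $\epsilon$ with a uniform constant, hence sub-Gaussian by Gaussian concentration.

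With these two properties the proof follows the three parts of Theorem~\ref{thm:OptimizeAcq:SAAConvergence} essentially verbatim. The integrable uniform Lipschitz modulus yields a uniform strong law of large numbers, $\sup_{\bx\in\bbX^q}|\hat{\alpha}_{\mathrm{KG},N}(\bx)-\alpha_{\mathrm{KG}}(\bx)|\to 0$ a.s., from which (1) and (2) follow by the standard argmax-continuity argument on the compact domain $\bbX^q$. For the rate (3), the uniform sub-Gaussianity of $v(\bx,\cdot)$ gives a moment generating function that is finite and uniformly bounded near the origin; covering $\bbX^q$ by finitely many balls of radius dictated by the Lipschitz modulus, applying a Cram\'er-type bound on each, and taking a union bound produces the $Ke^{-\beta N}$ decay. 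I expect the main obstacle to be precisely this uniform-in-$\bx$ regularity of the inner value function $v$: it is here that all three hypotheses are genuinely needed---affineness linearizes the fantasy dependence, differentiability yields continuity of the kernel operations, and compactness simultaneously guarantees attainment of the inner maximum and uniform invertibility of the fantasy covariance---and the interchange of $\max_{x'}$ with the Lipschitz estimate must be handled carefully to keep the modulus integrable.
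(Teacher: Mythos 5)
Your proposal is correct, but it reaches the key Lipschitz estimate by a genuinely different route than the paper. The paper's proof first invokes Lemma~\ref{lem:OneShotKG:OneStep:InnerSampling} to reparameterize the fantasy posterior, then follows the envelope-theorem argument of \citet{wu2017discretizationfree} to establish, coordinate by coordinate, absolute continuity of $\bx \mapsto A_{\mathrm{KG}}(\bx,\epsilon)$ together with a.e.\ bounds $|\partial_{\bx_{kl}} A_{\mathrm{KG}}(\bx,\epsilon)| \le \Lambda_l^T|\epsilon|$ on the partial derivatives, from which the integrable modulus $\ell(\epsilon)=\max_l\{\Lambda_l^T|\epsilon|\}$ is assembled; for the moment conditions it observes that, $g$ being affine, $\bbE[g(f(x'))\mid\calD_\bx]$ is itself a GP, so $A_{\mathrm{KG}}(\bx,\epsilon)$ is the maximum of a GP and the Borell--TIS-based Lemma~\ref{lem:appdx:Proofs:MGFsupffinite} gives finiteness of the relevant moment generating functions; parts (1)--(2) then follow from Theorem~\ref{thm:appdx:GeneralSAAresults:AlmostSure} and part (3) from Proposition~\ref{prop:appdx:GeneralSAAresults:Rate}. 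You instead exploit affineness \emph{before} differentiating: the inner expectation collapses to $g(\mu_\calD(x')+W(x',\bx)\epsilon)$ with an explicit, jointly continuous coefficient matrix $W$, and Lipschitzness of the value function passes through the inner maximization via the elementary inequality $|\max h_1 - \max h_2| \le \max|h_1-h_2|$, with no envelope theorem or absolute-continuity bookkeeping; your sub-Gaussianity of $v(\bx,\cdot)$ via Lipschitz Gaussian concentration plays the role of the paper's Borell--TIS lemma (and is in fact the same Tsirelson--Ibragimov--Sudakov mechanism the paper uses in its Lemma~\ref{prop:appdx:GeneralSAAresults:GPMGF}). What each buys: the paper's envelope-theorem route would survive relaxing $g$ to continuously differentiable (the Lipschitz part, at least), whereas your argument is tied to affine $g$ --- but since the theorem assumes exactly that, yours is the more elementary and self-contained proof of this particular statement. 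Two small points: your uniform SLLN plus argmax-continuity step and your covering-plus-Chern\-off-plus-union-bound step re-derive precisely the content of Theorem~\ref{thm:appdx:GeneralSAAresults:AlmostSure} and Proposition~\ref{prop:appdx:GeneralSAAresults:Rate}, which the paper simply cites from \citet{homemDeMello2008convergence}; and your uniform invertibility of $\Sigma^\sigma_\calD(\bx)$ (and hence Lipschitzness of its inverse and of the root $L^\sigma_\calD$) requires the noise covariance to be uniformly positive definite --- an assumption the paper makes only implicitly in the proof of Lemma~\ref{lem:OneShotKG:OneStep:InnerSampling}, so you are right to flag it, but it should be stated as a hypothesis rather than derived.
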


Theorem~\ref{thm:OneShotKG:OneStep:Convergence} also applies when using RQMC (Appendix~\ref{appdx:subsec:GeneralSAAresults:RQMC}), in which case we again observe improved empirical convergence rates. 
In Appendix~\ref{appdx:subsec:GeneralSAAresults:AsympOptimKG}, we prove that if $f_\textnormal{true}$ is drawn from the same GP prior as $f$ 
and $g(f) \equiv f$, then the MC-approximated KG \emph{policy} (i.e., when~\eqref{eq:OneShotKG:SAA:nested} is maximized in each period to select measurements) is \emph{asymptotically optimal} \citep{frazier2008knowledge,frazier2009knowledge,poloczek2017misokg, bect2019supermartingaleGP}, meaning that as the number of measurements tends to infinity, an optimal point $x^* \in \mathcal{X}_f^* := \argmax_{x\in \bbX} f(x)$ is identified.

Conditional on the fixed base samples, \eqref{eq:OneShotKG:SAA:nested} does not exhibit the nested structure used in the conventional formulation (which requires solving an optimization problem to get a noisy gradient estimate). Moving the maximization outside of the sample average yields the \emph{equivalent} problem\\[-3ex]
\begin{align}
\label{eq:OneShotKG:OneStep}
    \max_{\bx \in \bbX}\hat{\alpha}_{\mathrm{KG}, N}(&\bx, \calD) \equiv \max_{\bx, \, \bx'}
    \frac{1}{N} \sum_{i=1}^{N} \bbE \bigl[ g(f(x_i)) \, | \, \calD_\bx^i \bigr],
\end{align}
where $\bx' := \{x^i\}_{i=1}^N \in \bbX^{\!N}$ represent ``next stage'' solutions, or ``fantasy points.''
If $g$ is affine, the expectation in~\eqref{eq:OneShotKG:OneStep} admits an analytical expression. If not, we use another MC approximation of the form~(\ref{eq:Acquisition:MyopicMC}) with $N_I$ fixed inner based samples~$E_I$.\footnote{Convergence results can be established in the same way, and will require that $\min\{N, N_I\} \rightarrow \infty$.} 
The key difference from the envelope theorem approach \citep{wu2017bayesiangrad} is that we do not solve the inner problem to completion for every fantasy point for every gradient step w.r.t. $\bx$. Instead, we solve~\eqref{eq:OneShotKG:OneStep} jointly over $\bx$ and the fantasy points $\bx'$.
The resulting optimization problem is of higher dimension, namely $(q + N)d$ instead of $qd$, but unlike the envelope theorem formulation it can be solved as a single problem, using methods for deterministic optimization. Consequently, we dub this KG variant the ``One-Shot Knowledge Gradient'' (\OKG{}). The ability to auto-differentiate the involved quantities (including the samples $y_\calD^i(\bx)$ and $\xi_{\calD_\bx}^i(\bx)$ through the posterior updates) w.r.t. $\bx$ and $\bx'$ allows \botorch{} to solve this problem effectively. The main limitation of \OKG{} is the linear growth of the dimension of the optimization problem in $N$, which can be challenging to solve - however, in practical settings, we observe good performance for moderate $N$. 
We provide a simplified implementation of \OKG{} in the following section.


\section{Programmable Bayesian Optimization with \botorch{}}
\label{sec:ModularBO}

SAA provides an efficient and robust approach to optimizing MC acquisition functions through the use of deterministic gradient-based optimization.  In this section, we introduce \botorch{}, a complementary differentiable programming framework for Bayesian optimization research. 
Following the conceptual framework outlined in Figure~\ref{fig:Acquisition:MCAcq}, \botorch{} provides modular abstractions for representing and implementing sophisticated BO procedures. 
Operations are implemented as PyTorch modules that are highly parallelizable on modern hardware and end-to-end differentiable, which allows for efficient optimization of acquisition functions. 
Since the chain of evaluations on the sample level does not make any assumptions about the form of the posterior, \botorch{}'s primitives can be directly used with any model from which re-parameterized posterior samples can be drawn, including probabilistic programs~\citep{tran2017deep,bingham2018pyro}, Bayesian neural networks~\citep{neal2012bayesian,saatci2017bayesian, maddox2019fast,izmailov2019subspace}, and more general types of GPs~\citep{damianou2013deep}.  In this paper, we focus on an efficient and scalable implementation of GPs, GPyTorch~\citep{gardner2018gpytorch}.

\label{sec:Abstractions:ImplementationExamples}

To illustrate the core components of \botorch{}, we demonstrate how both known and novel acquisition functions can readily be implemented. For the purposes of exposition, we show a set of simplified implementations here; details and additional examples are given in Appendices~\ref{sec:appdx:ActiveLearning} and~\ref{appdx:sec:Examples}.

\subsection{Composing {\botorch} Modules for Multi-Objective Optimization}
\label{subsec:Abstractions:ImplementationExamples:ParEGO}

In our first example, we consider $q$ParEGO \citep{daulton2020ehvi}, a variant of ParEGO~\citep{knowles2006parego}, a method for multi-objective optimziation.\\[-2ex]

\begin{figure}[h]
\begin{python}[
caption={Multi-objective optimization via augmented Chebyshev scalarizations.},
label=codex:moo,
lineskip=-2ex,
]
 weights = torch.distributions.Dirichlet(torch.ones(num_objectives)).sample()
 scalarized_objective = GenericMCObjective(
   lambda Y: 0.05 * (weights * Y).sum(dim=-1) + (weights * Y).min(dim=-1).values
 )
 qParEGO = qExpectedImprovement(model=model, objective=scalarized_objective)
 candidates, values = optimize_acqf(qParEGO, bounds=box_bounds, q=1)
\end{python}
\vspace{-5pt}
\end{figure}

Code Example~\ref{codex:moo} implements the inner loop of $q$ParEGO. We begin by instantiating a {\ttm{GenericMCObjective}} module that defines an augmented Chebyshev scalarization. This is an instance of \botorch's abstract {\ttm{MCObjective}}, which applies a transformation $g( \cdot )$ to samples $\xi$ from a posterior in its {{\ttm forward($\xi$)}} pass.  
In line 5, we instantiate an {{\ttm MCAcquisitionFunction}} module, in this case,  {{\ttm qExpectedImprovement}}, parallel EI. Acquisition functions combine a model and the objective into a single module that assigns a utility $\alpha(\bx)$ to a candidate set~$\bx$ in its {{\ttm forward}} pass.  Models can be any PyTorch module implementing a probabilistic model conforming to \botorch{}'s basic {{\ttm Model}} API. 
Finally, candidate points are selected by optimizing the acquisition function, through the use of the {{\ttm optimize\_acqf()}} utility function, which finds the candidates $\bx^* \in \argmax_\bx \alpha(\bx)$. Auto-differentiation makes it straightforward to use gradient-based optimization even for complex acquisition functions and objectives. Our SAA approach permits the use of deterministic higher-order optimization to efficiently and reliably find $\bx^*$. 

In \citep{astudillo2019composite} it is shown how performing operations on independently modeled objectives yields better optimization performance when compared to modeling combined outcomes directly (e.g., for the case of calibrating the outputs of a simulator). {\ttm{MCObjective}} is a powerful abstraction that makes this straightforward. It can also be used to implement  unknown (i.e. modeled) outcome constraints:  {\botorch} implements a {\ttm{ConstrainedMCObjective}} to compute a feasibility-weighted objective using a sample-level differentiable relaxation of the feasibility~\citep{schonlau1998globloc,gardner2014constrained,gelbart2014unknowncon,letham2019noisyei}.

\subsection{Implementing Parallel, Asynchronous Noisy Expected Improvement}
\label{subsec:Abstractions:ImplementationExamples:NEI}

Noisy EI (NEI) \citep{letham2019noisyei} is an extension of EI that is well-suited to highly noisy settings, such as A/B tests. 
Here, we describe a novel \emph{full MC} formulation of NEI that extends the original one from~\citep{letham2019noisyei} to joint parallel optimization and generic objectives. 
Letting $(\xi, \xi_\mathrm{obs}) \sim f_\calD( (\bx, \bx_\mathrm{obs}))$, our implementation avoids the need to characterize the (uncertain) best observed function value 
explicitly by averaging improvements on samples from the joint posterior over new and previously evaluated points:\\[-4ex]

\begin{align}
\label{eq:Abstractions:ImplementationExamples:NEI:fullMC}
    \text{qNEI}(\bx; \calD) = \bbE\bigl[ \bigl( \max g(\xi) - \max g(\xi_\mathrm{obs}) \bigr)_+\! \mid \calD \bigr].
\end{align}

Code Example~\ref{codex:Abstractions:ImplementationExamples:NEI:qNEI} provides an implementation of qNEI  as formulated in~\eqref{eq:Abstractions:ImplementationExamples:NEI:fullMC}. New MC acquisition functions are defined by extending an {{\ttm MCAcquisitionFunction}} base class and defining a {{\ttm forward}} pass that compute the utility of a candidate $\bx$. In the constructor (not shown), the programmer sets {\ttm{X\_baseline}} to an appropriate subset of the points at which the function was observed.

\begin{figure}[h!]
\begin{python}[
caption=Parallel Noisy EI,
label=codex:Abstractions:ImplementationExamples:NEI:qNEI,
lineskip=-2ex,
]
class qNoisyExpectedImprovement(MCAcquisitionFunction):
  @concatenate_pending_points
  def forward(self, X: Tensor) -> Tensor:
    q = X.shape[-2]
    X_full = torch.cat([X, match_shape(self.X_baseline, X)], dim=-2)
    posterior = self.model.posterior(X_full)
    samples = self.sampler(posterior)
    obj = self.objective(samples)
    obj_new = obj[...,:q].max(dim=-1).value
    obj_prev = obj[...,q:].max(dim=-1).value
    improvement = (obj_new - obj_prev).clamp_min(0)
    return improvement.mean(dim=0).value
\end{python}
\end{figure}

Like all MC acquisition functions, qNEI can be extended to support \emph{asynchronous} candidate generation, in which a set $\tilde{\bx}$ of \emph{pending points} have been submitted for evaluation, but have not yet completed.
This is done by concatenating pending points into~$\bx$ with the {\ttm{@concatenate\_pending\_points}} decorator.
This allows us to compute the joint 
utility $\alpha(\bx \cup \tilde{\bx}; \Phi, \calD)$ of all points, pending and new, but optimize only with respect to the new~$\bx$.
This strategy also provides a natural way of generating parallel BO candidates using \emph{sequential greedy} optimization~\citep{snoek2012practical}: We generate a single candidate, add it to the set of pending points, and proceed to the next. Due to submodularity of many common classes of acquisition functions (e.g., EI, UCB) ~\citep{wilson2018maxbo}, this approach can often yield better optimization performance compared to optimizing all candidate locations simultaneously  (see  Appendix~\ref{appdx:subsec:ImplementatonDetails:SeqGreedy}).  

With the observed, pending, and candidate points ({\ttm{X\_full}}) in hand, we use the 
 {\ttm{Model}}'s {\ttm{posterior()}} method to generate an object that represents the joint posterior across all points. The {\ttm{Posterior}} returned by {\ttm{posterior($\bx$)}} represents $f_\calD(\bx)$ (or $y_\calD(\bx)$, if the {\ttm{observation\_noise}} keyword argument is set to {{\ttm True}}), 
and may be be explicit (e.g. a multivariate normal 
in the case of GPs), or implicit (e.g. a container for a warmed-up MCMC chain).  
Next, samples are drawn from the posterior distribution {\ttm{p}} via a {\ttm{MCSampler}}, which employs the reparameterization trick \citep{kingma2013reparam,rezende2014stochbackprop}. 
Given base samples $E \in \bbR^{N_{\!s} \times qm}$, a {\ttm{Posterior}} object produces $N_{\!s}$ samples $\xi_\calD \in \bbR^{N_{\!s}\times q \times m}$ from the joint posterior.  
Its {\ttm{forward(p)}} pass draws samples $\xi^i_\calD$ from {\ttm{p}} by automatically constructing base samples $E$. 
By default, \botorch{} uses RQMC via scrambled Sobol sequences~\citep{owen2003quasi}. Finally, these samples are mapped through the objective, and the expected improvement between the candidate point x and observed/pending points is computed by marginalizing the improvements on the sample level.

\subsection{Look-ahead Bayesian Optimization with One-Shot KG}

Code Example~\ref{codex:Abstractions:ImplementationExamples:OKG} 
shows a simplified \OKG{} implementation, as discussed in Section \ref{sec:OneShotKG}.

\begin{figure}[h]
\begin{python}[
caption=Implementation of One-Shot KG,
label=codex:Abstractions:ImplementationExamples:OKG,
]
class qKnowledgeGradient(OneShotAcquisitionFunction):
  def forward(self, X: Tensor) -> Tensor:
    X, X_f = torch.split(X, [X.size(-2) - self.N, self.N], dim=-2)
    fant_model = self.model.fantasize(X=X, sampler=self.sampler, observation_noise=True)
    inner_acqf = SimpleRegret(
      fant_model, sampler=self.inner_sampler, objective=self.objective,
    )
    with settings.propagate_grads(True):
      return inner_acqf(X_f).mean(dim=0).value
\end{python}
\vspace{-2.5ex}
\end{figure}

Here, the input {\ttm{X}} to {\ttm{forward}} is a concatenation of $\bx$ and $N$ \emph{fantasy points} $\bx'$ (this setup ensures that \OKG{} can be optimized using the same APIs as all other acquisition functions). 
After {\ttm{X}} is split into its components, we utilize the {\ttm{Model}}'s {\ttm{fantasize($\bx$, sampler)}} method that, given~$\bx$ and a {\ttm{MCSampler}}, constructs a batched set of $N$ \emph{fantasy models} $\{f^i\}_{i=1}^N$ such that $f^i_\calD(\bx) \overset{d}{=} f_{\calD_\bx^i}(\bx), \forall\, \bx \in \bbX^q$, where $\calD_\bx^i := \calD \cup \{\bx, y_\calD^i(\bx)\}$ is the original dataset augmented by a fantasy observation at $\bx$. The fantasy models provide a distribution over functions conditioned on future observations at $\bx$, which is used here to implement one-step look-ahead.
{\ttm{SimpleRegret}} computes $\bbE \bigl[ g(f(x_i)) \, | \, \calD_\bx^i \bigr]$ from~\eqref{eq:OneShotKG:OneStep} for each~$i$ in batch mode. The {\ttm{propagate\_grads}} context enables auto-differentiation through both the \emph{generation} of the fantasy models and the \emph{evaluation} of their respective posteriors at the points $\bx'$.

\section{Experiments}
\label{sec:Experiments}


\subsection{Exploiting Parallelism and Hardware Acceleration}
\label{subsec:Abstractions:ParallelHardware}

\botorch{} utilizes inference and optimization methods designed to exploit parallelization via batched computation, and integrates closely with GPyTorch~\citep{gardner2018gpytorch}. These model have fast test-time (predictive) distributions and sampling. This is crucial for BO, where the same models are evaluated many times in order to optimize the acquisition function. GPyTorch makes use of structure-exploiting algebra and local interpolation for $\mathcal O(1)$ computations in querying the predictive distribution, and $\mathcal O(T)$ for drawing a posterior sample at $T$ points, compared to the standard $\mathcal O(n^2)$ and $\mathcal O(T^3n^3)$ computations~\citep{pleiss2018love}.

 Figure~\ref{appdx:fig:ParallelHardware:BatchEval:Timings} reports wall times for \emph{batch evaluation} of {\ttm{qExpectedImprovement}} at multiple candidate sets $\{\bx^i\}_{i=1}^b$ for different MC samples sizes $N$, on both CPU and GPU for a GPyTorch GP. We observe significant speedups from running on the GPU, with scaling essentially linear in the batch size~$b$, except for very large $b$ and $N$. 
Figure~\ref{appdx:fig:ParallelHardware:FPV:FPV} shows between 10--40X speedups when using fast predictive covariance estimates over standard posterior inference in the same setting.
 The speedups grow slower on the GPU, whose cores do not saturate as quickly as on the CPU when doing standard posterior inference
(for additional details see Appendix~\ref{appdx:sec:ParallelHardware}).
Together, batch evaluation and fast predictive distributions enable efficient, parallelized acquisition function evaluation for a very large number (tens of thousands) of points.
This scalability allows us to implement and exploit novel highly parallelized initialization and optimization techniques.

\begin{figure*}[ht]
    \begin{minipage}[c]{0.5\columnwidth}
    \centering
    \includegraphics[width=\columnwidth]{./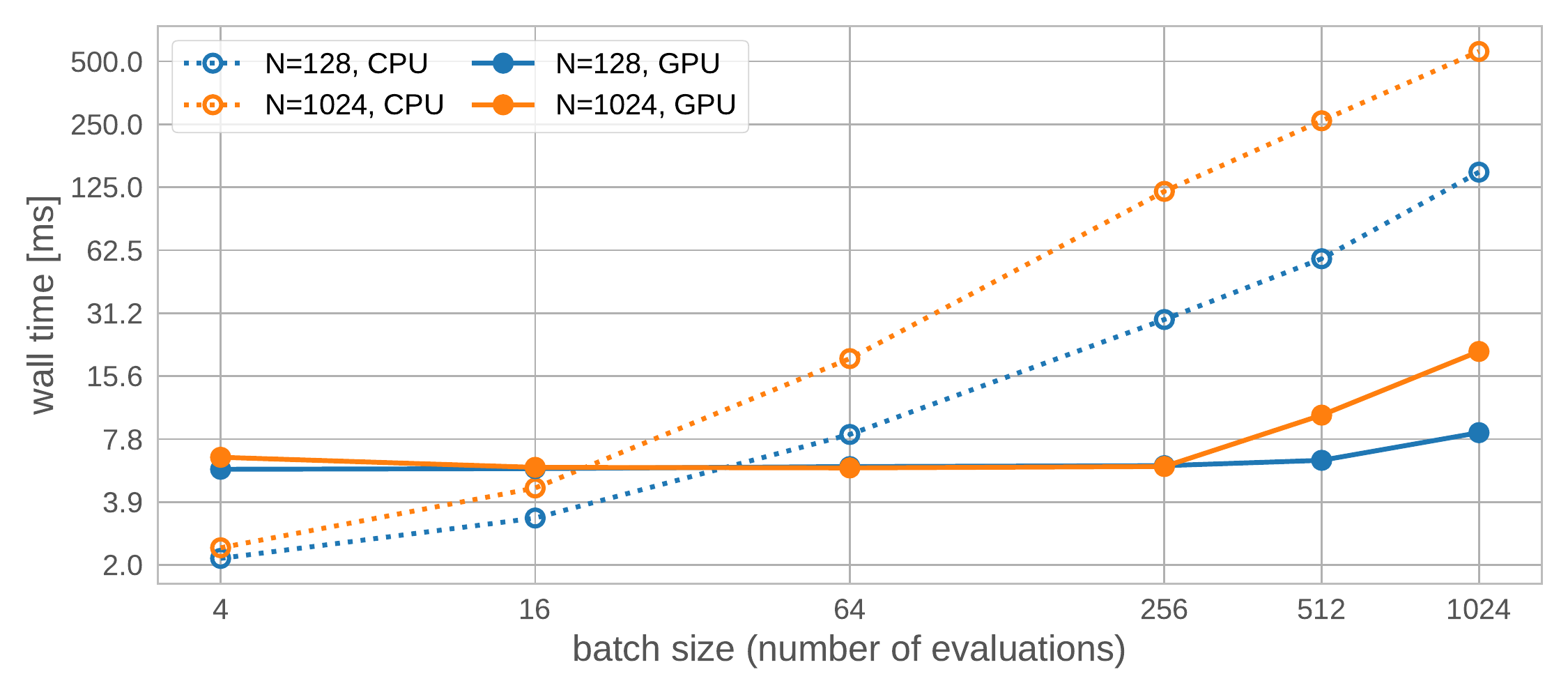}\\[-2ex]
    \caption{Wall times for batched evaluation of qEI}
    \label{appdx:fig:ParallelHardware:BatchEval:Timings}
    \end{minipage}
    \hfill
    \begin{minipage}[c]{0.5\columnwidth}
    \centering
    \includegraphics[width=\columnwidth]{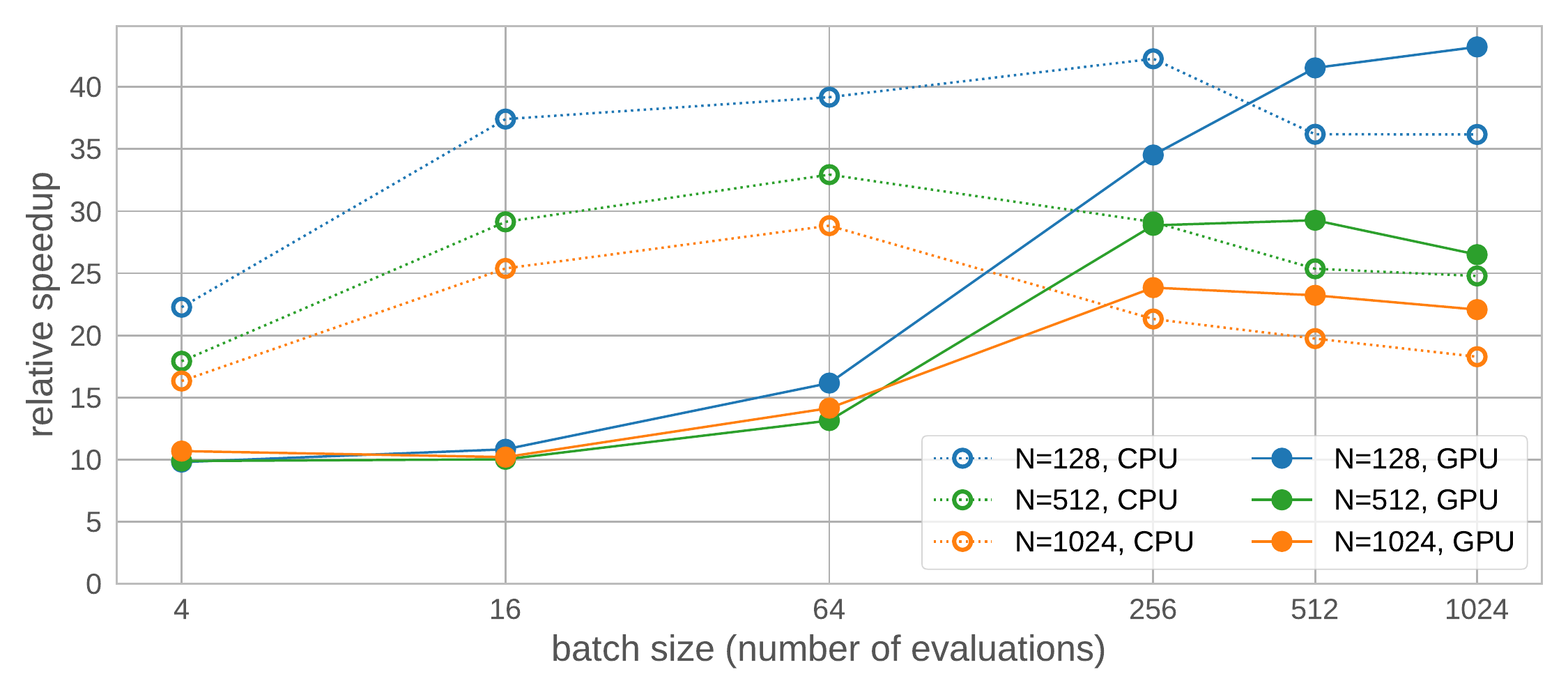}\\[-2ex]
    \caption{Fast predictive distributions speedups}
    \label{appdx:fig:ParallelHardware:FPV:FPV}
\end{minipage}
\vspace{-3ex}
\end{figure*}

\subsection{Bayesian Optimization Performance Comparisons}

We compare (i) the empirical performance of standard algorithms implemented in \botorch{} with those from other popular BO libraries, and (ii) our novel acquisition function, \OKG{}, against other acquisition functions, both within \botorch{} and in other packages.
We isolate three key frameworks---GPyOpt, Cornell MOE (\emph{MOE EI},  \emph{MOE KG}), and Dragonfly---because they are the most popular libraries with ongoing support\footnote{We were unable to install GPFlowOpt due to its incompatibility with current versions of GPFlow/TensorFlow.} and are most closely related to \botorch{} in terms of state-of-the-art acquisition functions. 
GPyOpt uses an extension of EI with a local penalization heuristic (henceforth \emph{GPyOpt LP-EI}) for parallel optimization \citep{gonzalez16b}.
For Dragonfly, we consider its default ensemble heuristic (henceforth \emph{Dragonfly GP Bandit}) \citep{kandasamy2019dragonfly}.

\begin{figure*}[ht]
    \begin{minipage}[c]{0.5\columnwidth}
    \centering
    \includegraphics[width=\columnwidth]{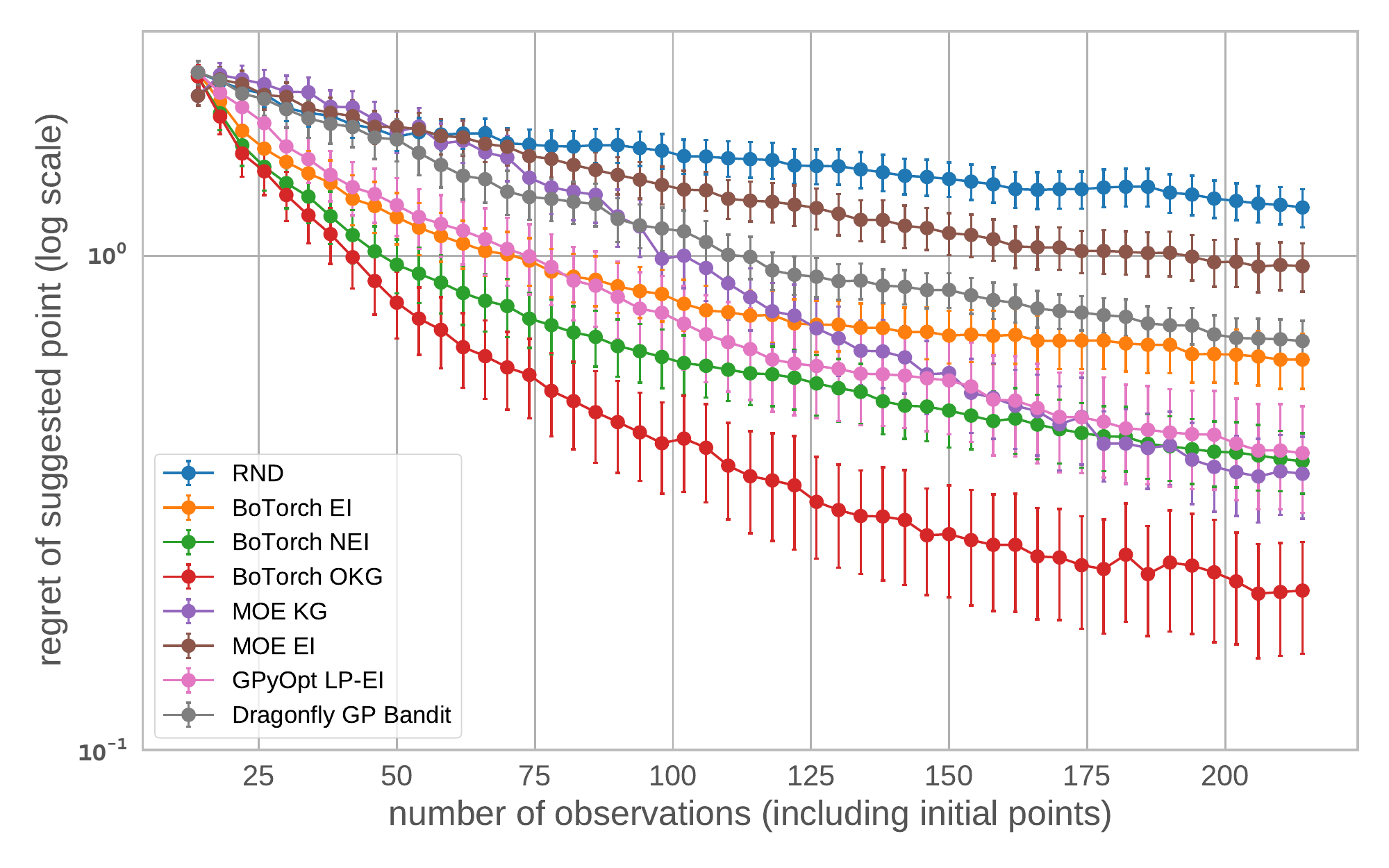}\\[-2ex]
    \caption{Hartmann ($d=6$), noisy, best suggested 
    }
    \label{fig:Experiments:Synthetic:Unconstrained:Hartmann}
    \end{minipage}
    \hfill
    \begin{minipage}[c]{0.5\columnwidth}
    \centering
    \includegraphics[width=\columnwidth]{./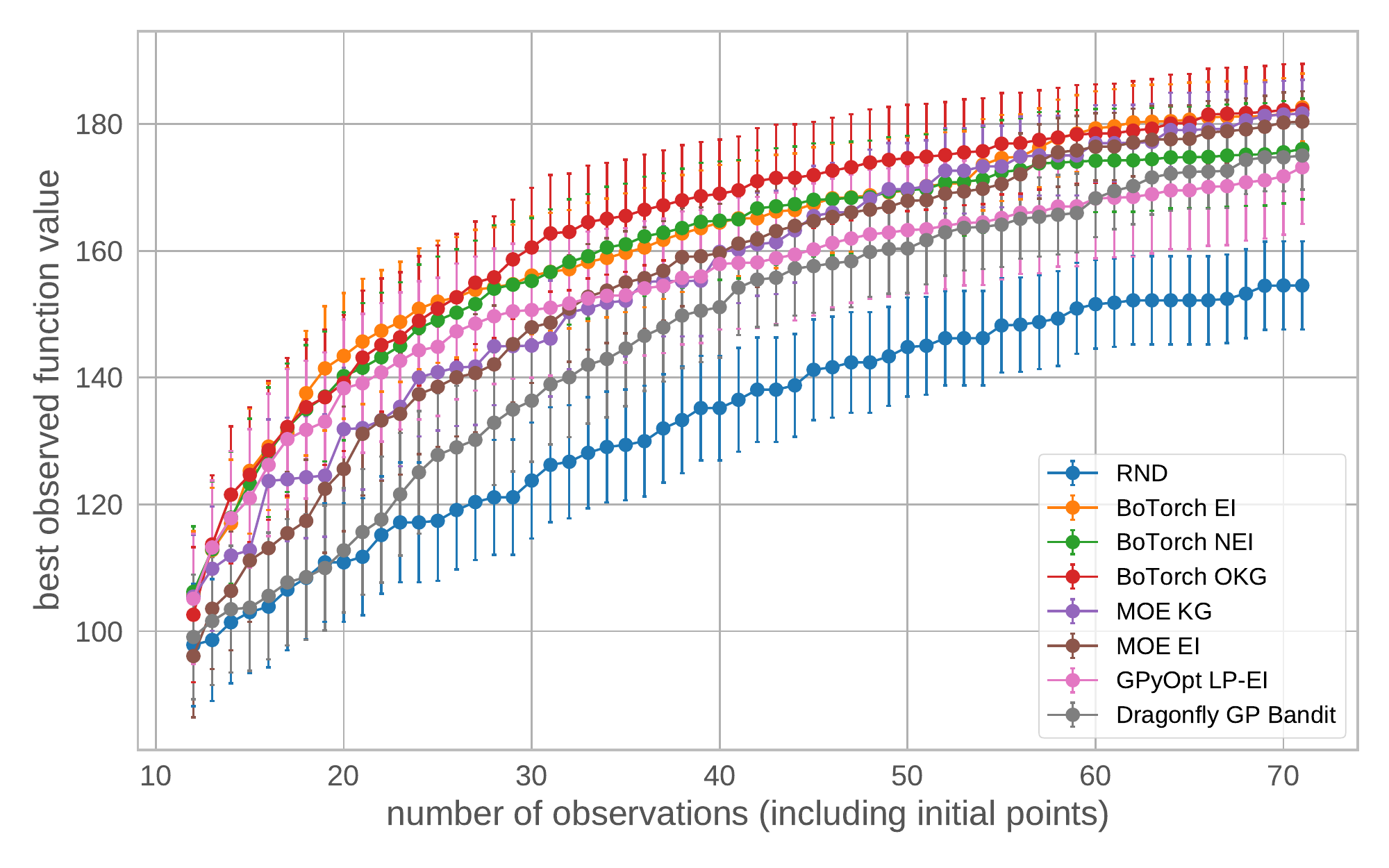}\\[-2ex]
    \caption{ DQN tuning benchmark (Cartpole)}
    \label{fig:Experiments:Hyper:Cartpole}
\end{minipage}
\vspace{-1ex}
\end{figure*}

Our results provide three main takeaways. First, we find that \botorch{}'s algorithms tend to
achieve greater sample efficiency compared to those of other packages (all packages use their default models and settings).  Second, we find that \OKG{} often outperforms all other acquisition functions. Finally, \OKG{} is more computationally scalable than MOE KG (the gold-standard implementation of KG), showing significant reductions in wall time (up to 6X, see Appendix~\ref{appdx:subsec:AddEmpirical:OKGWallTime}) while simultaneously achieving improved optimization performance (Figure \ref{fig:Experiments:Synthetic:Unconstrained:Hartmann}).

\textbf{Synthetic Test Functions:} 
We consider BO for parallel optimization of $q=4$ design points, on four noisy synthetic functions used in \citet{wang2016parallel}: Branin, Rosenbrock, Ackley, and Hartmann. 
Figure~\ref{fig:Experiments:Synthetic:Unconstrained:Hartmann} 
reports means and 95\% confidence intervals over 100 trials for Hartmann; 
results for the other functions are qualitatively similar and are provided in Appendix~\ref{appdx:subsec:AddEmpirical:Synthetic}, together with details on the evaluation.
Results for constrained BO using a differentiable relaxation of the feasibility indicator on the sample level are provided in Appendix~\ref{appdx:subsec:AddEmpirical:Constrained}.

\textbf{Hyperparameter Optimization:}
We illustrate the performance of \botorch{} on real-world applications, represented by three hyperparameter optimization (HPO) experiments:
\textbf{(1)} Tuning 5 parameters of a deep Q-network (DQN) learning algorithm \citep{mnih2013playing,mnih2015human} on the \emph{Cartpole} task from OpenAI gym \citep{brockman2016openai} and the default DQN agent implemented in Horizon \citep{gauci2018horizon}, Figure~\ref{fig:Experiments:Hyper:Cartpole};
\textbf{(2)} Tuning 6 parameters of a neural network surrogate model for the UCI Adult data set \citep{kohavi2019UCI} introduced by \citet{falkner2018robusteff}, available as part of HPOlib2 \citep{eggensperger2019hpolib2},  Figure~\ref{fig:Experiments:SurrogateParamNet} in Appendix~\ref{appdx:subsec:AddEmpirical:hyper};
\textbf{(3)} Tuning 3 parameters of the recently proposed \emph{Stochastic Weight Averaging} (SWA) procedure of \citet{izmailov2018averaging} on the VGG-16 \citep{simonyan2014very} architecture for CIFAR-10, which achieves superior accuracy compared to previously reported results. 
A more detailed description of these experiments is given in Appendix \ref{appdx:subsec:AddEmpirical:hyper}.

\vspace{-5pt}
\section{Discussion and Outlook}
\label{sec:Discussion}

We presented a novel strategy for effectively optimizing MC acquisition functions using SAA, and established strong theoretical convergence guarantees (in fact, our RQMC convergence results are novel more generally, and of independent interest). Our proposed \OKG{} method, an extension of this approach to ``one-shot'' optimization of look-ahead acquisition functions, constitutes a significant development of KG, improving scalability and allowing for generic composite objectives and outcome constraints. This approach can naturally be extended to multi-step and other look-ahead approaches~\citep{jiang2020multistep}.

We make these methodological and theoretical contributions available in our open-source library \botorch{} (\url{https://botorch.org}), a modern programming framework for BO that features a modular design and flexible API, our distinct SAA approach, and algorithms specifically designed to exploit modern computing paradigms such as parallelization and auto-differentiation. \botorch{} is particularly valuable in helping researchers to rapidly assemble novel BO techniques. Specifically, the basic MC acquisition function abstraction provides generic support for batch optimization, asynchronous evaluation, RQMC integration, and composite objectives (including outcome constraints).

Our empirical results show that besides increased flexibility, our advancements in both methodology and computational efficiency translate into significantly faster and more accurate closed-loop optimization performance on a range of standard problems. While other settings such as high-dimensional~ \citep{kandasamy15, wang2016rembo, letham2020alebo}, multi-fidelity~\citep{poloczek2017misokg,wu2019cfkg}, or multi-objective~\citep{knowles2006parego, paria2018randscalar, daulton2020ehvi} BO, and non-MC acquisition functions such as Max-Value Entropy Search~\citep{wang2017mves}, are outside the scope of this paper, these approaches can readily be realized in \botorch{} and are included in the open-source software package.
One can also naturally generalize BO procedures to incorporate neural architectures in \botorch{} using standard PyTorch models. In particular, deep kernel architectures \citep{wilson2016deep}, deep Gaussian processes \citep{damianou2013deep, salimbeni2017dsvi}, and variational auto-encoders~\citep{gomez-bombarelli2018chemical,moriconi2019manifold} can easily be incorporated into \botorch{}'s primitives, and can be used for more expressive kernels in high-dimensions.

In summary, \botorch{} provides the research community with a robust and extensible basis for implementing new ideas and algorithms in a modern computational paradigm, theoretically backed by our novel SAA convergence results.

\clearpage
\subsection*{Broader Impact}

Bayesian optimization is a generic methodology for optimizing black-box functions, and therefore, by its very nature, not tied to any particular application domain. As mentioned earlier in the paper, Bayesian optimization has been used for various arguably good causes, including drug discovery or reducing the energy footprint of ML applications by reducing the computational cost of tuning hyperparameters. In the Appendix, we give an specific example for how our work can be applied in a public health context, namely to efficiently distribute survey locations for estimating malaria prevalence. \botorch{} as a tool specifically has been used in various applications, including transfer learning for neural networks~\citep{maddox2019transfer}, high-dimensional Bayesian optimization~\citep{letham2020alebo}, drug discovery~\citep{boitreaud2020optimol}, sim-to-real transfer~\citep{muratore2020sim2real}, trajectory optimization~\citep{jain2020racingline}, and nano-material design~\citep{nguyen2019nano}.
However, there is nothing inherent to this work and Bayesian optimization as a field more broadly that would preclude it from being abused in some way, as is the case with any general methodology.

\begin{ack}
We wish to thank Art Owen for insightful conversations on quasi-Monte-Carlo methods. We also express our appreciation to Peter Frazier and Javier Gonzalez for their helpful feedback on earlier versions of this paper.

Andrew Gordon Wilson is supported by NSF I-DISRE 193471, NIH R01 DA048764-01A1, NSF IIS-1910266, and NSF 1922658 NRT-HDR: FUTURE Foundations, Translation, and Responsibility for Data Science.
\end{ack}

{\small
\bibliographystyle{plainnat}
\bibliography{botorch}
}

\clearpage

\appendix

\begin{center}
\hrule height 4pt
\vskip 0.25in
\vskip -\parskip
    {\LARGE\bf  Appendix to:\\[2ex] \botorch: A Framework for Efficient Monte-Carlo\\[1ex] Bayesian Optimization}
\vskip 0.29in
\vskip -\parskip
\hrule height 1pt
\vskip 0.2in%
\end{center}

\section{Brief Overview of Other Software Packages for BO}
\label{appdx:sec:PackageComparison}

One of the earliest commonly-used packages is \textbf{Spearmint} \citep{snoek2012practical}, which implements a variety of modeling techniques such as MCMC hyperparameter sampling and input warping \citep{snoek14}. Spearmint also supports parallel optimization via fantasies, and constrained optimization with the expected improvement and predictive entropy search acquisition functions \citep{gelbart2014unknowncon,hernandez15pesconstraints}. Spearmint was among the first libraries to make BO easily accessible to the end user.

\textbf{GPyOpt} \citep{gpyopt2016} builds on the popular GP regression framework GPy \citep{gpy2014}. It supports a similar set of features as Spearmint, along with a local penalization-based approach for parallel optimization \citep{gonzalez16b}. It also provides the ability to customize different components through an alternative, more modular API.

\textbf{Cornell-MOE}~\citep{wu2016parallel} implements the Knowledge Gradient (KG) acquisition function, which allows for parallel optimization, and includes recent advances such as large-scale models incorporating gradient evaluations \citep{wu2017bayesiangrad} and multi-fidelity optimization \citep{wu2019cfkg}. Its core is implemented in C++, which provides performance benefits but renders it hard to modify and extend.

\textbf{RoBO} \citep{klein17robo} implements a collection of models and acquisition functions, including Bayesian neural nets \citep{springenberg-nips2016} and multi-fidelity optimization \citep{klein-corr16}.

\textbf{Emukit} \citep{emukit2018} is a Bayesian optimization and active learning toolkit with a collection of acquisition functions, including for parallel and multi-fidelity optimization. It does not provide specific abstractions for implementing new algorithms, but rather specifies a model API that allows it to be used with the other toolkit components. 

The recent \textbf{Dragonfly} \citep{kandasamy2019dragonfly} library supports parallel optimization, multi-fidelity optimization \citep{kandasamy16}, and high-dimensional optimization with additive kernels \citep{kandasamy15}. It takes an ensemble approach and aims to work out-of-the-box across a wide range of problems, a design choice that makes it relatively hard to extend.


\section{Parallelism and Hardware Acceleration}
\label{appdx:sec:ParallelHardware}

\subsection{Batch Evaluation}
\label{appdx:subsec:ParallelHardware:BatchEval}

Batch evaluation, an important element of modern computing, 
enables automatic dispatch of independent operations across multiple computational resources (e.g. CPU and GPU cores) for parallelization and memory sharing.
All \botorch{} components support batch evaluation,
which makes it easy to write concise and highly efficient code in a platform-agnostic fashion.
Batch evaluation enables fast queries of acquisition functions at a large number of candidate sets in parallel,
facilitating novel initialization heuristics and optimization techniques.

Specifically, instead of sequentially evaluating an acquisition function at a number of candidate sets $\bx_1,\dotsc, \bx_b$,
where $\bx_k \in \bbR^{q \times d}$ for each~$k$, \botorch{} evaluates a batched tensor
$\bx \in \bbR^{b \times q \times d}$. Computation is automatically distributed so that, depending on the hardware used, speedups can be close to linear in the batch size~$b$.
Batch evaluation is also heavily used in computing MC acquisition functions, with the effect that significantly increasing the number of MC samples 
often has little impact on wall time. 
In Figure~\ref{appdx:fig:ParallelHardware:BatchEval:Timings} we observe significant speedups from running on the GPU, with scaling essentially linear in the batch size, except for very large $b$ and $N$. The fixed cost due to communication overhead renders CPU evaluation faster for small batch and sample sizes.

\section{Additional Empirical Results}
\label{appdx:sec:AddEmpirical}

This section describes a number of empirical results that were omitted from the main paper due to space constraints.

\subsection{Synthetic Functions}
\label{appdx:subsec:AddEmpirical:Synthetic}

Algorithms start from the same set of $2d+2$ QMC sampled initial points for each trial, with $d$ the dimension of the design space. We evaluate based on the true noiseless function value at the ``suggested point'' (i.e., the point to be chosen \emph{if BO were to end at this batch}). 
\OKG{}, MOE KG, and NEI use ``out-of-sample'' suggestions (introduced as $\chi_n$ in Section~\ref{appdx:subsec:GeneralSAAresults:AsympOptimKG}),  
while the others use ``in-sample'' suggestions \citep{frazier2018tutorial}.

All functions are evaluated with noise generated from a $\mathcal N(0,.25)$ distribution. Figures~\ref{fig:appdx:AddEmpirical:Synthetic:Unconstrained:Branin}-\ref{fig:appdx:AddEmpirical:Synthetic:Unconstrained:Ackley} give the results for all synthetic functions from Section~\ref{sec:Experiments}. The results show that \botorch{}'s NEI and \OKG{} acquisition functions provide highly competitive performance in all cases.

\begin{figure*}
    \begin{minipage}[l]{0.5\columnwidth}
    \centering
    \includegraphics[width=\columnwidth]{./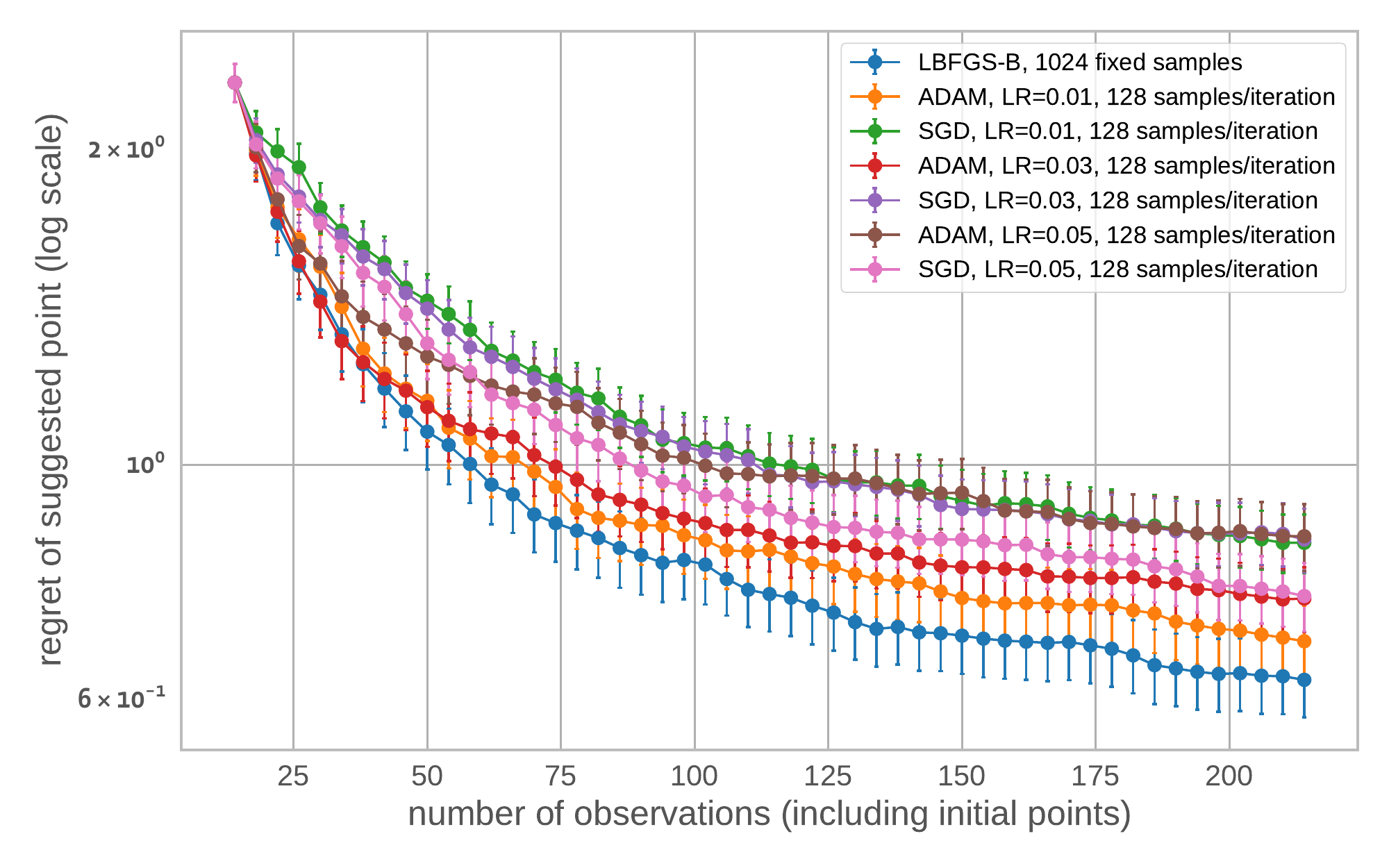}
    \caption{Stochastic/deterministic opt. of EI on Hartmann6}
    \label{fig:appdx:NonStochOpt:StochasticLRDep}
    \end{minipage}
    \hfill
    \begin{minipage}[l]{0.5\columnwidth}
    \centering
    \includegraphics[width=\columnwidth]{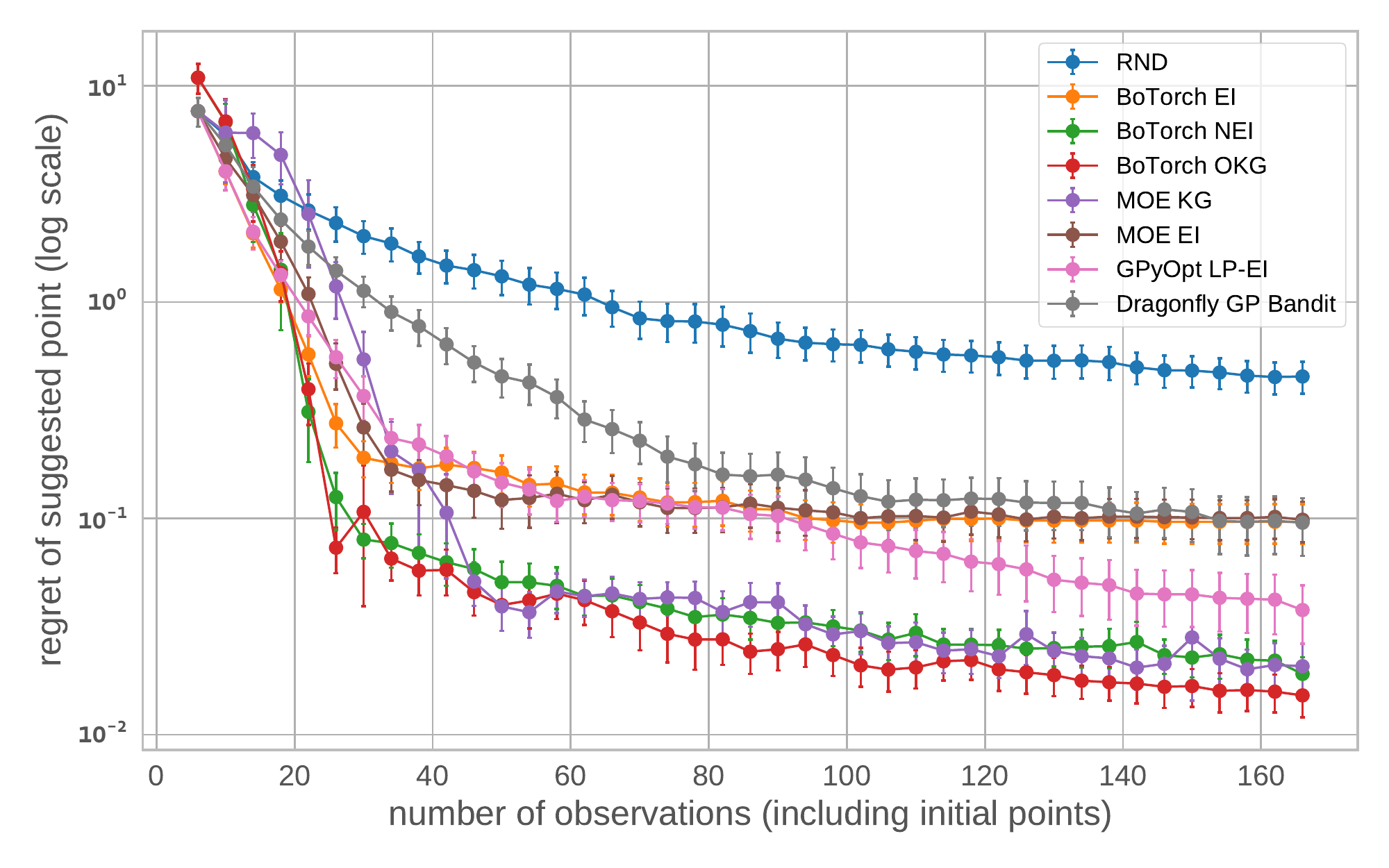}
    \caption{Branin ($d=2$)}
    \label{fig:appdx:AddEmpirical:Synthetic:Unconstrained:Branin}
\end{minipage}
\end{figure*}

\begin{figure*}
    \begin{minipage}[l]{0.5\columnwidth}
    \centering
    \includegraphics[width=\columnwidth]{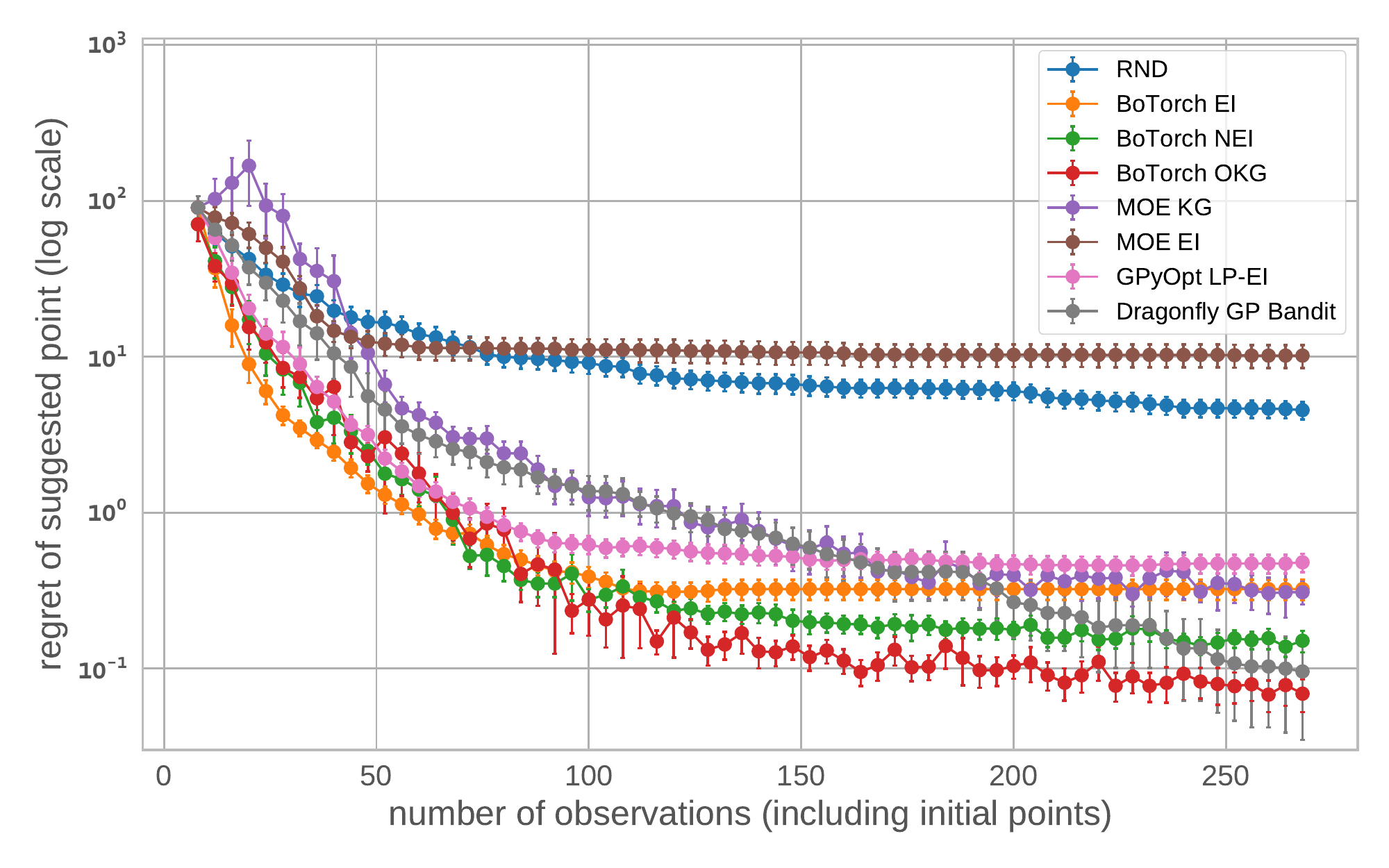}\\[-3ex]
    \caption{Rosenbrock ($d=3$)}
    \label{fig:appdx:AddEmpirical:Synthetic:Unconstrained:Rosenbrock}
    \end{minipage}
    \hfill
    \begin{minipage}[l]{0.5\columnwidth}
    \centering
    \includegraphics[width=\columnwidth]{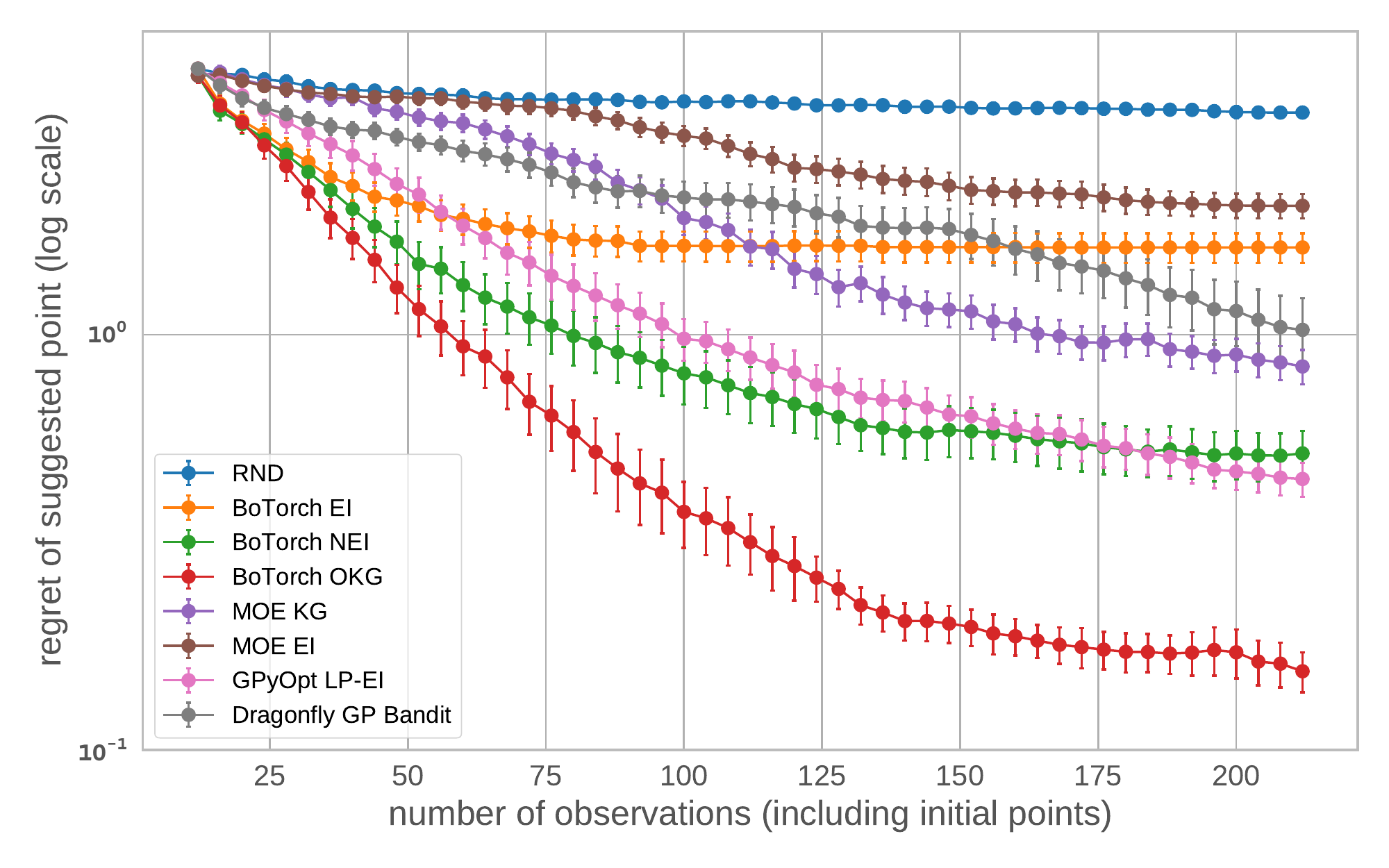}\\[-3ex]
    \caption{Ackley ($d=5$)}
    \label{fig:appdx:AddEmpirical:Synthetic:Unconstrained:Ackley}
\end{minipage}
\end{figure*}

\subsection{One-Shot KG Computational Scaling}
\label{appdx:subsec:AddEmpirical:OKGWallTime}

Figure~\ref{fig:appdx:AddEmpirical:OKGWallTime:WallTimes} shows the wall time for generating a set of $q=8$ candidates as a function of the number of total data points~$n$ for both standard (Cholesky-based) as well as scalable (Linear CG) posterior inference methods, on both CPU and GPU. While the GPU variants have a significant overhead for small models, they are significantly faster for larger models. Notably, our SAA based \OKG{} is significantly faster than \emph{MOE KG}, while at the same time achieving much better optimization performance (Figure~\ref{fig:appdx:AddEmpirical:OKGWallTime:Hartmann}). 

\begin{figure*}
    \begin{minipage}[l]{0.5\columnwidth}
    \centering
    \includegraphics[width=\columnwidth]{./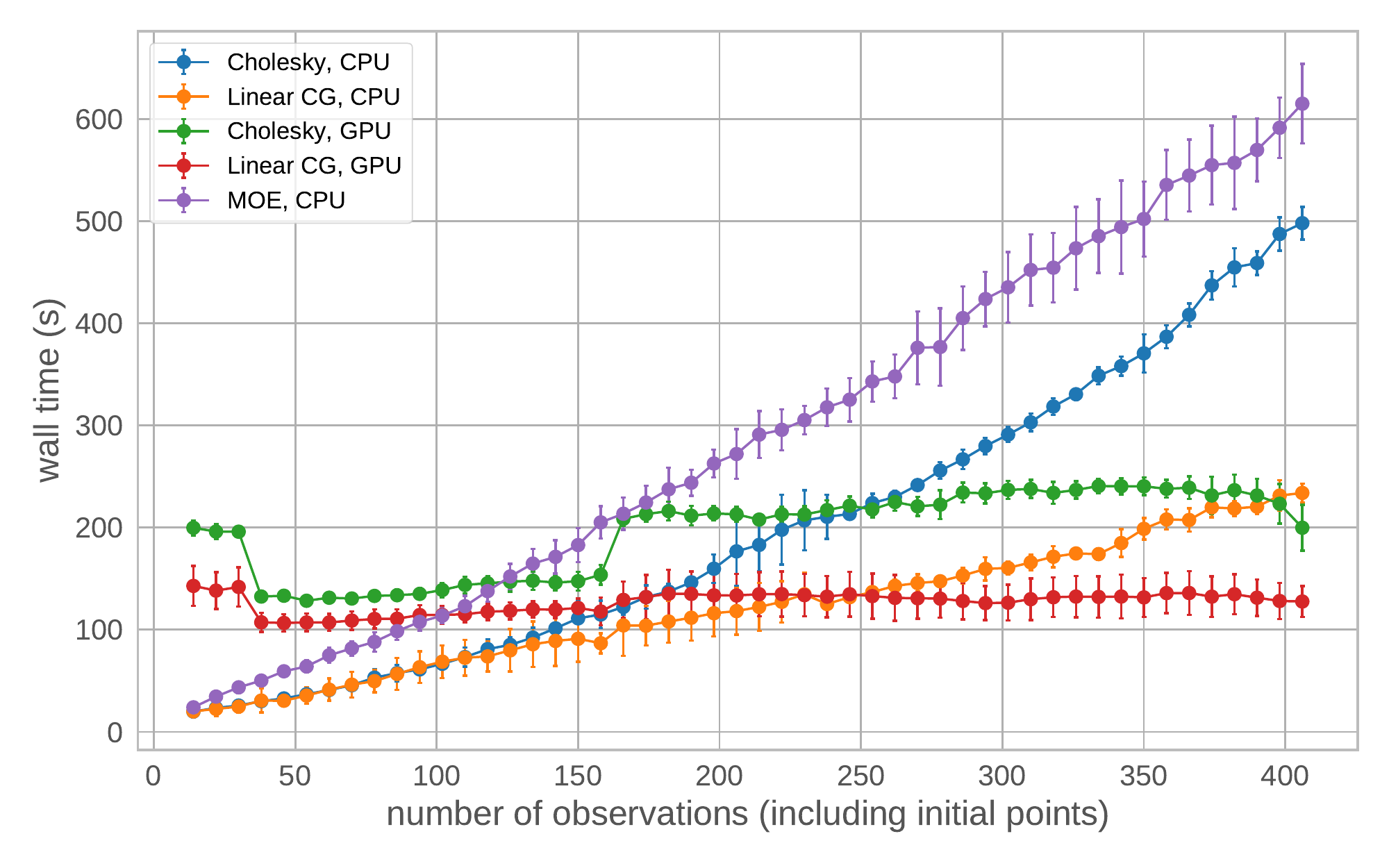}\\[-3ex]
    \caption{KG wall times}
    \label{fig:appdx:AddEmpirical:OKGWallTime:WallTimes}
    \end{minipage}
    \hfill
    \begin{minipage}[l]{0.5\columnwidth}
    \centering
        \includegraphics[width=\columnwidth]{plots/synthetic/Hartmann_unconstrained.pdf}\\[-3ex]
    \caption{Hartmann ($d=6$), noisy, best suggested}
    \label{fig:appdx:AddEmpirical:OKGWallTime:Hartmann}
    \end{minipage}
\end{figure*}

\subsection{Constrained Bayesian Optimization}
\label{appdx:subsec:AddEmpirical:Constrained}

We present results for constrained BO on a synthetic function. We consider a multi-output function $f = (f_1, f_2)$ and the optimization problem:
\begin{align}
\label{eq:appdx:subsec:AddEmpirical:Constrained:Problem}
    \max_{x \in \bbX} \; f_1(x) \quad  \text{s.t.} \quad  f_2(x) \le 0.
\end{align}
Both $f_1$ and $f_2$ are observed with $\mathcal N(0,0.5^2)$ noise and we model the two components using independent GP models. A constraint-weighted composite objective is used in each of the \botorch{} acquisition functions EI, NEI, and \OKG{}. 

Results for the case of a Hartmann6 objective and two types of constraints are given in Figures \ref{fig:appdx:AddEmpirical:Constrained:HartmannL1}-\ref{fig:appdx:AddEmpirical:Constrained:HartmannL2} (we only show results for \botorch{}'s algorithms, since the other packages do not natively support optimization subject to unknown constraints). 

The regret values are computed using a feasibility-weighted objective, where ``infeasible'' is assigned an objective value of zero. For random search and EI, the suggested point is taken to be the best feasible noisily observed point, and for NEI and OKG, we use out-of-sample suggestions by optimizing the feasibility-weighted version of the posterior mean. The results displayed in Figure \ref{fig:appdx:AddEmpirical:Constrained:HartmannL2} are for the constrained Hartmann6 benchmark from \cite{letham2019noisyei}. Note, however, that the results here are not directly comparable to the figures in \cite{letham2019noisyei} because (1) we use feasibility-weighted objectives to compute regret and (2) they follow a different convention for suggested points. We emphasize that our contribution of outcome constraints for the case of KG has not been shown before in the literature.

\begin{figure*}
    \begin{minipage}[l]{0.5\columnwidth}
    \centering
    \includegraphics[width=1\linewidth]{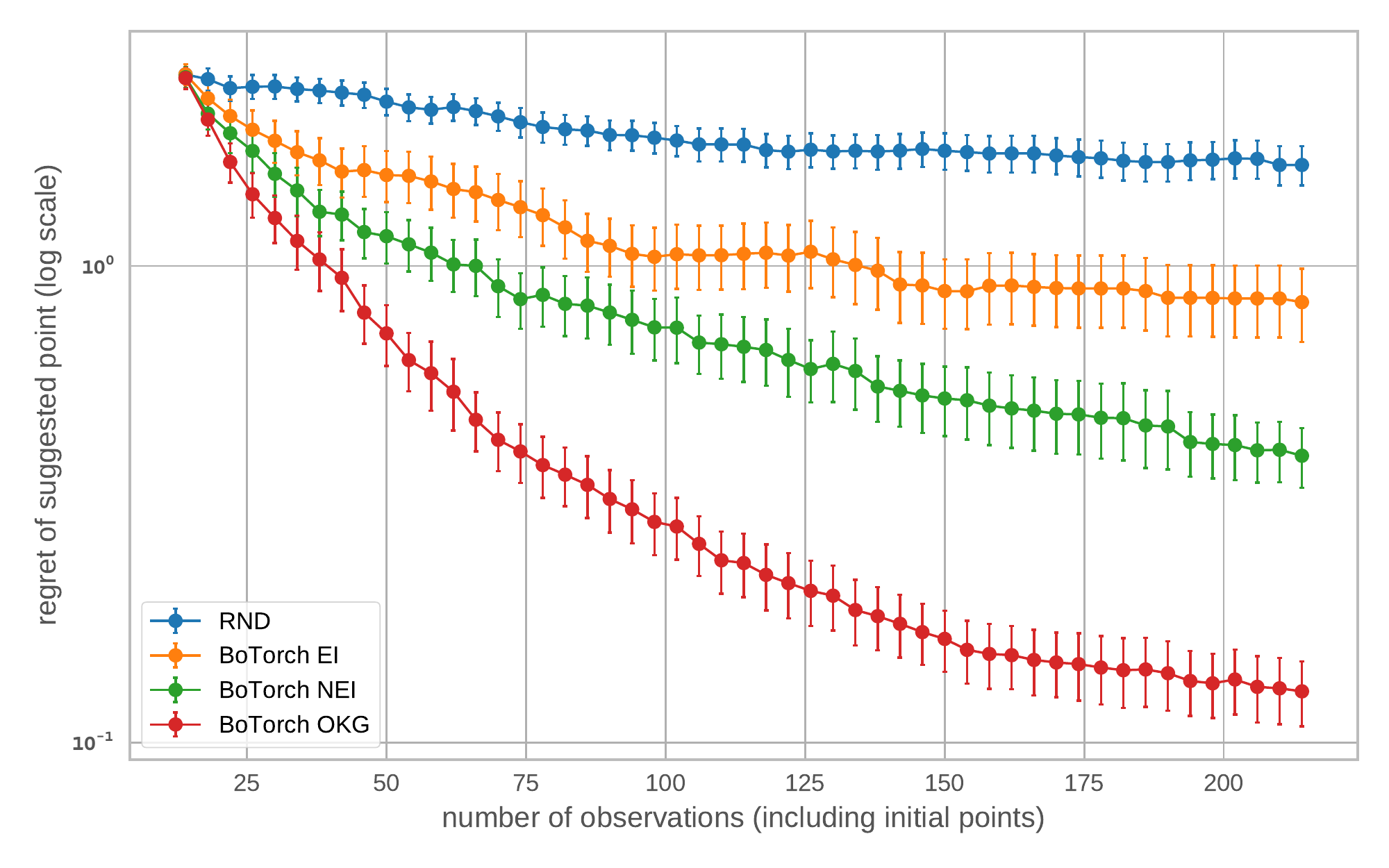}\\[-3ex]
    \caption{Constrained Hartmann6, $f_2(x) = \|x\|_1 - 3$}
    \label{fig:appdx:AddEmpirical:Constrained:HartmannL1}
    \end{minipage}
    \hfill
    \begin{minipage}[l]{0.5\columnwidth}
    \centering
    \includegraphics[width=1\linewidth]{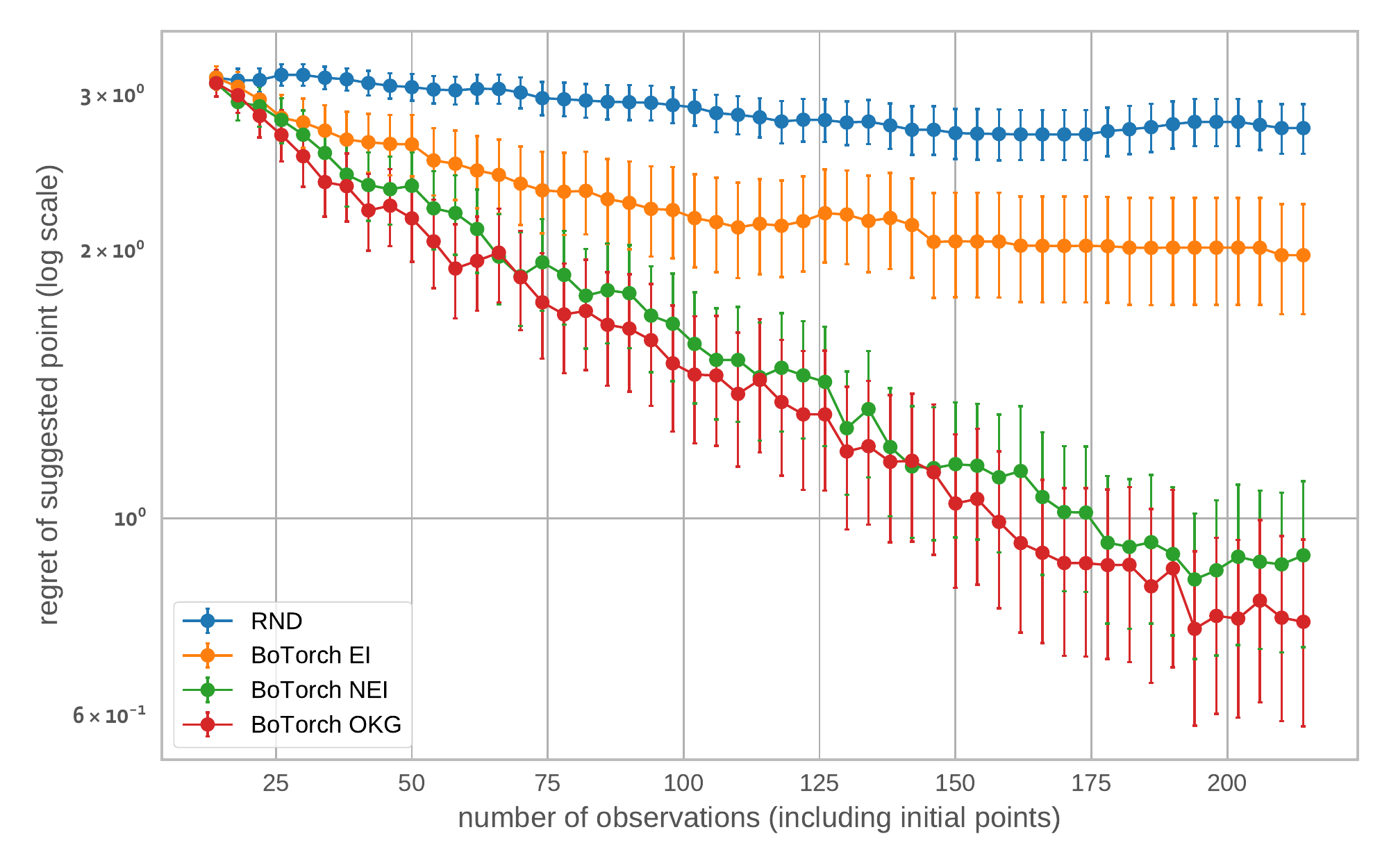}\\[-3ex]
    \caption{Constrained Hartmann6, $f_1(x) = \|x\|_2 - 1$ }
    \label{fig:appdx:AddEmpirical:Constrained:HartmannL2}
\end{minipage}
\end{figure*}

\subsection{Hyperparameter Optimization Details}
\label{appdx:subsec:AddEmpirical:hyper}

This section gives further detail on the experimental settings used in each of the hyperparameter optimization problems. As HPO typically involves long and resource intensive training jobs, it is standard to select the configuration with the best observed performance, rather than to evaluate a ``suggested'' configuration (we cannot perform noiseless function evaluations). 

\textbf{DQN and Cartpole:}
We consider the case of tuning a deep Q-network (DQN) learning algorithm \citep{mnih2013playing,mnih2015human} on the \emph{Cartpole} task from OpenAI gym \citep{brockman2016openai} and the default DQN agent implemented in Horizon \citep{gauci2018horizon}. Figure \ref{fig:Experiments:Cartpole} shows the results of tuning five hyperparameters, \emph{exploration parameter} (``epsilon''), the \emph{target update rate}, the \emph{discount factor}, the \emph{learning rate}, and the \emph{learning rate decay}.
We allow for a maximum of 60 training episodes or 2000 training steps, whichever occurs first. To reduce noise, each ``function evaluation'' is taken to be an average of 10 independent training runs of DQN.  
Figure \ref{fig:Experiments:Cartpole} presents the optimization performance of various acquisition functions from the different packages, using 15 rounds of parallel evaluations of size $q=4$, over 100 trials. While in later iterations all algorithms achieve reasonable performance, 
\botorch{} \OKG{}, EI, NEI, and GPyOpt LP-EI show faster learning early on.

\begin{figure*}
    \begin{minipage}[l]{0.5\columnwidth}
    \centering
    \includegraphics[width=\columnwidth]{./plots/dqn_cartpole.pdf}\\[-3ex]
    \caption{DQN tuning benchmark (Cartpole)}
    \label{fig:Experiments:Cartpole}
    \end{minipage}
    \hfill
    \begin{minipage}[l]{0.5\columnwidth}
    \centering
    \includegraphics[width=\columnwidth]{./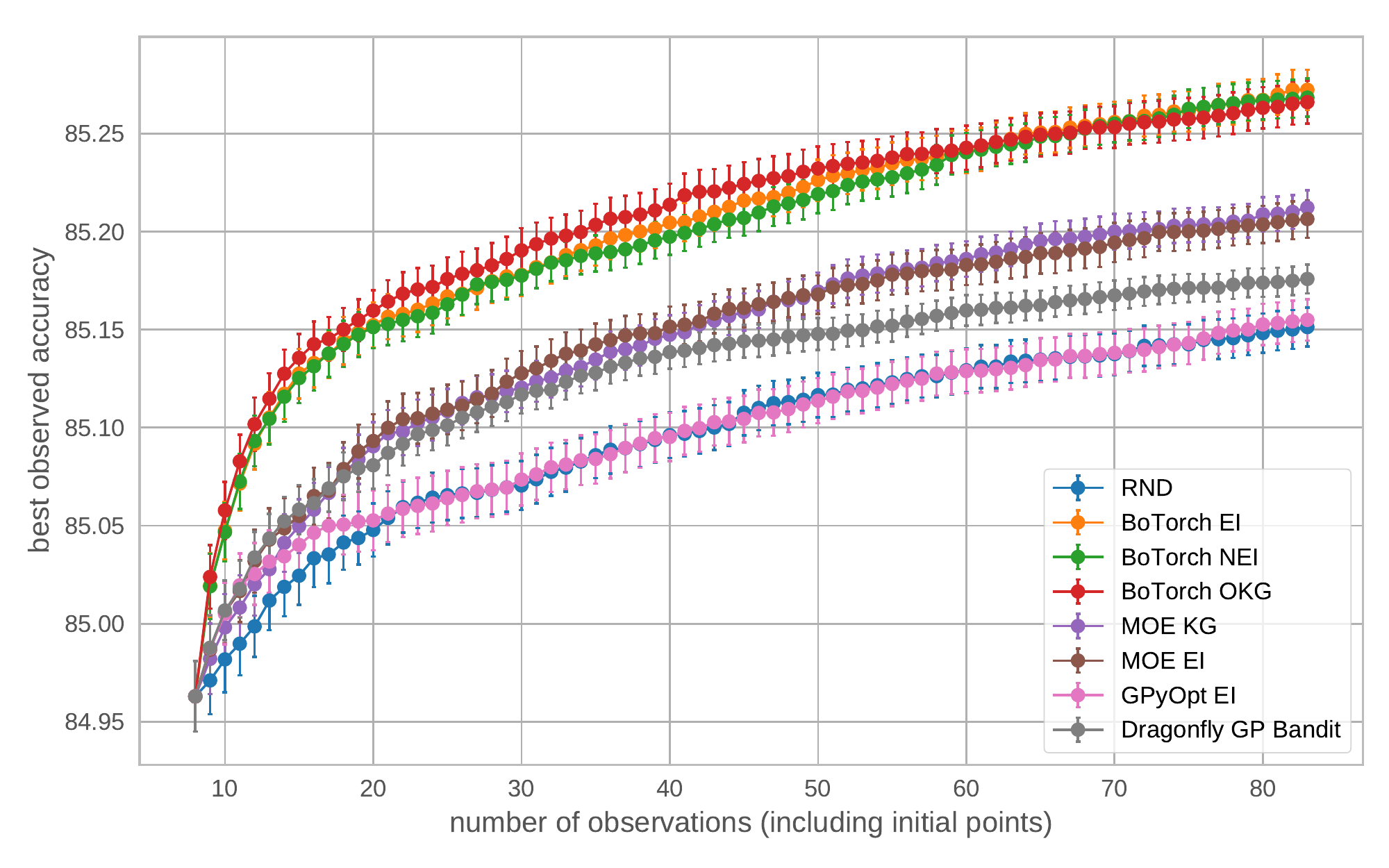}\\[-3ex]
    \caption{NN surrogate model, best observed accuracy}
    \label{fig:Experiments:SurrogateParamNet}
    \end{minipage}
\end{figure*}

\textbf{Neural Network Surrogate:}
%
We consider the neural network surrogate model for the UCI Adult data set introduced by \citet{falkner2018robusteff}, which is available as part of HPOlib2 \citep{eggensperger2019hpolib2}. We use a surrogate model to achieve a high level of precision in comparing the performance of the algorithms without incurring excessive computational training costs.  
This is a six-dimensional problem over network parameters (\emph{number of layers}, \emph{units per layer}) and training parameters (\emph{initial learning rate}, \emph{batch size}, \emph{dropout}, \emph{exponential decay factor for learning rate}). Figure~\ref{fig:Experiments:SurrogateParamNet} shows optimization performance in terms of best observed classification accuracy.
Results are means and 95\% confidence intervals computed from 200 trials with 75 iterations of size~$q=1$. All \botorch{} algorithms perform quite similarly here, with \OKG{} doing slightly better in earlier iterations. Notably, they all achieve significantly better accuracy than all other algorithms.

\textbf{Stochastic Weight Averaging on CIFAR-10:} 
Our final example is for the recently proposed \emph{Stochastic Weight Averaging} (SWA) procedure of \citet{izmailov2018averaging}, for which good hyperparameter settings are not fully understood. The setting is 300 epochs of training on the VGG-16 \citep{simonyan2014very} architecture for CIFAR-10. We tune three SWA hyperparameters: \emph{learning rate}, \emph{update frequency}, and \emph{starting iteration} using \OKG{}.
\citet{izmailov2018averaging} report the mean and standard deviation of the test accuracy over three runs to be $93.64$ and $0.18$, respectively, which corresponds to a 95\% confidence interval of $93.64 \pm 0.20$. 
We tune the problem to an average accuracy of $93.84 \pm 0.03$.

\section{Additional Theoretical Results and Omitted Proofs}
\label{appdx:sec:GeneralSAAresults}

\subsection{General SAA Results}
Recall that we assume that $f(\bx) \sim h(\bx, \epsilon)$ for some $h:\bbX \times \bbR^s \rightarrow \bbR^{q\times m}$ and base random variable $\epsilon \in \bbR^s$ (c.f. Section~\ref{sec:OptimizeAcq} for an explicit expression for~$h$ in case of a GP model). We write
\begin{align}
    A(\bx, \epsilon) := a(g(h(\bx, \epsilon))).
\end{align}

\begin{theorem}[\citet{homemDeMello2008convergence}] 
\label{thm:appdx:GeneralSAAresults:AlmostSure}
Suppose that (i) $\bbX$ is a compact metric space, (ii) $\hat{\alpha}_{\!N}(\bx) \xrightarrow{a.s.} \alpha(\bx)$ for all $\bx \in \bbX^q$, and (iii) there exists an integrable function $\ell:\bbR^s\mapsto \bbR$ such that for almost every $\epsilon$ and all $\bx, \by \in \bbX$, 
\begin{align}
    |A(\bx, \epsilon) - A(\by, \epsilon)| \leq \ell(\epsilon) \|\bx - \by\|.
\label{eq:GeneralSAAresults:AlmostSure:LipschitsCond}
\end{align}
Then 
$\hat{\alpha}_{\!N}^* \xrightarrow{a.s.} \alpha^*$
and 
$\textnormal{dist}(\hat{\bx}_{\!N}^*, \mathcal{X}_f^*) \xrightarrow{a.s.}  0$.
\end{theorem}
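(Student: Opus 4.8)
The plan is to upgrade the pointwise almost-sure convergence in assumption~(ii) to \emph{uniform} almost-sure convergence over the compact domain $\bbX^q$, and then invoke a standard argmax-stability (Berge-type) argument to transfer this to the optimal values and to the maximizers. The Lipschitz hypothesis~(iii) is exactly what bridges pointwise and uniform convergence. First I would record that the limit $\alpha$ inherits Lipschitz continuity: writing $\alpha(\bx) = \E[A(\bx,\epsilon)]$ and moving the expectation inside the absolute value, condition~\eqref{eq:GeneralSAAresults:AlmostSure:LipschitsCond} gives $|\alpha(\bx)-\alpha(\by)| \le \E[\ell(\epsilon)]\,\|\bx-\by\|$, where $L := \E[\ell(\epsilon)] < \infty$ by integrability. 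Likewise each realization $\hat{\alpha}_{\!N}$ is Lipschitz with the empirical constant $L_N := \tfrac{1}{N}\sum_{i=1}^N \ell(\epsilon^i)$, and under the sampling scheme the law of large numbers yields $L_N \to L$ a.s., so the $L_N$ are almost surely eventually bounded.

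The main step is a covering argument for uniform convergence. Fix $\eta>0$; by compactness choose a finite $\eta$-net $\bx_1,\dots,\bx_K$ of $\bbX^q$. Assumption~(ii) holds simultaneously at the finitely many net points on an almost-sure event, so $\max_k |\hat{\alpha}_{\!N}(\bx_k)-\alpha(\bx_k)| \to 0$. For an arbitrary $\bx$ I would pick the nearest net point $\bx_k$ and split $|\hat{\alpha}_{\!N}(\bx)-\alpha(\bx)|$ into three terms bounded respectively by $L_N\eta$, by $|\hat{\alpha}_{\!N}(\bx_k)-\alpha(\bx_k)|$, and by $L\eta$. Taking $\sup_{\bx}$ and then $\limsup_N$ leaves at most $2L\eta$ almost surely; since $\eta$ was arbitrary, $\sup_{\bx\in\bbX^q}|\hat{\alpha}_{\!N}(\bx)-\alpha(\bx)| \to 0$ a.s. This is the one place where compactness, pointwise convergence, and equicontinuity all combine, and I expect it to be the crux of the argument.

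Uniform convergence immediately gives $|\hat{\alpha}_{\!N}^* - \alpha^*| \le \sup_{\bx}|\hat{\alpha}_{\!N}(\bx)-\alpha(\bx)| \to 0$ a.s. For the maximizers I would argue by contradiction along a subsequence: if $\textnormal{dist}(\hat{\bx}_{\!N}^*,\mathcal{X}_f^*) \not\to 0$, compactness extracts $\hat{\bx}_{\!N_k}^* \to \bar{\bx}$ with $\bar{\bx}\notin\mathcal{X}_f^*$, hence $\alpha(\bar{\bx}) < \alpha^*$. But $\hat{\alpha}_{\!N_k}(\hat{\bx}_{\!N_k}^*) = \hat{\alpha}_{\!N_k}^* \to \alpha^*$, while uniform convergence together with continuity of $\alpha$ force $\hat{\alpha}_{\!N_k}(\hat{\bx}_{\!N_k}^*) \to \alpha(\bar{\bx})$, a contradiction. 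The only subtlety beyond bookkeeping is ensuring all the ``a.s.'' statements can be merged into a single probability-one event; this is handled by taking the countable intersection over a sequence $\eta \downarrow 0$ of the net-point convergence events together with the event $\{L_N\to L\}$.
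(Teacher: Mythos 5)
The paper never actually proves this statement: the theorem environment is headed by a citation to \citet{homemDeMello2008convergence}, and the result is used as a black box in the proofs of Theorems~1 and~2 and their RQMC variants. So there is no internal proof to compare against; what you have written is, in essence, a correct reconstruction of the classical argument behind the cited result. Your three steps are exactly the standard route: (a) the integrable Lipschitz envelope makes $\alpha$ Lipschitz with constant $L=\bbE[\ell(\epsilon)]$ and each realization $\hat{\alpha}_{\!N}$ Lipschitz with the empirical constant $L_N := \tfrac{1}{N}\sum_{i=1}^N \ell(\epsilon^i)$; (b) a finite $\eta$-net upgrades the pointwise a.s.\ convergence of hypothesis (ii) to uniform a.s.\ convergence on the compact $\bbX^q$ via the split $L_N\eta + \max_k|\hat{\alpha}_{\!N}(\bx_k)-\alpha(\bx_k)| + L\eta$; and (c) uniform convergence gives $|\hat{\alpha}_{\!N}^*-\alpha^*|\to 0$ directly and $\textnormal{dist}(\hat{\bx}_{\!N}^*,\mathcal{X}^*)\to 0$ by the compactness/subsequence contradiction. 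Your handling of the measure-theoretic bookkeeping (countable intersection over $\eta\downarrow 0$ and over net points) is also right, as is your implicit reading of the conclusion's $\mathcal{X}_f^*$ as the set of maximizers of $\alpha$ (a typo in the statement; elsewhere the paper writes $\mathcal{X}^*$).

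One caveat is worth flagging. Your step ``under the sampling scheme the law of large numbers yields $L_N \to L$ a.s.'' is not a consequence of hypotheses (i)--(iii) as literally stated. The statement is deliberately agnostic about the sampling mechanism --- condition (ii) only posits pointwise a.s.\ convergence --- precisely because the paper re-uses this theorem for scrambled $(t,d)$-sequences in Theorem~1(q), where the $\epsilon^i$ are \emph{not} i.i.d. What your covering argument genuinely needs is $\limsup_N L_N < \infty$ a.s., and this does not follow from pointwise convergence of $\hat{\alpha}_{\!N}$ together with integrability of $\ell$ alone; it is an additional ingredient. It is harmless in every use the paper makes of the theorem: for i.i.d.\ base samples it is the strong law applied to $\ell$, and for nested uniform scrambles it follows from the RQMC strong law of \citet{owen2020rqmcslln} applied to $\ell$ itself, which in the Gaussian setting has the square-integrable form $C_\mu + C_L\|\epsilon\|$. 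It is also faithful to Homem-de-Mello's own non-i.i.d.\ formulation, whose hypotheses include a law of large numbers for the relevant integrands. But if you intend your argument as a proof of the theorem exactly as abbreviated here, you should either add a.s.\ boundedness of $L_N$ as an explicit hypothesis or restrict to i.i.d.\ base samples, since without it the term $L_N\eta$ in the net argument is uncontrolled.
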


\begin{proposition}
\label{prop:appdx:GeneralSAAresults:GP}
    Suppose that (i) $\bbX$ is a compact metric space, (ii) $f$ is a GP with continuously differentiable prior mean and covariance functions, and (iii) $g( \cdot )$ and $a( \cdot, \Phi)$ are Lipschitz continuous. Then, condition~\eqref{eq:GeneralSAAresults:AlmostSure:LipschitsCond} in Theorem \ref{thm:appdx:GeneralSAAresults:AlmostSure} holds.
\end{proposition}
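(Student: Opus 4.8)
The plan is to verify the Lipschitz condition \eqref{eq:GeneralSAAresults:AlmostSure:LipschitsCond} directly, by peeling off the outer maps and reducing everything to the regularity of the GP reparametrization $h$. First I would use that $a(\cdot,\Phi)$ and $g(\cdot)$ are Lipschitz, say with constants $L_a$ and $L_g$: since $A(\bx,\epsilon)=a(g(h(\bx,\epsilon)))$, for every $\epsilon$ and all $\bx,\by$ in the compact domain,
\begin{align}
|A(\bx,\epsilon)-A(\by,\epsilon)| \le L_a L_g \, \|h(\bx,\epsilon)-h(\by,\epsilon)\|.
\end{align}
For a GP the reparametrization is affine in $\epsilon$, namely $h(\bx,\epsilon)=\mu(\bx)+L(\bx)\epsilon$ with $L(\bx)L(\bx)^{\!\top}=\Sigma(\bx)$, so by the triangle inequality and sub-multiplicativity $\|h(\bx,\epsilon)-h(\by,\epsilon)\| \le \|\mu(\bx)-\mu(\by)\| + \|L(\bx)-L(\by)\|\,\|\epsilon\|$. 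It therefore suffices to show that $\bx\mapsto\mu(\bx)$ and $\bx\mapsto L(\bx)$ are Lipschitz on the compact set, with the $\|\epsilon\|$ factor absorbed into an integrable envelope.

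Next I would establish the smoothness of the posterior moments. The posterior mean and covariance of a GP are given by the usual closed-form expressions in which the only $\bx$-dependence enters through kernel and mean evaluations $k(x_i,\cdot)$ and $m(\cdot)$; since these are assumed continuously differentiable and the data-dependent weights are constants, both $\mu(\cdot)$ and the entries of $\Sigma(\cdot)$ are $C^1$ on $\bbX^q$, hence Lipschitz on this compact set (with constants $L_\mu$ and $L_\Sigma$). This immediately handles the mean term. For the root factor I would write $L(\bx)=R(\Sigma(\bx))$, where $R$ is the chosen root map (Cholesky, or the symmetric principal square root), which is $C^\infty$ on the open cone of positive definite matrices; composing the Lipschitz map $\bx\mapsto\Sigma(\bx)$ with a locally Lipschitz $R$ yields a Lipschitz $L(\cdot)$, \emph{provided} $\Sigma(\bx)$ stays inside a compact subset of the PD cone.

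The hard part is exactly this last proviso: as two of the $q$ query points coalesce, the covariance $\Sigma(\bx)$ becomes rank-deficient, and near singular matrices the root map is only $1/2$-H\"older, so Lipschitzness of $L(\cdot)$ can fail. I would resolve this via the uniform positive-definiteness that is present in practice: adding the standard jitter $\tau I$ (or, when sampling $y_\calD$, the observation-noise term $\Sigma^v(\bx)\succeq\tau I$) guarantees $\Sigma(\bx)\succeq\tau I$ for all $\bx$, and on the convex set $\{A:\tau I\preceq A,\ \|A\|\le M\}$ the square root is Lipschitz with constant at most $1/(2\sqrt{\tau})$ (via the Sylvester-equation bound $\|X\|\le\|H\|/(2\sqrt{\lambda_{\min}})$ for its Fr\'echet derivative); the analogous bound holds for the Cholesky factor on a compact subset of the PD cone. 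This gives a Lipschitz constant $L_L$ for $L(\cdot)$. Assembling the pieces, $\ell(\epsilon):=L_aL_g\,(L_\mu+L_L\|\epsilon\|)$ satisfies \eqref{eq:GeneralSAAresults:AlmostSure:LipschitsCond}, and since $\epsilon\sim\mathcal N(0,I)$ has $\bbE\|\epsilon\|<\infty$, the envelope $\ell$ is integrable, which completes the argument. I would remark that the jitter could in principle be removed by invoking a Glaeser-type Lipschitz-square-root result for $C^{1,1}$ nonnegative matrix-valued functions, but the jitter argument is both cleaner and faithful to the implementation.
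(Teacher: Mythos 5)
Your proposal is correct, and its skeleton matches the paper's proof: both peel off $a \circ g$ via the product of Lipschitz constants $L_a L_g$, exploit the affine reparametrization $h_\calD(\bx,\epsilon)=\mu_\calD(\bx)+L_\calD(\bx)\epsilon$, split the increment into a mean term and a root-factor term via the triangle inequality, arrive at an envelope of the form $\ell(\epsilon)=L_a L_g\,(C_\mu+C_L\|\epsilon\|)$, and conclude integrability from the finite first moment of a Gaussian. Where you genuinely depart is the treatment of the root factor. The paper disposes of it in one line: posterior mean and covariance inherit continuous differentiability from the prior via the GP inference equations, the Cholesky decomposition is continuously differentiable by \cite{murray2016diffcholesky}, hence $L_\calD(\cdot)$ is $C^1$ on the compact domain and the mean value theorem yields $C_L$. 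You instead factor $L(\bx)=R(\Sigma(\bx))$ through a root map and observe---correctly---that $R$ is only locally Lipschitz on the open PD cone and merely $1/2$-H\"older at rank-deficient matrices, which do occur on $\bbX^q$ when components of $\bx$ coalesce (and at noiselessly observed training inputs); your jitter/observation-noise fix $\Sigma(\bx)\succeq\tau I$, with the Sylvester-equation bound $1/(2\sqrt{\tau})$ on the Fr\'echet derivative of the square root, restores a uniform Lipschitz constant. This is a gain in rigor over the paper: the cited differentiability of the Cholesky map holds only on the PD cone, so the paper's proof silently assumes exactly the uniform positive-definiteness you make explicit. The trade-off is that you prove the proposition under a hypothesis (uniform $\Sigma(\bx)\succeq\tau I$) absent from its stated assumptions; since the paper's own argument needs the same condition, the honest reading is that both proofs require it, and yours has the virtue of saying so, as well as of sketching the Glaeser-type alternative should one wish to dispense with the jitter.
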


The following proposition follows directly from Proposition 2.1, Theorem 2.3, and remarks on page 528 of~\cite{homemDeMello2008convergence}.

\begin{proposition}[\citet{homemDeMello2008convergence}]
\label{prop:appdx:GeneralSAAresults:Rate}
    Suppose that, in addition to the conditions in Theorem~\ref{thm:appdx:GeneralSAAresults:AlmostSure}, (i) the base samples $E = \{\epsilon^i\}_{i=1}^N$ are i.i.d., (ii) for all $\bx \in \bbX^q$ the moment generating function $M^{\!A}_\bx(t) := \bbE[ e^{tA(\bx, \epsilon)}]$ of $A(\bx, \epsilon)$ is finite in an open neighborhood of $t=0$ and (iii) the moment generating function $M^\ell(t) := \bbE[ e^{t\ell(\epsilon)}]$ is finite in an open neighborhood of $t=0$. Then, there exist $ K <\infty$ and $\beta > 0$ such that  $\mathbb{P} (\textnormal{dist}(\hat{\bx}_{\!N}, \mathcal{X}_f^*) ) \le K e^{-\beta N}$ for all $N \ge 1$.
\end{proposition}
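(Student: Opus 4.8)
The plan is to derive the exponential concentration of the optimizer as a consequence of a \emph{uniform} exponential concentration of $\hat{\alpha}_{\!N}$ around $\alpha$ on the compact set $\bbX^q$, followed by a standard ``indifference-zone'' argument; this is the route formalized in the cited Cram\'er-type SAA results. First I would establish a pointwise large-deviation bound: since the base samples are i.i.d.\ and $M^{\!A}_\bx(t)$ is finite in a neighborhood of $t=0$, a Chernoff bound on $\hat{\alpha}_{\!N}(\bx) = \tfrac{1}{N}\sum_i A(\bx,\epsilon^i)$ (equivalently, Cram\'er's upper bound) gives, for each fixed $\bx$ and each $\eta>0$, constants $K_\bx,\beta_\bx>0$ with $\mathbb{P}\bigl(|\hat{\alpha}_{\!N}(\bx)-\alpha(\bx)|>\eta\bigr)\le K_\bx e^{-\beta_\bx N}$.

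Next I would upgrade this to a bound uniform over $\bbX^q$. Using condition~\eqref{eq:GeneralSAAresults:AlmostSure:LipschitsCond}, the empirical modulus $\hat{L}_{\!N} := \tfrac{1}{N}\sum_i \ell(\epsilon^i)$ is a common Lipschitz constant for $\hat{\alpha}_{\!N}$, and since $M^\ell(t)$ is finite near $0$, another Chernoff bound shows $\hat{L}_{\!N}$ exceeds $2\,\bbE[\ell(\epsilon)]$ only with exponentially small probability. I would then cover the compact set $\bbX^q$ by finitely many balls of radius $\eta/(4\bbE[\ell])$ — the covering number being finite and independent of $N$ — apply the pointwise bound at each center, and interpolate using the common Lipschitz constant. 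A union bound over the finitely many centers preserves the exponential rate and yields $\mathbb{P}\bigl(\sup_{\bx}|\hat{\alpha}_{\!N}(\bx)-\alpha(\bx)|>\eta\bigr)\le K e^{-\beta N}$.

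Finally I would convert uniform value-convergence into set-convergence of the argmax. By compactness and continuity, for any $\delta>0$ the gap $\rho(\delta):=\alpha^* - \max\{\alpha(\bx): \textnormal{dist}(\bx,\mathcal{X}_f^*)\ge\delta\}$ is strictly positive. On the event $\sup_{\bx}|\hat{\alpha}_{\!N}-\alpha|<\rho(\delta)/2$, any maximizer $\hat{\bx}_{\!N}$ of $\hat{\alpha}_{\!N}$ must satisfy $\textnormal{dist}(\hat{\bx}_{\!N},\mathcal{X}_f^*)<\delta$: a point $\bx$ at distance $\ge\delta$ has $\hat{\alpha}_{\!N}(\bx)<\alpha^*-\rho(\delta)/2$, while a true maximizer $\bx^*$ has $\hat{\alpha}_{\!N}(\bx^*)>\alpha^*-\rho(\delta)/2$, so $\bx$ cannot be optimal. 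Taking $\eta=\rho(\delta)/2$ in the uniform bound delivers $\mathbb{P}\bigl(\textnormal{dist}(\hat{\bx}_{\!N},\mathcal{X}_f^*)>\delta\bigr)\le K e^{-\beta N}$.

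The main obstacle I anticipate is the uniform (rather than pointwise) exponential bound: controlling the simultaneous deviation over the continuum $\bbX^q$ requires both that the covering number be $N$-independent — hence the appeal to compactness — and that the random Lipschitz constant $\hat{L}_{\!N}$ be tamed at an exponential rate. This is exactly why the finiteness of $M^\ell$ near the origin, and not merely integrability of $\ell$ as in Theorem~\ref{thm:appdx:GeneralSAAresults:AlmostSure}, is imposed here. By contrast, the concluding argmax-stability step is comparatively routine once $\rho(\delta)>0$ is secured from compactness and continuity of $\alpha$.
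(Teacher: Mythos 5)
Your proposal is correct and takes essentially the same route as the paper, which gives no argument of its own but defers to Proposition~2.1, Theorem~2.3, and the remarks on page~528 of \citet{homemDeMello2008convergence} --- results that are proved exactly as you outline: a pointwise Cram\'er/Chernoff bound from the finiteness of $M^{\!A}_\bx$ near the origin, uniformization over the compact set $\bbX^q$ via a finite cover combined with exponential control of the random Lipschitz modulus $\hat{L}_{\!N}$ (which is precisely why finiteness of $M^\ell$ near $t=0$, rather than mere integrability of $\ell$, is assumed), and the standard argmax-stability step using $\rho(\delta)>0$. The only adjustments a fully detailed write-up would need are trivial bookkeeping (e.g., allocating the $\eta$-budget between the covering radius, the interpolation terms for $\hat{\alpha}_{\!N}$ and for $\alpha$, and handling the degenerate case $\bbE[\ell]=0$), none of which affects the exponential rate.
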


\subsection{Formal Statement of Theorem 1}
\begin{manualtheorem}{\ref{thm:OptimizeAcq:SAAConvergence} (Formal Version)}
Suppose (i) $\bbX$ is compact,
(ii) $f$ has a GP prior with continuously differentiable mean and covariance functions, and (iii) $g( \cdot )$ and $a( \cdot , \Phi)$ are Lipschitz continuous. 
If the base samples $\{\epsilon^i\}_{i=1}^N$ are drawn i.i.d. from $\mathcal N(0,1)$,
then 
\begin{enumerate}[label={(\arabic*)}]
\item $\hat{\alpha}_{\!N}^* \rightarrow \alpha^*$ a.s., and
\item $\textnormal{dist}(\hat{\bx}_{\!N}^*, \mathcal{X}^*) \rightarrow 0$ a.s.
\end{enumerate}
If, in addition, (iii) for all $\bx \in \bbX^q$ the moment generating function $M^{\!A}_\bx(t) := \bbE[ e^{tA(\bx, \epsilon)}]$ of $A(\bx, \epsilon)$ is finite in an open neighborhood of $t=0$ and (iv) the moment generating function $M^\ell(t) := \bbE[ e^{t\ell(\epsilon)}]$ is finite in an open neighborhood of $t=0$, then
\begin{enumerate}[resume*]
\item $\forall\, \delta>0$, $\exists\, K <\infty$, $\beta > 0$ s.t.  $\mathbb{P}\bigl(\textnormal{dist}(\hat{\bx}_{\!N}^*, \mathcal{X}^*) > \delta \bigr) \le K e^{-\beta N}$ for all $N \geq 1$.
\end{enumerate}
\end{manualtheorem}

\subsection{Randomized Quasi-Monte Carlo Sampling for Sample Average Approximation}
\label{appdx:subsec:GeneralSAAresults:RQMC}

In order to use randomized QMC methods with SAA for MC acquisition function, the base samples $E = \{\epsilon^i\}$ will need to be generated via RQMC. For the case of Normal base samples, this can be achieved in various ways, e.g. by using inverse CDF methods or a suitable Box-Muller transform of samples $\epsilon^i \in [0, 1]^s$ (both approaches are implemented in \botorch{}). In the language of Section~\ref{sec:OptimizeAcq}, such a transform will become part of the base sample transform $\epsilon \mapsto h(\bx, \epsilon)$ for any fixed $\bx$. 

For the purpose of this paper, we consider scrambled $(t, d)$-sequences as discussed by~\citet{owen1995randomlypermuted}, which are a particular class of RQMC method (\botorch{} uses PyTorch's implementation of scrambled Sobol sequences, which are $(t, d)$-nets in base 2). Using recent theoretical advances from \citet{owen2020rqmcslln}, it is possible to generalize the convergence results from Theorems~\ref{thm:OptimizeAcq:SAAConvergence} and~\ref{thm:OneShotKG:OneStep:Convergence} to the RQMC setting (to our knowledge, this is the first practical application of these theoretical results).

Let $(N_i)_{i\geq 1}$ be a sequence with $N_i \in \mathbb{N}$ s.t. $N_i \rightarrow \infty$ as $i\rightarrow \infty$. Then we have the following (see Appendix~\ref{appdx:sec:Proofs} for the proofs):

\begin{manualtheorem}{\ref{thm:OptimizeAcq:SAAConvergence}(q)}
    In the setting of Theorem~\ref{thm:OptimizeAcq:SAAConvergence}, let $\{\epsilon^i\}$ be samples from a $(t,d)$-sequence in base~$b$ with gain coefficients no larger than $\Gamma < \infty$, randomized using a nested uniform scramble as in~\cite{owen1995randomlypermuted}. Then, the conclusions of Theorem~\ref{thm:OptimizeAcq:SAAConvergence} still hold. 
    In particular, 
    \begin{enumerate}[label={(\arabic*)}]
    \item $\hat{\alpha}_{\!N_i}^* \rightarrow \alpha^*$ a.s. as $i\rightarrow \infty$,
    \item $\textnormal{dist}(\hat{\bx}_{\!N_i}^*, \mathcal{X}^*) \rightarrow 0$ a.s. as $i\rightarrow \infty$,
    \item $\forall\, \delta>0$, $\exists\, K <\infty$, $\beta > 0$ s.t.  $\mathbb{P}\bigl(\textnormal{dist}(\hat{\bx}_{\!N_i}^*, \mathcal{X}^*) > \delta \bigr) \le K e^{-\beta N_i}$ for all $i \geq 1$.
    \end{enumerate}
\end{manualtheorem}

\begin{manualtheorem}{\ref{thm:OneShotKG:OneStep:Convergence}(q)}
    In the setting of Theorem~\ref{thm:OneShotKG:OneStep:Convergence}, let $\{\epsilon^i\}$ be samples from $(t,d)$-sequence in base~$b$, with gain coefficients no larger than $\Gamma < \infty$, randomized using a nested uniform scramble as in~\cite{owen1995randomlypermuted}. 
    Then,
    \begin{enumerate}[label={(\arabic*)}]
    \item $\hat{\alpha}_{\mathrm{KG},N_i}^* \xrightarrow{a.s.} \alpha_{\mathrm{KG}}^*$ as $i \rightarrow \infty$,
    \item $\textnormal{dist}(\hat{\bx}_{\mathrm{KG},N_i}^*, \mathcal{X}_{\mathrm{KG}}^*) \xrightarrow{a.s.} 0$ as $i \rightarrow \infty$.
    \end{enumerate}
\end{manualtheorem}

Theorem~\ref{thm:OneShotKG:OneStep:Convergence}(q) as stated does not provide a rate on the convergence of the optimizer. We believe that  such result is achievable, but leave it to future work.

Note that while the above results hold for any sequence $(N_i)_i$ with $N_i \rightarrow \infty$, in practice the RQMC integration error can be minimized by using sample sizes that exploit intrinsic symmetry of the $(t, d)$-sequences. Specifically, for integers $b\geq 2$ and $M \geq 1$, let
\begin{align}
    \mathcal{N} := \{mb^k \,|\, m \in \{1, \dotsc, M\}, k\in \mathbb{N}_+\}.
\label{eq:appdx:subsec:GeneralSAAresults:RQMC:SampleSizes}
\end{align}
In practice, we chose the MC sample size $N$ from the unique elements of $\mathcal{N}$.

\subsection{Asymptotic Optimality of \OKG{}}
\label{appdx:subsec:GeneralSAAresults:AsympOptimKG}

Consider the case where $f_\textnormal{true}$ is drawn from a GP prior with $f \overset{d}{=} f_\textnormal{true}$, and that $g(f) \equiv f$. The 
KG \emph{policy} (i.e., when used to select sequential measurements in a dynamic setting) is known to be \emph{asymptotically optimal} \citep{frazier2008knowledge,frazier2009knowledge,poloczek2017misokg, bect2019supermartingaleGP}, meaning that as the number of measurements tends to infinity, an optimal point $x^* \in \mathcal{X}_f^* := \argmax_{x\in \bbX} f(x)$ is identified.
Although it does not necessarily signify good finite sample performance, this is considered a useful property for acquisition functions \citep{frazier2009knowledge}. 
In this section, we state two results showing that OKG also possesses this property, providing further theoretical justification for the MC approach taken by \botorch{}.

Let $\calD_0$ be the initial data and $\calD_n$ for $n \ge 1$ be the data generated by taking measurements according to \OKG{} 
using $N_n$ MC samples in iteration $n$, i.e., $\bx_{n+1} \in \argmax_{\bx\in \bbX^q} \hat{\alpha}_{\mathrm{KG}, N_n}(\bx; \mathcal D_n)$ for all $n$, and let $\chi_n \in \argmax_{x\in \bbX} \bbE[  f(x)\, | \, \calD_n]$. 
Then we can show the following:
\begin{theorem}
\label{thm:OneShotKG:AsymptoticOptimality:OKG}
    Suppose conditions (i) and (ii) of Theorem \ref{thm:OptimizeAcq:SAAConvergence} and (iii) of Theorem  \ref{thm:OneShotKG:OneStep:Convergence} are satisfied.
    In addition, suppose that $\limsup_n N_n = \infty$.
    Then, $f(\chi_n) \rightarrow f(x^*)$ a.s. and in $L^1$.
\end{theorem}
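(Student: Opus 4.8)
The plan is to show that the \OKG{} policy inherits the asymptotic optimality of the \emph{exact} KG policy, using the sample-average consistency of Theorem~\ref{thm:OneShotKG:OneStep:Convergence} as the bridge between the two. Throughout, write $\mu_n(x) := \bbE[f(x)\mid \calD_n]$ and $\mu_n^* := \max_{x\in\bbX}\mu_n(x) = \mu_n(\chi_n)$. The backbone of the argument is that, since $g(f)\equiv f$, the per-step increment of the running best posterior mean is exactly the true KG value at the point actually queried: because $\bx_{n+1}$ is $\calD_n$-measurable and $\calD_{n+1} = \calD_{n,\bx_{n+1}}$, one has $\bbE[\mu_{n+1}^* \mid \calD_n] - \mu_n^* = \alpha_{\mathrm{KG}}(\bx_{n+1};\calD_n) \ge 0$ for \emph{any} adapted choice of $\bx_{n+1}$, hence in particular for the \OKG{} iterates.

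First I would establish submartingale convergence, which holds regardless of how good the policy is. By the tower property $\bbE[\mu_{n+1}(x)\mid\calD_n] = \mu_n(x)$ together with Jensen's inequality applied to the max, $(\mu_n^*)_n$ is a submartingale; it is bounded in $L^1$ by the uniformly integrable martingale $\bbE[\sup_{x\in\bbX}|f(x)|\mid\calD_n]$, whose integrability follows from compactness of $\bbX$ and continuity of the GP sample paths (conditions (i)--(ii)). Hence $\mu_n^* \to \mu_\infty^*$ a.s.\ and in $L^1$. Telescoping the nonnegative increments gives $\sum_n \bbE[\alpha_{\mathrm{KG}}(\bx_{n+1};\calD_n)] = \lim_N \bbE[\mu_N^*] - \bbE[\mu_0^*] < \infty$, so $\alpha_{\mathrm{KG}}(\bx_{n+1};\calD_n) \to 0$ a.s.\ and in $L^1$.

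The heart of the proof is upgrading ``KG vanishes at the \OKG{} iterate'' to ``the \emph{maximal} true KG vanishes.'' Let $\eta_n := \sup_{\bx\in\bbX^q}\bigl|\hat{\alpha}_{\mathrm{KG},N_n}(\bx;\calD_n) - \alpha_{\mathrm{KG}}(\bx;\calD_n)\bigr|$ be the uniform sample-average error at iteration~$n$. Since $\bx_{n+1}$ exactly maximizes $\hat{\alpha}_{\mathrm{KG},N_n}(\cdot;\calD_n)$, a two-sided comparison against the true KG maximizer yields $\max_{\bx}\alpha_{\mathrm{KG}}(\bx;\calD_n) \le \alpha_{\mathrm{KG}}(\bx_{n+1};\calD_n) + 2\eta_n$. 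Because $\limsup_n N_n = \infty$, there is a subsequence along which $N_n\to\infty$, and a uniform-in-$\bx$ strengthening of Theorem~\ref{thm:OneShotKG:OneStep:Convergence}---obtained by combining pointwise convergence with the Lipschitz/equicontinuity estimates of Proposition~\ref{prop:appdx:GeneralSAAresults:GP} applied to the fantasy posteriors---gives $\eta_n\to 0$ along it; hence $\max_{\bx}\alpha_{\mathrm{KG}}(\bx;\calD_n)\to 0$ along that subsequence. Invoking the known consistency mechanism for exact KG \citep{frazier2009knowledge,bect2019supermartingaleGP}, when the maximal KG value vanishes no further sampling can improve the best posterior mean, which for a GP with continuous paths forces the posterior at the reported point to resolve and $f(\chi_n)\to f(x^*)$ along the subsequence; since the full-sequence limit $\mu_\infty^*$ already exists, the subsequence merely identifies it. Finally, $L^1$ convergence follows from the a.s.\ statement by dominated convergence with dominating variable $\sup_{x\in\bbX}|f(x)|$.

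The main obstacle is the bridging step in the \emph{adaptive} setting: Theorem~\ref{thm:OneShotKG:OneStep:Convergence} is stated for a fixed dataset, whereas here $\calD_n$---and hence the fantasy GPs defining $\hat{\alpha}_{\mathrm{KG},N_n}$---is itself random and history-dependent, so $\eta_n\to 0$ must be argued uniformly over the random sequence of posteriors, conditioning on $\calD_n$ and exploiting the freshly drawn base samples at each step. A secondary difficulty is that $x^*\in\mathcal{X}_f^*$ is random (it depends on the drawn path $f$), which I would handle by working conditionally on $f$ and using martingale convergence of $\bbE[f(x^*)\mid\calD_n]$ to tie $\mu_n^*$ back to $f(x^*)$; care is likewise needed to ensure the consistency mechanism applies along the (random) subsequence rather than the full index set.
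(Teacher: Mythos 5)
Your skeleton is aligned with the paper's proof---both arguments bridge from SAA consistency (Theorem~\ref{thm:OneShotKG:OneStep:Convergence}) to the exact-KG consistency machinery of \citet{bect2019supermartingaleGP}---but the step you compress into ``invoking the known consistency mechanism for exact KG'' is precisely where the paper has to do real work, and your sketch has a genuine gap there. First, the consistency mechanism you cite is only available in the literature for $q=1$: Theorem~\ref{thm:OneShotKG:AsymptoticOptimality:OKG} allows parallel candidates $\bx \in \bbX^q$, and the paper must prove a multi-point extension of Theorem~4.8 of \citet{bect2019supermartingaleGP} (stated as Theorem~\ref{appdx:thm:OneShotKG:AsymptoticOptimality:Bect}, whose proof verifies the multi-point analogue of condition~(c) in their Definition~3.18, showing $\bbE[\max\{0, W_{\bx,y}\}]=0$ for all $\bx,y$ forces the sample paths of $f-\mu$ to be constant). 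Your proposal never addresses $q>1$. Second, your endgame conflates two different quantities: you establish (along a subsequence) that $\max_{\bx}\alpha_{\mathrm{KG}}(\bx;\calD_n)\to 0$ and that $\mu_n^*$ converges, but the theorem's conclusion concerns $f(\chi_n)$, and convergence of $\mu_n^* = \mu_n(\chi_n)$ does not by itself give $f(\chi_n)\to \max_x f(x)$; the paper obtains exactly this (a.s.\ and in $L^1$) from Proposition~4.9 of \citet{bect2019supermartingaleGP}, which requires $\alpha^n_{\mathrm{KG}}(\bx)\to 0$ along the \emph{full} sequence. Your subsequence-only control does not feed that input, whereas the paper's Theorem~\ref{appdx:thm:OneShotKG:AsymptoticOptimality:Bect} is engineered to accept selections that are near-optimal only \emph{infinitely often} with vanishing tolerance $a_n$---exactly what $\limsup_n N_n = \infty$ delivers---and to output full-sequence vanishing of the KG functional.

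Two secondary points. Your uniform SAA error bound $\eta_n := \sup_{\bx}|\hat{\alpha}_{\mathrm{KG},N_n}(\bx)-\alpha_{\mathrm{KG}}(\bx)| \to 0$ is asserted via an unproven ``uniform-in-$\bx$ strengthening'' of Theorem~\ref{thm:OneShotKG:OneStep:Convergence}; this is plausibly recoverable from the Lipschitz estimates (it is essentially how Proposition~2.1 of \citet{homemDeMello2008convergence} works), but the paper sidesteps it entirely by using the optimizer-set convergence $\textnormal{dist}(\hat{\bx}^*_{\mathrm{KG},N},\mathcal{X}^*_{\mathrm{KG}})\to 0$ together with continuity of $\alpha^n_{\mathrm{KG}}$ to conclude near-optimality of the selected point's \emph{value}, which is all that is needed. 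Your submartingale/telescoping step showing $\alpha_{\mathrm{KG}}(\bx_{n+1};\calD_n)\to 0$ is correct but ends up unnecessary once the Bect-style theorem is in hand. Finally, your proposed repair for the randomness of $x^*$ via ``martingale convergence of $\bbE[f(x^*)\mid\calD_n]$'' is delicate ($x^*$ is $f$-measurable, so this is not a martingale without further argument); the paper avoids this entirely by delegating to Proposition~4.9 of \citet{bect2019supermartingaleGP}.
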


Theorem~\ref{thm:OneShotKG:AsymptoticOptimality:OKG} shows that \OKG{} is asymptotically optimal if the number of fantasies~$N_n$ grows asymptotically with~$n$ (this assumes we have an analytic expression for the inner expectation. If not, a similar condition must be imposed on the number of inner MC samples). 
In the special case of finite $\bbX$, we can quantify the sample sizes $\{N_n\}$ that ensure asymptotic optimality of OKG:
\begin{theorem}
\label{thm:OneShotKG:AsymptoticOptimality:finiteOKG}
Along with conditions (i) and (ii) of Theorem~\ref{thm:OptimizeAcq:SAAConvergence}, suppose that $|\bbX| < \infty$ and $q=1$. Then, if for some $\delta > 0$,
$
N_n \ge A_n^{-1} \log(K_n / \delta)
$
a.s., 
where $A_n$ and $K_n$ are a.s. finite and depend on $\mathcal D_n$ (these quantities can be computed), we have
$f(\chi_n) \rightarrow \max_{x \in \mathbb{X}} f(x)$ a.s..
\end{theorem}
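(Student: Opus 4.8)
The plan is to show that the prescribed sample-size schedule forces \OKG{} to reproduce the exploratory behaviour of the exact KG policy on the realized data, and then to invoke the known asymptotic optimality of exact KG on finite $\bbX$. Because $g(f)\equiv f$ is affine and the fantasy posteriors are Gaussian, the inner expectation in~\eqref{eq:OneShotKG:OneStep} is closed-form: $\bbE[f(x')\,|\,\calD_\bx^i]$ is the fantasy posterior mean, which is affine in the Gaussian fantasy observation $y_\calD^i(\bx)$. Hence each summand $\max_{x'\in\bbX}\bbE[f(x')\,|\,\calD_\bx^i]$ is a convex, piecewise-linear (so Lipschitz) function of a Gaussian variable, making $A(\bx,\epsilon)$ sub-Gaussian with moment generating function finite everywhere. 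This verifies, conditionally on $\calD_n$, the MGF hypotheses of Proposition~\ref{prop:appdx:GeneralSAAresults:Rate} and supplies the explicit exponential rate used below.

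Conditioning on $\calD_n$, the exact values $\{\alpha_{\mathrm{KG}}(x;\calD_n)\}_{x\in\bbX}$ are fixed numbers; let $\Delta_n$ be the gap between the largest and second-largest of them. First I would prove a conditional concentration bound: since \OKG{} picks the wrong point only if some MC error exceeds $\Delta_n/2$, the sub-Gaussian estimate plus a union bound over the finite set $\bbX$ gives a.s.-finite, $\calD_n$-measurable quantities $A_n$ (proportional to $\Delta_n^2$ over the conditional sub-Gaussian proxy $\sigma_n^2$) and $K_n$ (proportional to $|\bbX|$) with
\begin{align}
    \bbP\!\left(\, \bx_{n+1} \notin \argmax_{x\in\bbX}\alpha_{\mathrm{KG}}(x;\calD_n)\ \middle|\ \calD_n \right) \;\le\; K_n\, e^{-A_n N_n}.
\end{align}
Thus $N_n \ge A_n^{-1}\log(K_n/\delta)$ guarantees that, conditionally on $\calD_n$, \OKG{} selects an exact-KG maximizer with probability at least $1-\delta$. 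Denote this event $G_n$, and let $\mathcal F_n:=\sigma(\calD_n)$, so that $\bbP(G_n\mid\mathcal F_n)\ge 1-\delta$ for every $n$.

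Next I would upgrade this fixed-probability correctness to almost-sure infinite exploration. Taking $\delta\in(0,1)$, one has $\bbP(G_n\mid\mathcal F_n)\ge 1-\delta>0$, so $\sum_n\bbP(G_n\mid\mathcal F_n)=\infty$ and the conditional (L\'evy) Borel--Cantelli lemma yields that $G_n$ holds infinitely often a.s. On each such stage \OKG{} agrees with a maximizer of the exact KG acquisition function evaluated on the realized data $\calD_n$. I would then invoke the defining property of KG on finite $\bbX$ underlying its asymptotic optimality (cf.\ the references in Appendix~\ref{appdx:subsec:GeneralSAAresults:AsympOptimKG} and the argument behind Theorem~\ref{thm:OneShotKG:AsymptoticOptimality:OKG}): a point sampled only finitely often retains a non-degenerate posterior, so its exact KG value stays bounded away from zero, whereas the KG values of points sampled infinitely often vanish. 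Hence such an under-sampled point is eventually the unique exact-KG maximizer on $\calD_n$, forcing \OKG{} to select it at the infinitely many stages where $G_n$ holds---a contradiction. Therefore every $x\in\bbX$ is sampled infinitely often a.s., which drives $\bbE[f(x)\,|\,\calD_n]\to f(x)$ a.s.\ for each of the finitely many $x$; consequently $\chi_n$ eventually lies in $\mathcal{X}_f^*$ and $f(\chi_n)\to\max_{x\in\bbX}f(x)$ a.s.

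The main obstacle I anticipate is this last upgrade, not the concentration bound. Two points require care. First, $A_n$ and $K_n$ are path-dependent and the KG gap $\Delta_n$ can shrink toward zero under near-ties, so one must check that the prescribed $N_n$ stays finite and that the conditional Borel--Cantelli step is not vacuous; the factor $A_n^{-1}\propto \Delta_n^{-2}$ is exactly what inflates $N_n$ to offset vanishing gaps. Second, the ``under-sampled point has eventually dominant exact KG'' property is transparent for independent alternatives but, for a correlated GP, needs the supermartingale machinery of the cited KG-optimality results to preclude the variance at an unsampled point being driven to zero solely through correlated observations elsewhere; reconciling that machinery with the MC-perturbed, stochastic selection on the non-$G_n$ stages is the delicate part of the argument.
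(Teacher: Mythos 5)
Your first half is essentially the paper's: conditional on $\calD_n$, the SAA estimate of $\alpha_{\mathrm{KG}}$ concentrates exponentially (the paper gets the needed MGF finiteness from Lemma~\ref{lem:appdx:Proofs:MGFsupffinite}, i.e.\ Borell--TIS applied to the fantasy posterior mean, which is a GP because $g$ is affine, and then invokes Theorem~2.6 of \cite{homemDeMello2008convergence} to certify that $N_n \ge A_n^{-1}\log(K_n/\delta)$ yields $\mathbb{P}[\argmax_x \hat{\alpha}_{\mathrm{KG},N_n} \not\subseteq \argmax_x \alpha_{\mathrm{KG}} \mid \mathcal{F}_n] \le \delta$). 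Your L\'evy--Borel--Cantelli upgrade to ``\OKG{} agrees with an exact-KG maximizer infinitely often a.s.'' is also probabilistically the same content as the paper's step, which writes the probability of perpetual disagreement as an infinite product of conditional probabilities each at most $\delta < 1$ and concludes it is zero. (One small slip: your gap $\Delta_n$ between the largest and second-largest exact KG values should be the gap between the optimal value and the best \emph{strictly} suboptimal value, since ties in the maximum would make your prescribed $N_n$ infinite.)

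The genuine gap is your endgame. You conclude that every $x \in \bbX$ is sampled infinitely often, resting on the claim that a finitely-sampled point retains a non-degenerate posterior and hence an exact-KG value bounded away from zero. For a correlated GP this premise is false: the posterior variance at a never-sampled point can be driven to zero by observations elsewhere, the KG policy does not in general measure every alternative infinitely often, and its optimality proof does not require it. The paper's argument runs in the opposite direction. By martingale convergence there is a limiting posterior $(\mu_\infty, \Sigma_\infty)$; Lemma~A.7 of \cite{frazier2009knowledge} supplies the implication that is actually true---if $\alpha_{\mathrm{KG}}(x; \mu_\infty, \Sigma_\infty) > 0$ then $x$ was measured only \emph{finitely} often. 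Partitioning the sample space by the events $H_A$ that the limiting-positive-KG set equals $A$, the paper shows that on any $H_A$ with $A \neq \emptyset$ the exact-KG policy eventually strictly prefers $A$ while \OKG{} never again samples in $A$, so \OKG{} must disagree with exact KG at every late iteration---an event of probability zero by the concentration bound. Hence $\alpha_{\mathrm{KG}}(\,\cdot\,; \mu_\infty, \Sigma_\infty) \equiv 0$ a.s., and the conclusion follows not from pointwise consistency of the posterior mean but from the ``potential function'' lemma (Lemma~A.6 of \cite{frazier2009knowledge}, Lemma~3 of \cite{poloczek2017misokg}): zero KG everywhere at the limit implies $\argmax_{x} \mu_\infty(x) = \argmax_x f(x)$ a.s. Your closing paragraph correctly senses this difficulty, but your proposed fix---precluding correlation-driven variance collapse at unsampled points---points the wrong way; the repair is to abandon ``infinite exploration of every point'' entirely and argue via limiting KG values and the potential-function lemma, exactly as the paper does.
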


\subsection{Proofs}
\label{appdx:sec:Proofs}

In the following, we will denote by $\mu_{\calD}(x):= \bbE [ f(x) \mid \calD]$ and $K_{\calD}(x,y) := \bbE [ (f(x) - \bbE[f(x)]) (f(y) - \bbE[f(y)])^T \mid \calD]$ the posterior mean and covariance functions of $f$ conditioned on data $\calD$, respectively. Under some abuse of notation, we will use $\mu_{\calD}(\bx)$ and $K_{\calD}(\bx,\by)$ to denote multi point (vector / matrix)-valued variants of $\mu_\calD$ and $K_\calD$, respectively. 
If $f$ has a GP prior, then the posterior mean and covariance $\mu_{\calD}(\bx)$ and $K_{\calD}(\bx,\by)$ have well-known explicit expressions \citep{rasmussen2006gpml}.

For notational simplicity and without loss of generality, we will focus on single-output GP case ($m=1$) in this section. Indeed, in the multi-output case ($m>1$), we have a GP $f$ on $\bbX \times \mathbb{M}$ with $\mathbb{M} = \{1, \dotsc, m\}$, and  covariance function $(x_1, i_1), (x_2, i_2) \mapsto \tilde{K}((x_1, i_1), (x_2, i_2))$.  For $q=1$ we then define $x \mapsto \tilde{f}(x) := [f(x, 0), ..., f(x, m)]$, and then stack these for $q>1$: $\bx \mapsto [\tilde{f}(\bx_1)^T, ..., \tilde{f}(\bx_q)^T]^T$. Then the analysis in the proofs below can be done on $mq$-dimensional and $mq \times mq$-dimensional posterior mean and covariance matrices (instead of $q$ and $q \times q$ dimensional ones for $m=1$). Differentiability assumptions are needed only to establish certain boundedness results (e.g. in the proof of Proposition~\ref{prop:appdx:GeneralSAAresults:GP}), but $\mathbb{M}$ is finite, so we will require differentiability of $K(( \cdot ,i_1), ( \cdot ,i_2))$ for each $i_1$ and $i_2$. Assumptions on other quantities can be naturally extended (e.g. for Theorem~\ref{thm:OneShotKG:OneStep:Convergence} $g$ will need to be Lipschitz on $\bbR^{q\times m}$ rather than on $\bbR^q$, etc.).

\begin{proof}[\textbf{Proof of Proposition~\ref{prop:appdx:GeneralSAAresults:GP}}]
Without loss of generality, we may assume $m=1$ (the multi-output GP case follows immediately from applying the result below to $q' = qm$ and re-arranging the output). For a GP, we have $h_\calD(\bx, \epsilon) = \mu_\calD(\bx) + L_\calD(\bx)\epsilon$ with $\epsilon \sim \mathcal{N}(0, I_q)$, where $\mu_\calD(\bx)$ is the posterior mean and $L_\calD(\bx)$ is the Cholesky decomposition of the posterior covariance $M_\calD(\bx)$.
It is easy to verify from the classic GP inference equations \citep{rasmussen2006gpml} that if prior mean and covariance function are continuously differentiable, then so are posterior mean $\mu_\calD( \cdot )$ and covariance $K_\calD( \cdot )$. Since the Cholesky decomposition is also continuously differentiable \citep{murray2016diffcholesky}, so is~$L_\calD( \cdot )$.
%
As $\bbX$ is compact and $\mu_\calD( \cdot )$ and $L_\calD( \cdot )$ are continuously differentiable, their derivatives are bounded. 
It follows from the mean value theorem that there exist $C_\mu, C_L < \infty$ s.t. $\|\mu_\calD(\bx) - \mu_\calD(\by)\| \leq C_\mu \|\bx - \by\|$ and $\|(L_\calD(\bx) - L_\calD(\by))\epsilon\| \leq C_L \|\epsilon \| \|\bx - \by\|$. 
Thus,
\begin{align*}
    \|h_\calD(\bx, \epsilon) - h_\calD(\by, \epsilon)\| &= \|\mu_\calD(\bx) - \mu(\by) + (L_\calD(\bx) - L_\calD(\by))\epsilon\| \\
    &\leq \|\mu_\calD(\bx) - \mu_\calD(\by)\| + \|(L_\calD(\bx) - L_\calD(\by))\epsilon\| \\
    &\leq \ell_h(\epsilon) \|\bx - \by\|
\end{align*}
where $\ell_h(\epsilon) := C_\mu + C_L \|\epsilon\|$. Since, by assumption, $g( \cdot )$ and $a( \cdot ; \Phi)$ are Lipschitz (say with constants $L_a$ and $L_g$, respectively), it follows that $\|A(\bx, \epsilon) - A(\by, \epsilon)\| \leq L_a L_g \ell_h(\epsilon) \|\bx -\by \|$. 
It thus suffices to show that $\ell_h(\epsilon)$ is integrable. To see this, note that 
$|\ell_h(\epsilon)| \leq C_\mu + C_L C \textstyle\sum_i |\epsilon_i|$
for some $C < \infty$ (equivalence of norms), and that $\epsilon_i \sim \mathcal{N}(0, 1)$ is integrable.
\end{proof}

\begin{lemma}
\label{prop:appdx:GeneralSAAresults:GPMGF}
    Suppose that (i) $f$ is a GP with continuously differentiable prior mean and covariance function, and (ii) that $a( \cdot, \Phi)$ and $g( \cdot )$ are Lipschitz. Then, for all $\bx \in \bbX^q$ the moment generating functions $M^{\!A}_\bx(t) := \bbE[ e^{tA(\bx, \epsilon)}]$ of $A(\bx, \epsilon)$ and $M^\ell(t) := \bbE[ e^{t\ell(\epsilon)}]$ are finite for all $t \in \mathbb R$.
\end{lemma}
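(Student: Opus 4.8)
The plan is to reduce both moment generating functions to a single elementary fact about Gaussians: if $\epsilon \sim \mathcal{N}(0, I_q)$, then $\bbE[e^{s\|\epsilon\|}] < \infty$ for \emph{every} $s \in \bbR$. I would establish this first. Since the standard Gaussian density is proportional to $e^{-\|\epsilon\|^2/2}$, the relevant integrand behaves like $e^{s\|\epsilon\| - \|\epsilon\|^2/2}$ as $\|\epsilon\| \to \infty$, so the quadratic term dominates the linear one and the integral over $\bbR^q$ converges for any fixed $s$. (Equivalently, $\|\epsilon\|$ follows a chi distribution whose moment generating function is entire.) For $s \le 0$ the bound $e^{s\|\epsilon\|} \le 1$ makes this immediate; only $s > 0$ requires the decay argument. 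This one estimate drives everything else.

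For $M^\ell$, I would simply recall the explicit form of the dominating Lipschitz function $\ell$ constructed in the proof of Proposition~\ref{prop:appdx:GeneralSAAresults:GP}, namely $\ell(\epsilon) = L_a L_g\,(C_\mu + C_L\|\epsilon\|)$, which is affine in $\|\epsilon\|$. Writing $\ell(\epsilon) = c_1 + c_2\|\epsilon\|$ with constants $c_1, c_2 \ge 0$, one gets $M^\ell(t) = e^{t c_1}\,\bbE[e^{t c_2 \|\epsilon\|}]$, finite for all $t \in \bbR$ by the Gaussian fact applied with $s = t c_2$.

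For $M^{\!A}_\bx$, I would fix $\bx$ and exploit that $\epsilon \mapsto A(\bx, \epsilon) = a(g(h_\calD(\bx, \epsilon)))$ is Lipschitz in $\epsilon$. Since $h_\calD(\bx, \epsilon) = \mu_\calD(\bx) + L_\calD(\bx)\epsilon$ is affine in $\epsilon$, composing with the Lipschitz maps $g$ and $a$ (constants $L_g, L_a$) yields the \emph{two-sided} bound $|A(\bx, \epsilon)| \le |A(\bx, 0)| + L_a L_g \|L_\calD(\bx)\|\,\|\epsilon\|$, where $\|L_\calD(\bx)\|$ denotes the spectral norm of the (fixed, finite) matrix $L_\calD(\bx)$. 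Hence for any $t \in \bbR$,
\[
e^{t A(\bx, \epsilon)} \le e^{t A(\bx, 0)}\, e^{\,|t|\,L_a L_g \|L_\calD(\bx)\|\,\|\epsilon\|},
\]
and taking expectations gives finiteness for all $t$ by the same Gaussian fact.

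I do not anticipate a genuine obstacle here: the only real content is the integrability of $e^{s\|\epsilon\|}$ under the Gaussian measure, and all Lipschitz bounds are already assembled in the proof of Proposition~\ref{prop:appdx:GeneralSAAresults:GP}. The single point requiring care is negative $t$, which is why I would phrase the estimate on $A$ as a two-sided bound on $|A(\bx, \epsilon)|$ (so that $|t|$ appears in the exponent) rather than a one-sided upper bound; this ensures the argument of the Gaussian moment is of constant sign and the bound is uniform across the sign of $t$.
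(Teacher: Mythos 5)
Your proof is correct, but it follows a genuinely different route from the paper's. The paper also begins with Lipschitzness of $\epsilon \mapsto A(\bx,\epsilon)$ (with constant $\tilde{C}_L$, from the affine form $h_\calD(\bx,\epsilon)=\mu_\calD(\bx)+L_\calD(\bx)\epsilon$), but then invokes Gaussian concentration: the Tsirelson--Ibragimov--Sudakov inequality \citep[Theorem 5.5]{boucheron2013concentration} yields $\log M^{\!A}_\bx(t) \le t^2\tilde{C}_L^2/2 + t\,\bbE[A(\bx,\epsilon)]$, and $M^\ell$ is handled by observing that $\ell$ is itself Lipschitz in $\epsilon$ and applying the same theorem again. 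You avoid the concentration machinery entirely, reducing both MGFs to the elementary fact that $\bbE[e^{s\|\epsilon\|}]<\infty$ for every $s\in\bbR$ (the chi distribution has an entire MGF), via the pointwise two-sided bound $|A(\bx,\epsilon)| \le |A(\bx,0)| + L_aL_g\|L_\calD(\bx)\|\,\|\epsilon\|$ and the explicit affine form $\ell(\epsilon)=c_1+c_2\|\epsilon\|$; your care with the sign of $t$ (placing $|t|$ in the exponent rather than using a one-sided upper bound) is exactly the step that makes this work, and your inequality $tA(\bx,\epsilon)\le tA(\bx,0)+|t|\,L_aL_g\|L_\calD(\bx)\|\,\|\epsilon\|$ checks out. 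Both arguments deliver finiteness for all $t\in\bbR$, which is more than the neighborhood of zero required by Proposition~\ref{prop:appdx:GeneralSAAresults:Rate}; the paper's route buys a sharper quantitative conclusion (sub-Gaussianity of $A(\bx,\epsilon)$ with variance proxy $\tilde{C}_L^2$) and reuses a tool that recurs in the appendix (the Borell--TIS argument of Lemma~\ref{lem:appdx:Proofs:MGFsupffinite}), whereas yours is more self-contained and elementary, needing nothing beyond Gaussian tail decay. The only cosmetic point: like the paper, you should note the reduction to the single-output case $m=1$ (or write $\epsilon\sim\mathcal{N}(0,I_{qm})$) so that the affine form of $h_\calD$ applies verbatim.
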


\begin{proof}[\textbf{Proof of Lemma~\ref{prop:appdx:GeneralSAAresults:GPMGF}}]
Recall that $h_\calD(\bx, \epsilon) = \mu_\calD(\bx) + L_\calD(\bx) \epsilon$ for the case of $f$ being a GP, where $\mu_\calD(\bx)$ is the posterior mean and $L_\calD(\bx)$ is the Cholesky decomposition of the posterior covariance $K_\calD(\bx)$. Mirroring the argument from the proof of Proposition~\ref{prop:appdx:GeneralSAAresults:GP}, 
it is clear that $A(\bx, \epsilon)$ is Lipschitz in $\epsilon$ for each $\bx \in \bbX^q$, say with constant $\tilde{C}_L$. Note that this implies that $\bbE[|A(\bx, \epsilon)|] < \infty$ for all $\bx$. We can now appeal to results pertaining to the concentration of Lipschitz functions of Gaussian random variables: the Tsirelson-Ibragimov-Sudakov inequality \citep[Theorem 5.5]{boucheron2013concentration} implies that
\[
\log M^A_\bx(t) \le \frac{t^2 \tilde{C}_L^2}{2} + t \, \bbE[ A(\bx, \epsilon) ]
\]
for any $t \in \mathbb R$, which is clearly finite for all $t$ since $\bbE[A(\bx, \epsilon)] \leq \bbE[|A(\bx, \epsilon)|]$. 
From the proof of Proposition \ref{prop:appdx:GeneralSAAresults:GP}, we know that $A(\bx, \epsilon)$ is $\ell(\epsilon)$-Lipschitz in $\bx$, where $\ell(\epsilon)$ is itself Lipschitz in~$\epsilon$. Hence, the concentration result in Theorem 5.5 of \cite{boucheron2013concentration} applies again, and we are done.
\end{proof}

\begin{proof}[\textbf{Proof of Theorem~\ref{thm:OptimizeAcq:SAAConvergence}}]
    Under the stated assumptions, Lemma~\ref{prop:appdx:GeneralSAAresults:GP} ensures that condition~\eqref{eq:GeneralSAAresults:AlmostSure:LipschitsCond} in Theorem~\ref{thm:appdx:GeneralSAAresults:AlmostSure} holds. 
    Further, note that the argument about Lipschitzness of $A(\bx, \epsilon)$ in $\epsilon$ in the proof of Lemma~\ref{prop:appdx:GeneralSAAresults:GPMGF} implies that 
    $\bbE[|A(\bx, \epsilon)|] < \infty$ for all $\bx \in \bbX^q$. 
    Since the $\{\epsilon^i\}_{i=1}^N$ are i.i.d, the strong law of large numbers implies that $\hat{\alpha}_{\!N}(\bx) \rightarrow \alpha(\bx)$ a.s. for all $x\in \bbX$. Claims (1) and (2) then follow by applying Theorem~\ref{thm:appdx:GeneralSAAresults:AlmostSure}, and claim (3) follows by applying Proposition~\ref{prop:appdx:GeneralSAAresults:Rate}.
\end{proof}

\begin{proof}[\textbf{Proof of Theorem~\ref{thm:OptimizeAcq:SAAConvergence}(q)}]

    Mirroring the proof of Theorem~\ref{thm:OptimizeAcq:SAAConvergence}, we need to show that $\hat{\alpha}_{N_i}(\bx) \rightarrow \alpha(\mathbf{x})$ a.s. as $i\rightarrow \infty$ for all $\bx \in \bbX^q$. 
    For any $\bx \in \bbX^q$ and any $\epsilon_0 \in \bbR^q$, we have (by convexity and monotonicity of $|x| \mapsto |x|^2$ and the Lipschitz assumption on $a$ and $g$) that
    \begin{align*}
        |A(\bx,\epsilon)|^2
        &= |A(\bx,\epsilon_0) + A(\bx,\epsilon) - A(\bx,\epsilon_0)|^2 \\
        &\leq |A(\bx,\epsilon_0)|^2 + |A(\bx,\epsilon) - A(\bx,\epsilon_0)|^2 \\
        &\leq |A(\bx,\epsilon_0)|^2 + L_a^2 L_g^2 \|h_\calD(\bx, \epsilon) - h_\calD(\bx, \epsilon_0)\|^2 
    \end{align*}
    where $h_\calD(\bx, \epsilon) = \mu_\calD(\bx) + L_\calD(\bx) \Phi^{-1}(\epsilon)$ with $\Phi^{-1}$ the inverse CDF of $\mathcal{N}(0, 1)$, applied element-wise to the vector $\epsilon$ of qMC samples. 
    Now choose $\epsilon_0 = (0.5, \dotsc, 0.5)$, then
    \begin{align*}
        |A(\bx,\epsilon)|^2
        &\leq |a(g(0))|^2 + L_a^2 L_g^2 \|h_\calD(\bx, \epsilon)\|^2 
    \end{align*}
    Since the $\{\epsilon^i\}$ are generated by a nested uniform scramble, we know from \citet{owen1995randomlypermuted} that $\epsilon \sim U[0,1]^q$, and therefore $\Phi^{-1}(\epsilon) \sim \mathcal{N}(0, I_q)$. Since affine transformations of Gaussians remain Gaussian, we have that $\bbE\left[\|h_\calD(\bx, \epsilon)\|^2\right] < \infty$. This shows that $A(\bx,\epsilon) \in L^2([0,1]^q)$. That $\hat{\alpha}_{N_i}(\bx) \rightarrow 0$ a.s. as $i\rightarrow \infty$ for all $\bx \in \bbX^q$ now follows from \citet[Theorem~3]{owen2020rqmcslln}.
\end{proof}

\begin{lemma}
\label{lem:OneShotKG:OneStep:InnerSampling}
    If $f$ is a GP, then $f_{\calD_\bx}(x')= h(x', \bx, \epsilon, \epsilon_I)$, where $\epsilon\sim \mathcal{N}(0, I_q)$ and $\epsilon_I \sim \mathcal{N}(0, 1)$ are independent and $h$ is linear in both $\epsilon$ and $\epsilon_I$.
\end{lemma}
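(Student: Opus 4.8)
The plan is to construct $h$ by composing the two reparameterizations that the one-shot formulation already relies on---one for the fantasy observation $y_\calD(\bx)$ and one for the posterior sample of $f$ at $x'$---and to verify that each introduces its base variable linearly, with no interaction between the two. First I would invoke the GP reparameterization for the fantasy observations stated in the footnote of Section~\ref{sec:OneShotKG}: $y_\calD(\bx) = \mu_\calD(\bx) + L^\sigma_\calD(\bx)\,\epsilon$ with $\epsilon \sim \mathcal{N}(0, I_q)$ and $L^\sigma_\calD(\bx)$ a root decomposition of $\Sigma^\sigma_\calD(\bx) = K_\calD(\bx,\bx) + \Sigma^v(\bx)$. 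This already exhibits $y_\calD(\bx)$ as an affine (hence linear up to the constant $\mu_\calD(\bx)$) function of $\epsilon$.

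Second, I would write out the standard Gaussian conditioning formulas for the posterior of $f$ at $x'$ after augmenting $\calD$ with $(\bx, y_\calD(\bx))$. The posterior mean is
\[
\mu_{\calD_\bx}(x') = \mu_\calD(x') + K_\calD(x', \bx)\,[\Sigma^\sigma_\calD(\bx)]^{-1}\bigl(y_\calD(\bx) - \mu_\calD(\bx)\bigr),
\]
and substituting the first step gives $y_\calD(\bx) - \mu_\calD(\bx) = L^\sigma_\calD(\bx)\,\epsilon$, so $\mu_{\calD_\bx}(x')$ is affine in $\epsilon$. The structural fact I would emphasize is that the posterior covariance
\[
K_{\calD_\bx}(x', x') = K_\calD(x', x') - K_\calD(x', \bx)\,[\Sigma^\sigma_\calD(\bx)]^{-1}\,K_\calD(\bx, x')
\]
does not depend on the realized values $y_\calD(\bx)$, and therefore not on $\epsilon$: it is a deterministic function of $(\bx, x')$.

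Third, I would apply the reparameterization trick once more at $x'$. Conditional on $\calD_\bx$, $f_{\calD_\bx}(x') = \mu_{\calD_\bx}(x') + L_{\calD_\bx}(x')\,\epsilon_I$ with $\epsilon_I \sim \mathcal{N}(0,1)$ drawn independently of $\epsilon$ (the fantasy noise and the posterior-sampling noise are separate draws in the hierarchical sampling scheme), and $L_{\calD_\bx}(x')$ a root of $K_{\calD_\bx}(x', x')$. Combining the three displays yields
\[
h(x', \bx, \epsilon, \epsilon_I) = \mu_\calD(x') + K_\calD(x', \bx)\,[\Sigma^\sigma_\calD(\bx)]^{-1}\,L^\sigma_\calD(\bx)\,\epsilon + L_{\calD_\bx}(x')\,\epsilon_I.
\]
Since $L_{\calD_\bx}(x')$ is deterministic given $(\bx, x')$, there is no bilinear $\epsilon\,\epsilon_I$ term, so $h$ is linear in $\epsilon$ and linear in $\epsilon_I$ separately, as claimed.

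The main obstacle is the second step: one must be certain that the posterior variance at $x'$ is independent of the fantasy outcome. This is the standard but essential feature of Gaussian conditioning that decouples $\epsilon_I$ from $\epsilon$ and is exactly what guarantees genuine (non-bilinear) linearity in $\epsilon_I$. The remaining points require only routine care: the independence of $\epsilon$ and $\epsilon_I$, which follows from the two draws being separate stages of the hierarchical sampling, and the reduction from the single-output case to $m > 1$, which proceeds by the stacking construction described in the preamble to the proofs, replacing the scalar posterior quantities with their $mq$-dimensional matrix analogues.
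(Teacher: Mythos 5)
Your proposal is correct and takes essentially the same route as the paper's proof: both reparameterize the fantasy observation as $\mu_\calD(\bx) + L^\sigma_\calD(\bx)\epsilon$, invoke Gaussian conditioning together with the key structural fact that the updated posterior covariance at $x'$ does not depend on the realized $y_\calD(\bx)$ (hence not on $\epsilon$), and arrive at exactly the paper's expression $h(x', \bx, \epsilon, \epsilon_I) = \mu_\calD(x') + K_\calD(x',\bx)\,K^\sigma_\calD(\bx)^{-1} L^\sigma_\calD(\bx)\,\epsilon + L^\sigma_{\calD_\bx}(x')\,\epsilon_I$, linear in each base variable with no bilinear cross term. The differences are purely cosmetic (your $[\Sigma^\sigma_\calD(\bx)]^{-1}$ is the paper's $K^\sigma_\calD(\bx)^{-1}$, and your $K_\calD(x',x')$ in the posterior covariance formula is, if anything, slightly more careful than the paper's $K(x',x')$).
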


\begin{proof}[\textbf{Proof of Lemma~\ref{lem:OneShotKG:OneStep:InnerSampling}}]
This essentially follows from the property of a GP that the covariance conditioned on a new observation $(x, y)$ is independent of~$y$.\footnote{In some cases we may consider constructing a heteroskedastic noise model that results in the function $\sigma^2(\bx)$ changing depending on observations $y$, in which case this argument does not hold true anymore. We will not consider this case further here.} 
We can write $f_{\calD_\bx}(x') = \mu_{\calD_\bx}(x') + L_{\calD_\bx}^\sigma(x')\epsilon_I$ , where 
\begin{align*}
\mu_{\calD_\bx}(x') := \mu_\calD(x') + K_\calD(x'\!,\bx)  K_\calD^\sigma(\bx)^{-1}L_\calD^\sigma(\bx)\epsilon,
\end{align*}
$L_{\calD}^\sigma(\bx)$ is the Cholesky decomposition of $K_{\calD}^\sigma(\bx):= K_{\calD}(\bx,\bx) + \text{diag}(\sigma^2(\bx_1), \dotsc, \sigma^2(\bx_q))$, and $L_{\calD_\bx}^\sigma(x')$ is the Cholesky decomposition of 
\begin{align*}
K_{\calD_\bx}(x', x') := K(x', x') - K_\calD(x', \bx) K_\calD^\sigma(\bx)^{-1} K_\calD(\bx, x').
\end{align*}
Hence, we see that $f_{\calD_\bx}(x')= h(x', \bx, \epsilon, \epsilon_I)$, with
\begin{align}
\label{eq:OneShotKG:OneStep:InnerSampling:h}
h(x', \bx, \epsilon, \epsilon_I) &=
\mu_\calD(x') + K_\calD(x'\!,\bx)  K_\calD^\sigma(\bx)^{-1}L_\calD^\sigma(\bx)\epsilon
+ L_{\calD_\bx}^\sigma(x')\epsilon_I,
\end{align}
which completes the argument.
\end{proof}

\begin{theorem}
\label{appdx:thm:OneShotKG:AsymptoticOptimality:Bect}
    Let $(a_n)_{n\geq 1}$ be a sequence of non-negative real numbers such that $a_n \rightarrow 0$. Suppose that (i) $\bbX$ is a compact metric space, (ii) $f$ is a GP with continuous sample paths and continuous variance function $x \mapsto \sigma^2(x)$, and (iii) $(\bx_n)_{n\geq 1}$ is such that $\alpha^n_{\mathrm{KG}}(\bx_n) > \sup_{\bx \in \bbX^q} \alpha^n_{\mathrm{KG}}(\bx) - a_n$ infinitely often almost surely. Then $\alpha^n_{\mathrm{KG}}(\bx) \rightarrow 0$ a.s. for all $\bx \in \bbX^q$.
\end{theorem}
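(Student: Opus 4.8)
The plan is to follow the supermartingale strategy of \citet{bect2019supermartingaleGP}, specialized to the KG functional. The central object is the process $G_n := \bbE[\sup_{x\in\bbX} f(x) \mid \calD_n] - \max_{x\in\bbX}\mu_{\calD_n}(x)$. First I would check that $G_n \ge 0$ by Jensen's inequality (the maximum of the posterior means is dominated by the posterior expectation of the supremum) and that $\sup_x f(x)$ is integrable, which holds by the Borell--TIS inequality since $f$ has continuous sample paths on the compact set $\bbX$. The Doob martingale $\bbE[\sup_x f(x)\mid\calD_n]$ has zero conditional increments, so a short computation gives $G_n - \bbE[G_{n+1}\mid\calD_n] = \bbE[\max_{x'}\mu_{\calD_{n+1}}(x')\mid\calD_n] - \max_{x'}\mu_{\calD_n}(x') = \alpha^n_{\mathrm{KG}}(\bx_n)$, precisely the KG value at the point $\bx_n$ actually measured (here $\calD_{n+1}=\calD_n\cup\{\bx_n,y(\bx_n)\}$). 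Since $\alpha^n_{\mathrm{KG}}\ge 0$, this exhibits $G_n$ as a non-negative supermartingale.

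Next I would invoke the supermartingale convergence theorem to get $G_n \to G_\infty$ a.s., and observe that the increasing predictable part of its Doob decomposition, $A_n = \sum_{k<n}\alpha^k_{\mathrm{KG}}(\bx_k)$, satisfies $\bbE[A_\infty]\le\bbE[G_0]<\infty$ and hence is finite a.s. Consequently $\sum_n \alpha^n_{\mathrm{KG}}(\bx_n) < \infty$, so $\alpha^n_{\mathrm{KG}}(\bx_n)\to 0$ almost surely along the \emph{full} sequence. Feeding in the near-greedy hypothesis (iii), along the random subsequence $(n_k)$ on which $\bx_{n_k}$ is $a_{n_k}$-optimal we obtain $\sup_{\bx}\alpha^{n_k}_{\mathrm{KG}}(\bx) \le \alpha^{n_k}_{\mathrm{KG}}(\bx_{n_k}) + a_{n_k} \to 0$, since $a_n\to 0$; in particular $\alpha^{n_k}_{\mathrm{KG}}(\bx)\to 0$ for each fixed $\bx$.

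Only a subsequential statement is in hand at this stage, so the remaining work is to upgrade it to the full sequence. Here I would show that the posterior itself converges: the posterior mean $\mu_{\calD_n}(x) = \bbE[f(x)\mid\calD_n]$ is an $L^1$-bounded martingale and thus converges a.s. for each $x$, while the posterior covariance $K_{\calD_n}$ depends only on the measured \emph{locations} and is non-increasing in the Loewner order, so it converges as well. Using compactness of $\bbX$ together with the regularity of the prior mean and covariance, one obtains a modulus of continuity for $\{\mu_{\calD_n}\}$ and $\{K_{\calD_n}\}$ that is uniform in $n$; Arzel\`a--Ascoli then promotes the pointwise limits to uniform ones. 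Since $\bx\mapsto\alpha^n_{\mathrm{KG}}(\bx)$ is a continuous functional of $(\mu_{\calD_n},K_{\calD_n})$ (an expectation of a maximum of Gaussian-reparameterized quantities, continuous by dominated convergence), it follows that $\alpha^n_{\mathrm{KG}}(\bx)\to\alpha^\infty_{\mathrm{KG}}(\bx)$ for every fixed $\bx$ along the full sequence. This full-sequence limit must coincide with the subsequential limit $0$ found above, giving $\alpha^n_{\mathrm{KG}}(\bx)\to 0$ a.s. for all $\bx$.

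The main obstacle is exactly this last step: establishing that $\lim_n\alpha^n_{\mathrm{KG}}(\bx)$ genuinely exists, i.e.\ the joint uniform convergence of posterior mean and covariance on $\bbX$ together with continuity of the expected-max KG functional with respect to that convergence. The martingale and monotonicity arguments deliver only \emph{pointwise} convergence cheaply; the real effort lies in the uniform (equicontinuity) control over $\bbX$, driven by the kernel, that licenses interchanging the limit with the inner $\max_{x'}$ and the outer expectation. Integrability of $\sup_x f(x)$ and uniform integrability of $G_n$ (needed for $L^1$ convergence and the Doob decomposition) are secondary technical points, both handled by the Borell--TIS bound.
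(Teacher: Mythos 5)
Your supermartingale skeleton is sound, and it is in fact the same strategy as the paper's: defining $G_n := \bbE[\sup_x f(x) \mid \calD_n] - \max_x \mu_{\calD_n}(x)$, identifying $\alpha^n_{\mathrm{KG}}(\bx_n)$ as its expected decrement, and deducing $\sum_n \alpha^n_{\mathrm{KG}}(\bx_n) < \infty$ is exactly the machinery of \citet{bect2019supermartingaleGP}. But the paper does not reprove any of this: its entire written proof consists of citing Bect et al.\ for $q=1$ and supplying the one ingredient that does not generalize automatically to $q>1$, namely the multi-point analogue of condition (c) of their Definition~3.18 (step (f) of their Theorem~4.8) --- showing that $\bbE[\max\{0, W_{\bx,y}\}] = 0$ with $W_{\bx,y} := \bbE[f(y) \mid Z_\bx] - \bbE[f(x^*) \mid Z_\bx]$ forces $\mathrm{Var}(W_{\bx,y}) = 0$, hence $K(y,\bx) = K(x^*,\bx)$ for all $y, \bx$, hence constant sample paths of $f - \mu$. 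Your outline never engages with the $q>1$ issue at all. It is true that for the literal conclusion $\alpha^n_{\mathrm{KG}}(\bx) \to 0$ your route could in principle dispense with condition (c); but only at the price of taking on the regularity conditions (convergence of the gain functional along the sequence of posteriors) that the paper imports wholesale from Bect et al., and those are precisely the steps you assert rather than prove.

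Concretely, there are two gaps in your final step. First, the equicontinuity argument is backwards: a uniform-in-$n$ modulus of continuity for $\mu_{\calD_n}$ does not follow from regularity of the prior --- with data-dependent designs the interpolation weights $(K(X_n,X_n)+\Sigma_n)^{-1}(y_n - \mu(X_n))$ can grow with $n$, and a.s.\ equicontinuity of $\{\mu_{\calD_n}\}$ is, by the converse of Arzel\`a--Ascoli, essentially equivalent to the uniform convergence you are trying to establish. The standard fix is different: since $f$ has continuous paths on compact $\bbX$, the Borell--TIS inequality gives $\bbE \|f\|_\infty < \infty$ (and likewise for $f \otimes f$ on $\bbX \times \bbX$), so $\mu_{\calD_n} = \bbE[f \mid \calD_n]$ and the second-moment function are \emph{closed} Banach-space-valued Doob martingales in $C(\bbX)$ and $C(\bbX \times \bbX)$, which converge uniformly a.s.\ with no equicontinuity estimate needed. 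Second, ``continuous by dominated convergence'' is not enough for the KG functional: the theorem only assumes $\sigma^2(\cdot)$ continuous, so $K^\sigma_{\calD_n}(\bx) = K_{\calD_n}(\bx,\bx) + \Sigma(\bx)$ can be, or become in the limit, singular (noiseless observations, duplicate coordinates in $\bx$, repeated sampling of the same location), and the fantasy-update coefficient $K_{\calD_n}(\cdot,\bx) K^\sigma_{\calD_n}(\bx)^{-1} L^\sigma_{\calD_n}(\bx)$ then involves a degenerating inverse; establishing $\alpha^n_{\mathrm{KG}}(\bx) \to \alpha^\infty_{\mathrm{KG}}(\bx)$ through such degeneracies (cf.\ the indicator $\mathbbm{1}_{\{K^\sigma(\bx) \succ 0\}}$ appearing in the paper's own computation) is the technical heart of Bect et al.'s Theorem~4.8, and for $q>1$ the degeneracy is matrix-valued rather than scalar. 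Until these two steps are supplied, what you have is a correct outline of the known strategy, not a proof.
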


\begin{proof}[\textbf{Proof of Theorem~\ref{appdx:thm:OneShotKG:AsymptoticOptimality:Bect}}]
    \citet{bect2019supermartingaleGP} provide a proof for the case~$q=1$. Following their exposition, one finds that the only thing that needs to be verified in order to generalize their results to $q>1$ is that condition (c) in their Definition~3.18 holds also for the case $q>1$. What follows is the multi-point analogue of step (f) in the proof of their Theorem~4.8, which establishes this. 
    
    Let $\mu: \bbX \rightarrow \bbR$ and $K: \bbX \times \bbX \rightarrow \bbR_+$ denote mean and covariance function of~$f$. Let $Z_\bx:= f(\bx) + \text{diag}(\sigma(\bx))$, where $\sigma(\bx):= (\sigma(\bx_1), \dotsc, \sigma(\bx_q))$, with $\epsilon \sim \mathcal{N}(0, I_d)$ independent of~$f$. Moreover, let $x^* \in \argmax \mu(x)$.
    Following the same argument as \citet{bect2019supermartingaleGP}, 
    we arrive at the intermediate conclusion that $\bbE[ \max\{0, W_{\bx, y}\} ] = 0$, where 
    $W_{\bx, y} := \bbE[f(y) \!\mid\! Z_\bx] - \bbE[f(x^*) \!\mid\! Z_\bx]$. We need to show that this implies that $\max_{x\in \bbX}f(x) = m(x^*)$.

    Under some abuse of notation we will use $\mu$ and $K$ also as the vector / matrix-valued mean / kernel function.  
    Let $K^\sigma(\bx):= K(\bx, \bx) + \text{diag}(\sigma(\bx))$ and observe that
    \begin{align*}
        W_{\bx, y} = \mu(y) - \mu(x^*) + \mathbbm{1}_{\{C(\bx) \succ 0\}} (K(y,\bx) - K(x^*,\bx))K^\sigma(\bx)^{-1}(Z_\bx - \mu(\bx)),
    \end{align*}
    i.e., $W_{\bx, y}$ is Gaussian with $\text{Var}(W_{\bx, y}) = V(\bx, y, x^*)V(\bx, y, x^*)^T$, where $V(\bx, y, x^*) := (K(y,\bx) - K(x^*,\bx)) K^\sigma(\bx)^{-1}$.
    Since $\bbE[ \max\{0, W_{\bx, y}\} ] = 0$, we must have that $\text{Var}(W_{\bx, y}) = 0$. If $K^\sigma(\bx) \succ 0$, this means that $(K(y,\bx) - K(x^*,\bx)) = 0_q$. But if $K^\sigma(\bx)\not\succ 0$, then $K(\bx, \bx) \not \succ 0$, which in turn implies that $K(y,\bx) = K(x^*,\bx) = 0_q$. This shows that $K(y,\bx) = K(x^*,\bx)$ for all $y\in \bbX$ and all $\bx \in \bbX$. In particular, $K(x, y) = K(x, x^*)$ for all $y\in \bbX$. 
    Thus, $K(x,x) - K(x, y) = K(x, x^*) - K(x, x^*)$ for all $y, x \in \bbX$, and therefore $\text{Var}(f(x)- f(y)) = K(x, x) - K(x,y) - K(y, x) + K(y,y) = 0$. 
    As in \cite{bect2019supermartingaleGP} we can conclude that this means that the sample paths of $f - \mu$ are constant over~$\bbX$, and therefore $\max_{x\in\bbX} f(x) = m(x^*)$.
\end{proof}

\begin{proof}[\textbf{Proof of Theorem~\ref{thm:OneShotKG:OneStep:Convergence}}]
From Lemma~\ref{lem:OneShotKG:OneStep:InnerSampling} we have that $f_{\calD_\bx}(x')= h(x', \bx, \epsilon, \epsilon_I)$ with $h$ as in~\eqref{eq:OneShotKG:OneStep:InnerSampling:h}.
Without loss of generality, we can absorb $\epsilon_I$ into $\epsilon$ for the purposes of showing that condition~\eqref{eq:GeneralSAAresults:AlmostSure:LipschitsCond}
holds for the mapping $A_{\mathrm{KG}}(\bx, \epsilon) :=
\max_{x' \in \bbX} \bbE \bigl[ g(f(x')) \, | \, \calD_{\bx} \bigr]$.
Since the affine (and thus, continuously differentiable) transformation $g$ 
preserves the necessary continuity and differentiability properties, we can follow the same argument as in the proof of Theorem~1 of \cite{wu2017discretizationfree}. In particular, using continuous differentiability of GP mean and covariance function, compactness of~$\bbX$, and continuous differentiability of $g$, we can apply the envelope theorem in the same fashion. From this, it follows that for any $\epsilon \in \bbR^q$ and for each $1\leq l \leq q, 1\leq k \leq d$, the restriction of $\bx \mapsto A_{\mathrm{KG}}(\bx, \epsilon)$ to the $k,l$-th coordinate is absolutely continuous for all $\bx$, thus the partial derivative $\partial_{\bx_{lk}} A_{\mathrm{KG}}(\bx, \epsilon)$ exists a.e. Further, for each $l$ there exist $\Lambda_l \in \mathbb{R}^q$ with $\|\Lambda_l\| < \infty$ s.t. $|\partial_{\bx_{kl}} A_{\mathrm{KG}}(\bx, \epsilon)| \leq \Lambda_l^T |\varepsilon|$ a.e. on $\mathbb{X}^q$ (here $| \cdot |$ denotes the element-wise absolute value of a vector). 
This uniform bound on the partial derivatives can be used to show that $A_{\mathrm{KG}}$ is $\ell(\epsilon)$-Lipschitz. Indeed, writing the  difference $A_{\mathrm{KG}}(\by,\epsilon) - A_{\mathrm{KG}}(\bx,\epsilon)$ as a sum of differences in each of the $qd$ components of $\bx$ and $\by$, respectively, using the triangle inequality, absolute continuity of the element-wise restrictions, and uniform bound on the partial derivatives, we have that
\begin{align*}
    |A_{\mathrm{KG}}(\by,\epsilon) - A_{\mathrm{KG}}(\bx,\epsilon)| 
    \leq \sum_{k=1}^q \sum_{l=1}^d \Lambda_l^T|\epsilon| |\by_{kl} - \bx_{kl}| 
    \leq \max_{1\leq l \leq d} \left\{\Lambda_l^T|\epsilon|\right\} \|\by - \bx\|_1
\end{align*}
and so $\ell(\epsilon) = \max_l \{\Lambda_l^T|\epsilon|\}$. Going back to viewing $\epsilon$ as a random variable, it is straightforward to verify that $\ell(\epsilon)$ is integrable. Indeed, 
\begin{align*}
    \mathbb{E}[|\ell(\epsilon)|] &\leq \max_{l} \left\{ \textstyle\sum_{k=1}^q \Lambda_{lk} \bbE[|\epsilon_k|] \right\} 
    = \sqrt{2/\pi} \max_{l} \left\{ \|\Lambda_{l}\|_1\right\}. 
\end{align*}
Since $g$ is assumed to be affine in (iii), we can apply Lemma \ref{lem:OneShotKG:OneStep:InnerSampling} to see that $\bbE[ g(f(x')) \, | \, \calD_{\bx}]$ is a GP. Therefore, $A_{\mathrm{KG}}(\bx, \epsilon)$ represents the maximum of a GP and its moment generating function $\bbE [ \,  e^{t A_{\mathrm{KG}}(\bx, \epsilon)} ]$ is finite for all $t$ by Lemma \ref{lem:appdx:Proofs:MGFsupffinite}. This implies finiteness of its absolute moments \citep[Exercise 9.15]{meyer2012essential} and we have that $\bbE[|A_{\mathrm{KG}}(\bx, \epsilon)|] <\infty$ for all $\bx \in \bbX$.
Since the $\{\epsilon^i\}$ are i.i.d, the strong law of large numbers ensures that $\hat{\alpha}_{\mathrm{KG},N}(\bx) \rightarrow \alpha_{\mathrm{KG}}(\bx)$ a.s. Theorem \ref{thm:appdx:GeneralSAAresults:AlmostSure} now applies to obtain (1) and (2).

Moreover, by the analysis above, it holds that
\[
\ell(\epsilon) = \max_l \{\Lambda_l^T|\epsilon|\} \le q \max_l \| \Lambda_l^T \|_\infty \,  \| \epsilon \|_\infty =: \ell'(\epsilon),
\]
so $\ell'(\epsilon)$ is also a Lipschitz constant for $A_{\mathrm{KG}}(\cdot, \epsilon)$. Here, the absolute value version (the second result) of Lemma \ref{lem:appdx:Proofs:MGFsupffinite} applies, so we have that $\bbE [ \,  e^{t \ell'(\epsilon)} ]$ is finite for all $t$. The conditions of Proposition \ref{prop:appdx:GeneralSAAresults:Rate} are now satisfied and we have the desired conclusion.
\end{proof}

\begin{proof}[\textbf{Proof of Theorem~\ref{thm:OneShotKG:OneStep:Convergence}(q)}]
In the RQMC setting, we have by \citet{owen1995randomlypermuted} that $\epsilon \sim U[0,1]^q$. Therefore, we are now interested in examining $\tilde{A}_{\mathrm{KG}}(\bx,\epsilon) := A_{\mathrm{KG}}(\bx,\Phi^{-1}(\epsilon))$,
since $\Phi^{-1}(\epsilon) \sim \mathcal{N}(0, I_q)$. Following the same analysis as in the proof of Theorem \ref{thm:OneShotKG:OneStep:Convergence}, we have Lipschitzness of $\tilde{A}_{\mathrm{KG}}(\cdot, \epsilon)$:
\begin{align*}
    |\tilde{A}_{\mathrm{KG}}(\by,\epsilon) - \tilde{A}_{\mathrm{KG}}(\bx,\epsilon)|
    \leq \ell(\Phi^{-1}(\epsilon)) \|\by - \bx\|_1,
\end{align*}
where $\ell(\cdot)$ is as defined in the proof of Theorem \ref{thm:OneShotKG:OneStep:Convergence}. As before, $ \ell(\Phi^{-1}(\epsilon))$ is integrable. Like in the proof of Theorem \ref{thm:OneShotKG:OneStep:Convergence}, $\tilde{A}_{\mathrm{KG}}(\bx, \epsilon)$ is the maximum of a GP and its moment generating function $\bbE [ \,  e^{t \tilde{A}_{\mathrm{KG}}(\bx, \epsilon)} ]$ is finite for all $t$ by Lemma \ref{lem:appdx:Proofs:MGFsupffinite}, implying finiteness of its second moment: $\bbE[\tilde{A}_{\mathrm{KG}}(\bx, \epsilon)^2] <\infty$ for all $\bx \in \bbX$. Thus, that $\tilde{A}_{\mathrm{KG}}(\bx,\epsilon) \in L^2([0,1]^q)$ and $\hat{\alpha}_{\mathrm{KG}, N_i}(\bx) \rightarrow \alpha_{\mathrm{KG}}(\bx)$ a.s. as $i\rightarrow \infty$ for all $\bx \in \bbX^q$ follows from \citet[Theorem~3]{owen2020rqmcslln}. Theorem \ref{thm:appdx:GeneralSAAresults:AlmostSure} now allows us to conclude (1) and (2).
\end{proof}

The following Lemma will be used to prove Theorem~\ref{thm:OneShotKG:AsymptoticOptimality:OKG}:

\begin{lemma}
\label{lem:OneShotKG:OneStep:ContSamplePath}
    Consider a Gaussian Process $f$ on $\bbX \subset \bbR^d$ with covariance function $K( \cdot , \cdot ): \bbX \times \bbX \rightarrow \bbR$. Suppose that (i) $\bbX$ is compact, and (ii) $K$ is continuously differentiable. Then $f$ has continuous sample paths.
\end{lemma}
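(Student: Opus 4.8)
The plan is to reduce sample-path continuity to a quantitative bound on the variance of the increments of $f$ and then invoke a standard Gaussian continuity criterion. Define the increment-variance function
\[
\rho(x,y) := \bbE\bigl[(f(x) - f(y))^2\bigr] = K(x,x) - 2K(x,y) + K(y,y).
\]
We may take $f$ centered without loss of generality, since a (continuous) mean contributes only a continuous deterministic term and does not affect path continuity. The central observation is that $\rho$ inherits continuous differentiability on $\bbX \times \bbX$ from $K$, and that $\rho(x,x) = 0$ for every $x$.

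First I would establish the key increment bound $\rho(x,y) \le L\,\|x-y\|$ for some finite constant $L$. Because $K$ is $C^1$ and $\bbX\times\bbX$ is compact, $\nabla\rho$ is bounded, so $\rho$ is Lipschitz on $\bbX\times\bbX$ with some constant $L$. Holding the first argument fixed and using $\rho(x,x)=0$ then yields $\rho(x,y) = \rho(x,y) - \rho(x,x) \le L\,\|x-y\|$. Here compactness is exactly what turns the continuity of $\nabla\rho$ into a \emph{uniform} Lipschitz constant.

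Next I would translate this into a moment bound on increments and conclude. Since $f(x)-f(y)$ is a centered Gaussian with variance $\rho(x,y)$, for every even integer $p$ we have $\bbE[|f(x)-f(y)|^p] = c_p\,\rho(x,y)^{p/2} \le c_p L^{p/2}\,\|x-y\|^{p/2}$, where $c_p$ is the $p$-th absolute moment of a standard normal. Choosing $p > 2d$ gives a Kolmogorov--Chentsov increment bound $\bbE[|f(x)-f(y)|^p] \le C\,\|x-y\|^{d+\gamma}$ with $\gamma = p/2 - d > 0$, so the Kolmogorov continuity theorem produces a modification of $f$ with (Hölder-)continuous sample paths on the compact set $\bbX$; as is conventional in the GP literature, $f$ is identified with this continuous version. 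Alternatively, one can bound the covering numbers of $\bbX$ in the canonical metric $d(x,y)=\sqrt{\rho(x,y)} \le \sqrt{L}\,\|x-y\|^{1/2}$ and check that Dudley's entropy integral $\int_0^\infty \sqrt{\log N(\bbX,d,\epsilon)}\,d\epsilon$ converges, giving a.s. continuity directly.

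The main obstacle is the second step: extracting a usable increment bound from only $C^1$ smoothness of $K$ rather than $C^2$. The resolution is that $\rho$ vanishes on the diagonal, so the Lipschitz property of $\rho$ --- obtained essentially for free from $C^1$ regularity plus compactness --- already delivers the linear bound $\rho(x,y)\lesssim\|x-y\|$, which comfortably satisfies the required exponent $p/2 > d$. The only further point needing care is the standard distinction between a process and its continuous modification, which is harmless here because all downstream uses of the lemma (e.g. in Theorem~\ref{appdx:thm:OneShotKG:AsymptoticOptimality:Bect}) concern the continuous version of the GP.
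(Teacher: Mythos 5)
Your proof is correct, and it shares its decisive estimate with the paper's proof: both arguments use compactness plus $C^1$ regularity of $K$ to get a uniform Lipschitz bound on the increment variance, $\bbE\,|f(x)-f(y)|^2 = K(x,x) - 2K(x,y) + K(y,y) \le 2L\,\|x-y\|$ (the paper splits this as $|K(x,x)-K(x,y)| + |K(y,y)-K(x,y)|$ rather than going through Lipschitzness of $\rho$ itself, but the content is identical). Where you genuinely diverge is the closing criterion. The paper stays entirely at the level of second moments: it checks that the linear bound $2L\,\|x-y\|$ is eventually dominated by $C'\,|\log\|x-y\||^{-(1+\eta)}$ on the bounded set $\bbX$ and then invokes the Gaussian-specific continuity criterion of Theorem 3.4.1 in \citet{adler2010gemetryrandom}. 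You instead exploit Gaussian moment scaling, $\bbE\,|f(x)-f(y)|^p = c_p\,\rho(x,y)^{p/2}$, to lift the variance bound into a Kolmogorov--Chentsov condition with exponent $p/2 > d$. Your route is more self-contained in the sense that it needs only the generic Kolmogorov continuity theorem rather than a Gaussian field result, and it yields H\"older continuity of any order below $1/2$ as a free byproduct; its small cost is that Kolmogorov--Chentsov is standardly stated for cubes, so for an arbitrary compact $\bbX \subset \bbR^d$ one should either extend the argument or fall back on your Dudley alternative, which handles general compact index sets cleanly (and whose entropy integral trivially converges here since $d(x,y) \le \sqrt{L}\,\|x-y\|^{1/2}$ gives polynomial covering numbers). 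The paper's route avoids any exponent bookkeeping --- a linear bound overwhelmingly satisfies the logarithmic condition --- at the cost of citing a less elementary result. Your remarks on centering the mean and on identifying $f$ with its continuous modification apply equally to the paper's argument (Adler's theorem likewise produces a continuous version), so neither constitutes a gap relative to what the paper proves.
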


\begin{proof}[Proof of Lemma~\ref{lem:OneShotKG:OneStep:ContSamplePath}]
    Since $K$ is continuously differentiable and~$\bbX$ is compact, $K$ is Lipschitz on $\bbX \times \bbX$, i.e.,  $\exists\, L < \infty$ such that $|K(x,y) - K(x',y')|\leq L \bigl(\|x - x'\| + \|y -y'\|\bigr)$ for all $(x, y), (x', y') \in \bbX \times \bbX$.
    Thus
    \begin{align*}
        \bbE |f(x) &- f(x)|^2 = K(x,x) - 2 K(x,y) + K(y,y)  \\
        &\leq |K(x,x) - K(x,y)| + |K(y,y) - K(x,y)| \\
        &\leq 2L \|x - y\|
    \end{align*}
    Since $\bbX$ is compact, there exists $C:= \max_{x, y\in \bbX}\|x - y\| < \infty$. With this it is easy to verify that there exist $C' < \infty$ and $\eta > 0$ such that $2L \|x - y\| < C' |\log \|x - y\||^{-(1+\eta)}$ for all $x,y \in \bbX$. 
    Continuity of the sample paths then follows from Theorem 3.4.1 in \cite{adler2010gemetryrandom}.
\end{proof}

\begin{proof}[\textbf{Proof of Theorem~\ref{thm:OneShotKG:AsymptoticOptimality:OKG}}]
    From Lemma~\ref{lem:OneShotKG:OneStep:ContSamplePath} we know that the GP has continuous sample paths. 
    If $\bx_{n+1} \in \argmax_{\bx\in \bbX^q} \hat{\alpha}^n_{\mathrm{KG}, N_n}(\bx)$ for all $n$, the almost sure convergence of $\hat{\bx}^n_{\mathrm{KG}, N_n}$ to the set of optimizers of $\alpha^n_{\mathrm{KG}}$ from Theorem~\ref{thm:OneShotKG:OneStep:Convergence} together with continuity of~$\alpha^n_{\mathrm{KG}}$ (established in the proof of Theorem~\ref{thm:OneShotKG:OneStep:Convergence}) implies that for all $\delta > 0$ and each~$n \geq 1$, $\exists \, N_n < \infty$ such that $\alpha^n_{\mathrm{KG}}(\bx_{n+1}) > \sup_{\bx \in \bbX^q} \alpha^n_{\mathrm{KG}}(\bx) - \delta$. As $\limsup_n N_n = \infty$, $\exists \, (a_n)_{n\geq 1}$ with $a_n \rightarrow 0$ such that $\alpha^n_{\mathrm{KG}}(\bx_{n+1}) > \sup_{\bx \in \bbX^q} \alpha^n_{\mathrm{KG}}(\bx) - a_n$ infinitely often. That $\alpha^n_{\mathrm{KG}}(\bx) \rightarrow 0$ a.s. for all $\bx \in \bbX^q$ then follows from Theorem~\ref{appdx:thm:OneShotKG:AsymptoticOptimality:Bect}.
    The convergence result for $f(\chi_n)$ then follows directly from Proposition~4.9 in \cite{bect2019supermartingaleGP}.
\end{proof}

\begin{lemma}
\label{lem:appdx:Proofs:MGFsupffinite}
Let $f$ be a mean zero GP defined on $\mathbb X$ such that $|f(x)| < \infty$ almost surely for each $x \in \mathbb X$. It holds that the moment generating functions of $\sup_{x \in \mathbb X} f(x)$ and $\sup_{x \in \mathbb X} |f(x)|$ are both finite, i.e., 
\begin{equation*}
\bbE \bigl [  e^{t \sup_{x\in\mathbb X} f(x)}  \bigr ] < \infty \quad \text{and} \quad \bbE \bigl [  e^{t \sup_{x\in\mathbb X} |f(x)|}  \bigr ] < \infty
\end{equation*}
for any $t \in \mathbb R$.
\end{lemma}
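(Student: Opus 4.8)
The plan is to deduce both bounds from the concentration of measure for the supremum of a centered Gaussian process (the Borell--TIS inequality), the supremum analogue of the Tsirelson--Ibragimov--Sudakov estimate already used in Lemma~\ref{prop:appdx:GeneralSAAresults:GPMGF}. I would work in the setting in which the lemma is actually invoked: $\mathbb{X}$ is compact and $f$ has continuous sample paths (Lemma~\ref{lem:OneShotKG:OneStep:ContSamplePath}) and a continuous, hence bounded, variance function, so that $\sigma^2 := \sup_{x \in \mathbb{X}} \mathbb{E}[f(x)^2] < \infty$. Writing $M := \sup_{x \in \mathbb{X}} f(x)$, continuity of the sample paths on the compact set $\mathbb{X}$ makes $M$ measurable (the supremum is attained and coincides with the supremum over a countable dense subset) and almost surely finite.

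The crux, which I expect to be the main obstacle, is to upgrade almost-sure finiteness of $M$ to integrability, $\mathbb{E}[M] < \infty$; pointwise finiteness of $f$ alone gives no control of the supremum. I would obtain this from the Landau--Shepp--Fernique theorem: for a centered separable Gaussian process whose supremum is almost surely finite, the supremum is square-exponentially integrable, so in particular $\mathbb{E}[M] < \infty$. (Alternatively, one can bound $\mathbb{E}[M]$ by Dudley's entropy integral, which converges because the canonical metric $d(x,y)^2 = \mathbb{E}[(f(x)-f(y))^2]$ is Lipschitz in $(x,y)$ by continuous differentiability of the covariance.) In the applications of this lemma the driving randomness is a finite-dimensional Gaussian vector $\epsilon$, and $A_{\mathrm{KG}}(\bx, \cdot )$ is a supremum of affine functions of $\epsilon$ with uniformly bounded gradients; in that case $\epsilon \mapsto M$ is convex and Lipschitz, and one may instead apply Theorem 5.5 of \cite{boucheron2013concentration} directly, bypassing Landau--Shepp--Fernique.

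With $\sigma^2 < \infty$ and $\mathbb{E}[M] < \infty$ established, Borell--TIS yields the two-sided sub-Gaussian tail
\begin{align*}
    \mathbb{P}\bigl(|M - \mathbb{E}[M]| > u\bigr) \le 2\,e^{-u^2/(2\sigma^2)}, \qquad u > 0.
\end{align*}
Hence $M - \mathbb{E}[M]$ is sub-Gaussian, and integrating this tail (equivalently, the standard sub-Gaussian moment generating function estimate $\mathbb{E}[e^{s(M-\mathbb{E}[M])}] \le e^{c\sigma^2 s^2}$) together with the factor $e^{t\mathbb{E}[M]}$ shows $\mathbb{E}[e^{tM}] < \infty$ for every $t \in \mathbb{R}$, which is the first claim.

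For the second claim I would reduce to the first via $\sup_{x} |f(x)| = \max\bigl(\sup_{x} f(x),\, \sup_{x}(-f(x))\bigr)$. Since $-f$ is again a centered Gaussian process with the same variance function, the first part applies verbatim to $\sup_{x}(-f(x))$. For $t \ge 0$ the bound $e^{t\max(a,b)} \le e^{ta} + e^{tb}$ gives $\mathbb{E}\bigl[e^{t\sup_x|f(x)|}\bigr] \le \mathbb{E}\bigl[e^{t\sup_x f(x)}\bigr] + \mathbb{E}\bigl[e^{t\sup_x(-f(x))}\bigr] < \infty$, while for $t < 0$ the estimate is trivial since $\sup_x|f(x)| \ge 0$ forces $e^{t\sup_x|f(x)|} \le 1$. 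This gives finiteness of the moment generating function of $\sup_x|f(x)|$ for all $t \in \mathbb{R}$ and completes the proof.
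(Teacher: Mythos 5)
Your proof is correct and follows essentially the same route as the paper's: both rest on the Borell--TIS inequality to obtain $\bbE[\sup_x f(x)] < \infty$ together with Gaussian concentration of $\sup_x f(x)$ about its mean (the paper integrates the tail bound directly after a change of variables, you integrate the equivalent sub-Gaussian MGF estimate, and both handle $\sup_x |f(x)|$ by the same symmetry reduction to $f$ and $-f$). The one point worth noting is that you explicitly flag---and repair, via the compact-domain/continuous-sample-path setting in which the lemma is invoked, plus Landau--Shepp--Fernique or Dudley---the gap between the lemma's stated pointwise hypothesis $|f(x)| < \infty$ a.s.\ and the almost-sure finiteness of the \emph{supremum} that Borell--TIS actually requires, a step the paper's proof passes over silently.
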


\begin{proof}[\textbf{Proof of Lemma~\ref{lem:appdx:Proofs:MGFsupffinite}}]
Let $\| f \|:= \sup_{x \in \mathbb X} f$. Since the sample paths of $f$ are almost surely finite, the Borell-TIS inequality \citep[Theorem 2.1]{adler1990introduction} states that $\bbE \, \| f \| < \infty$. 
We first consider $t > 0$ and begin by re-writing the expectation as 
\begin{align}
\bbE \bigl [  e^{t \, \| f \|}  \bigr ] &= \int_{0}^\infty 
\bbP\bigl( e^{t \, \| f \|}  > u \bigr)\, du \nonumber \\
&\le 1+ \int_{1}^\infty 
\bbP\bigl( e^{t \, \| f \|}  > u \bigr)\, du \nonumber \\
&=1 + \int_{1}^\infty 
\bbP\bigl(  \| f \| - \bbE \, \| f \|  > t^{-1}  \log u - \bbE \, \| f \| \bigr)\, du\nonumber  \\
&= 1+ t e^{t \, \bbE \| f \|}  \, \int_{-\bbE \, \|f\|}^\infty 
\bbP\bigl(  \| f \| - \bbE \, \| f \|  > u \bigr) \, e^{t u} \,  du\nonumber  \\
&\le 1+ t e^{t \, \bbE \|f \|}  \,   \biggl[\int_{ \min \{-\bbE \|f\|, \, 0\} }^0  +  \int_{0}^\infty \biggr] 
\bbP\bigl(  \| f \| - \bbE \, \| f \|  > u \bigr) \, e^{t u} \,  du \nonumber \\
& \le   1  + \bigl|  \bbE \, \|f\| \bigr| \, t e^{t \, \bbE \|f \|} +  t e^{t \, \bbE \|f \|}  \, \int_{0}^\infty 
\bbP\bigl(  \| f \| - \bbE \, \| f \|  > u \bigr) \, e^{t u} \,  du, 
\label{eq:positivet}
\end{align}
where a change of variables is performed in the third equality. Let $\sigma^2_{\mathbb X} := \sup_{x \in \mathbb X} \bbE [ f(x)^2 ]$. We can now use the Borell-TIS inequality to bound the tail probability in (\ref{eq:positivet}) by $2 e^{-u^2 / (2 \sigma_\mathbb{X}^2)}$, obtaining:
\begin{align*}
\bbE \bigl [  e^{t \, \| f \|}  \bigr ] \le 1  + \bigl|  \bbE \, \|f\| \bigr| \, t e^{t \, \bbE \|f \|} +  t e^{t \, \bbE \|f \|}  \, \int_{0}^\infty 
2 e^{-u^2 / (2 \sigma_\mathbb{X}^2) + {t u}} \,  du < \infty. 
\end{align*}

Similarly, for $t < 0$, we have: 
\begin{align}
\bbE \bigl [  e^{t \, \| |f| \|}  \bigr ] &= \int_{0}^\infty 
\bbP\bigl( e^{t \, \| |f| \|}  > u \bigr)\, du\nonumber  \\
&\le 1+ \int_{1}^\infty 
\bbP\bigl( e^{t \, \| |f| \|}  > u \bigr)\, du\nonumber  \\
&=1 + \int_{1}^\infty 
\bbP\bigl(  \| |f| \| - \bbE \, \| f \|  < t^{-1}  \log u - \bbE \, \| f \| \bigr)\, du\nonumber  \\
&= 1 - t e^{t \, \bbE \|f \|}  \, \int_{-\infty}^{-\bbE \, \|f\|} 
\bbP\bigl(  \| |f| \| - \bbE \, \| f \|  < u \bigr) \, e^{t u} \,  du\nonumber \\
&\le 1 - t e^{t \, \bbE \|f \|}  \,   \biggl[\int_0^{ \max \{-\bbE \|f\|, \, 0\} }  +  \int_{-\infty}^0 \biggr] 
\bbP\bigl(  \| |f| \| - \bbE \, \| f \|  < u \bigr) \, e^{t u} \,  du\nonumber  \\
& \le   1  - \bigl|  \bbE \, \|f\| \bigr| \, t e^{t \, \bbE \|f \|} -  t e^{t \, \bbE \|f \|}  \, \int_{-\infty}^0 
\bbP\bigl(  \| |f| \| - \bbE \, \| f \|  < u \bigr) \, e^{t u} \,  du, 
\label{eq:negativet}
\end{align}
The same can be done for (\ref{eq:negativet}) to conclude that $\bbE \bigl [  e^{t \, \| f \|}  \bigr ] < \infty$ for all $t$. For the case of $\bbE \bigl [  e^{t \, \| |f| \|}  \bigr ]$ and $t > 0$, we use a similar line of analysis as (\ref{eq:positivet}) along with the observation that
\[
\bbP\bigl(  \| |f| \| - \bbE \, \| f \|  > u \bigr) \le 2 \, \bbP\bigl(  \| f \| - \bbE \, \| f \|  > u \bigr).
\]
For $t < 0$, the result is clear because $\||f| \| \ge 0$.
\end{proof}

\begin{proof}[\textbf{Proof of Theorem~\ref{thm:OneShotKG:AsymptoticOptimality:finiteOKG}}]
    Since we are in the case of finite $\mathbb X$, let $\mu_n$ and $\Sigma_n$ denote the posterior mean vector and covariance matrix of our GP after conditioning on $\mathcal D_n$. First, we give a brief outline of the argument. We know from previous work (Lemma A.6 of \cite{frazier2009knowledge} or Lemma 3 of \cite{poloczek2017misokg}) that given a posterior distribution parameterized by $\mu$ and $\Sigma$, if $\alpha_{\textnormal{KG}}(x; \mu, \Sigma) = 0$ for all $x \in \mathbb X$, then an optimal design is identified:
 \[
\argmax_{x\in \mathbb X} \mu(x) = \argmax_{x \in \mathbb X} f(x)
\]
almost surely. Thus, we can use the true KG values as a ``potential function'' to quantify how the OKG policy performs asymptotically, even though we are never using the KG acquisition function for selecting points. We emphasize that the data that induce $\{\mu_n\}_{n \ge 0}$ and $\{\Sigma_n\}_{n \ge 0}$ are collected using the OKG policy. 

By a martingale convergence argument, there exists a limiting posterior distribution described by random variables $(\mu_\infty, \Sigma_\infty)$, i.e., $\mu_n \rightarrow \mu_\infty$ and $\Sigma_n \rightarrow \Sigma_\infty$ almost surely \citep[Lemma A.5]{frazier2009knowledge}. Let $A \subseteq \mathbb X$ be a subset of the feasible space. As was done in the proof of Theorem 4 of \cite{frazier2009knowledge}, we define the event:
\begin{equation}
H_A = \bigl \{\alpha_{\textnormal{KG}}(x; \mu_\infty, \Sigma_\infty) > 0,\, x\in A \bigr \} \cap  \bigl \{\alpha_{\textnormal{KG}}(x; \mu_\infty, \Sigma_\infty) = 0,\, x \not \in A \bigr \}.
\label{eq:HA}
\end{equation}
Note that $H_A$, for all possible subsets $A$, partition the sample space. Consider some $A \ne \emptyset$. By Lemma A.7 of \cite{frazier2009knowledge}, if $\alpha_{\textnormal{KG}}(x; \mu_\infty, \Sigma_\infty) > 0$, then $x$ is measured a finite number of times, meaning that there exists an almost surely finite random variable $M_0$ such that on iterations after $N_0$, OKG stops sampling from $A$. By the definition of $H_A$ in (\ref{eq:HA}), there must exist another random iteration index $M_1 \ge M_0$ such that when $n \ge N_1$,
\[
\min_{x \in \mathcal A} \alpha_{\textnormal{KG}}(x; \mu_n, \Sigma_n) > \max_{x \not \in \mathcal A} \alpha_{\textnormal{KG}}(x; \mu_n, \Sigma_n),
\]
implying that the exact KG policy must prefer points in $\mathcal A$ over all others after iteration $M_1$. This implies that
\[
H_A \subseteq \Bigl \{ \argmax_{x \in \mathbb X} \hat{\alpha}_{\textnormal{KG}, N_n}(x, \mu_n, \Sigma_n) \not \subseteq \argmax_{x \in \mathbb X} \alpha_{\textnormal{KG}}(x, \mu_n, \Sigma_n), \, \forall \, n \ge M_1 - 1  \Bigr \} =: E,
\]
because if not, then there exists an iteration after $M_0$ where an element from $A$ is selected, which is a contradiction. 
As shown in the proof of Lemma~\ref{lem:OneShotKG:OneStep:InnerSampling}, the next period posterior mean $\mathbb E \bigl[ g(f(x')) \, | \, \mathcal D_{\mathbf x} \bigr]$ is a GP. Therefore, by Lemma \ref{lem:appdx:Proofs:MGFsupffinite}, the moment generating function of $\max_{x' \in \mathbb X} \mathbb E \bigl[ g(f(x')) \, | \, \mathcal D_{\mathbf x} \bigr]$ is finite. Theorem 2.6 of \cite{homemDeMello2008convergence} establishes that our choice of $N_n$ guarantees 
\[
\mathbb P \Bigl [ \argmax_{x \in \mathbb X} \hat{\alpha}_{\textnormal{KG}, N_n}(x, \mu_n, \Sigma_n) \not \subseteq \argmax_{x \in \mathbb X} \alpha_{\textnormal{KG}}(x, \mu_n, \Sigma_n)\, | \, \mathcal F_n \Bigr] \le \delta,
\]
from which it follows that 
\[
\sum_{n=0}^\infty \log \mathbb P \Bigl [ \argmax_{x \in \mathbb X} \hat{\alpha}_{\textnormal{KG}, N_n}(x, \mu_n, \Sigma_n) \not \subseteq \argmax_{x \in \mathbb X} \alpha_{\textnormal{KG}}(x, \mu_n, \Sigma_n)\, | \, \mathcal F_n \Bigr] = -\infty.
\]
After writing the probability of $E$ as an infinite product and performing some manipulation, we see that the above condition implies that the probability of event $E$ is zero, and we conclude that $\bbP(H_A) = 0$ for any nonempty $A$. Therefore, $\bbP(H_\emptyset) = 1$ and $\alpha_{\textnormal{KG}}(x; \mu_\infty, \Sigma_\infty) = 0$ for all $x$ almost surely.
\end{proof}

\section{Illustration of Sample Average Approximation}
\label{appdx:sec:NonStochOpt}

QMC methods have been used in other applications in machine learning, including variational inference~\citep{buchholz2018quasi} and evolutionary strategies~\citep{rowland2018gcmc}, but rarely in BO. \citet{letham2019noisyei} use QMC in the context of a specific acquisition function. \botorch{}'s abstractions make it straightforward (and mostly automatic) to use QMC integration with any acquisition function.

Using SAA, i.e., fixing the base samples $E = \{\epsilon^i\}$, introduces a consistent bias in the function approximation. While i.i.d. re-sampling in each evaluation ensures that $\hat{\alpha}_{\!N}(\bx, \Phi, \calD)$ and $\hat{\alpha}_{\!N}(\by, \Phi, \calD)$ are conditionally independent given~$(\bx, \by)$, this no longer holds when fixing the base samples.

\begin{figure}[ht]
\begin{center}
\includegraphics[width=0.8\textwidth]{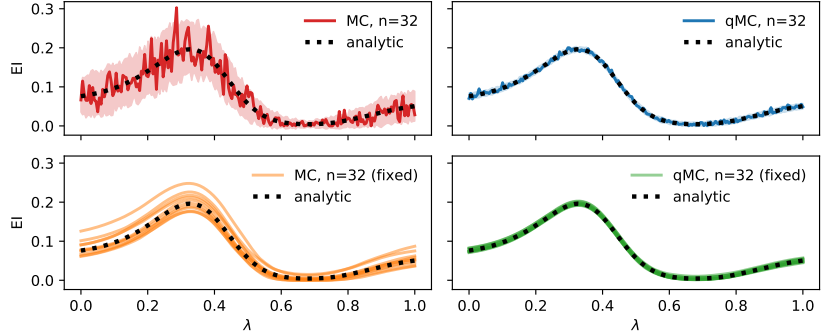}
\caption{MC and QMC acquisition functions, with and without re-drawing the base samples between evaluations. The model is a GP fit on 15 points randomly sampled from $\bbX = [0, 1]^6$ and evaluated on the (negative) Hartmann6 test function. The acquisition functions are evaluated along the slice $x(\lambda) = \lambda \mathbf{1}$.}
\label{fig:appdx:NonStochOpt:MC_QMC_qualitative}
\end{center}
\end{figure}

Figure~\ref{fig:appdx:NonStochOpt:MC_QMC_qualitative} illustrates this behavior for EI (we consider the simple case of $q=1$ for which we have an analytic ground truth available). 
The top row shows the MC and QMC version, respectively, when re-drawing base samples for every evaluation. The solid lines correspond to a single realization, and the shaded region covers four standard deviations around the mean, estimated across 50 evaluations. 
It is evident that QMC sampling significantly reduces the variance of the estimate. 
The bottom row shows the same functions for 10 different realizations of fixed base samples. Each of these realizations is differentiable w.r.t. $x$ (and hence $\lambda$ in the slice parameterization). In expectation (over the base samples), this function coincides with the true function (the dashed black line). Conditional on the base sample draw, however, the estimate displays a consistent bias. The variance of this bias (across re-drawing the base samples) is much smaller for the QMC versions.

\begin{figure*}[ht]
    \centering
    \includegraphics[width=\textwidth]{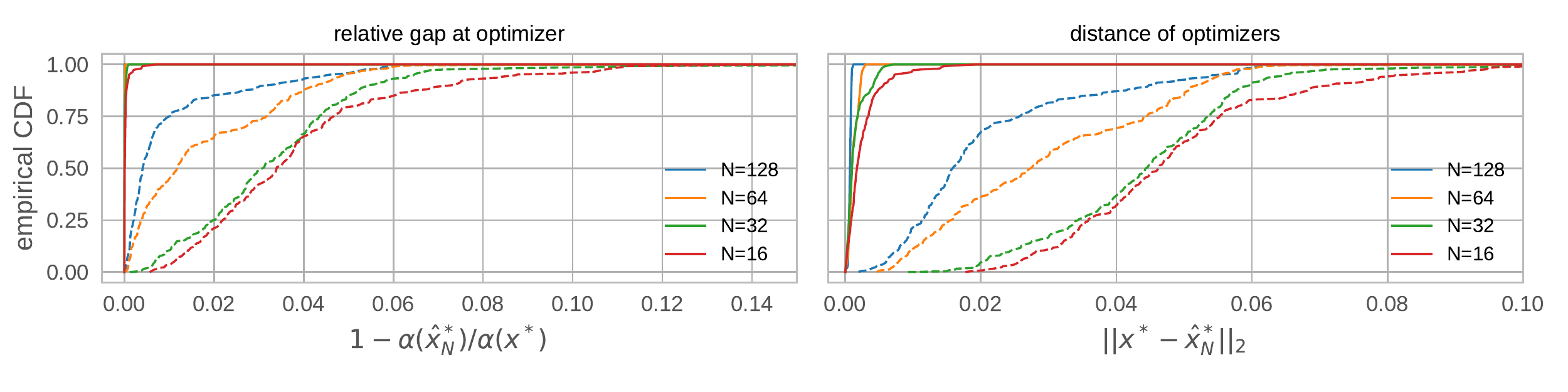}\\[-3ex]
    \caption{Performance for optimizing QMC-based EI. Solid lines: fixed base samples, optimized via L-BFGS-B. Dashed lines: re-sampling base samples, optimized via Adam (lr=0.025).}
    \label{fig:appdx:NonStochOpt:MC_QMC_optimizer_cdfs}
\end{figure*}

Even thought the function \emph{values} may show noticeable bias, the bias of the \emph{maximizer} (in~$\bbX$) is typically very small. Figure~\ref{fig:appdx:NonStochOpt:MC_QMC_optimizer_cdfs} illustrates this behavior, showing empirical cdfs of the relative gap $1 - \alpha(\hat{x}_N^*) / \alpha(x^*)$ and the distance $\|x^* - \hat{x}_N^*\|_2$ over 250 optimization runs for different numbers of samples, where $x^*$ is the optimizer of the analytic function EI, and $\hat{x}_N^*$ is the optimizer of the QMC approximation. The quality of the solution of the deterministic problem is excellent even for relatively small sample sizes, and generally better than of the stochastic optimizer. 

Figure~\ref{fig:appdx:NonStochOpt:MC_QMC_convergence_bias_variance} shows empirical mean and variance of the metrics from Figure~\ref{fig:appdx:NonStochOpt:MC_QMC_optimizer_cdfs} as a function of the number of MC samples~$N$ on a log-log scale. The stochastic optimizer used is Adam with a learning rate of 0.025. Both for the SAA and the stochastic version we use the same number of random restart initial conditions generated from the same initialization heuristic. 

Empirical asymptotic convergence rates can be obtained as the slopes of the OLS fit (dashed lines), and are given in Table~\ref{tab:appdx:NonStochOpt:MC_QMC_convergence_bias_variance_rates}. It is quite remarkable that in order to achieve the same error as the MC approximation with 4096 samples, the QMC approximation only requires 64 samples. This holds true for the bias and variance of the (relativized) optimal value as well as for the distance from the true optimizer. That said, as we are in a BO setting, we are not necessarily interested in the estimation error $\hat{\alpha}_{\!N}^* -\alpha^*$ of the optimum, but primarily in how far~$x_{\!N}^*$ is from the true optimizer $x^*$. 

\begin{figure*}[ht]
    \centering
    \includegraphics[width=\textwidth]{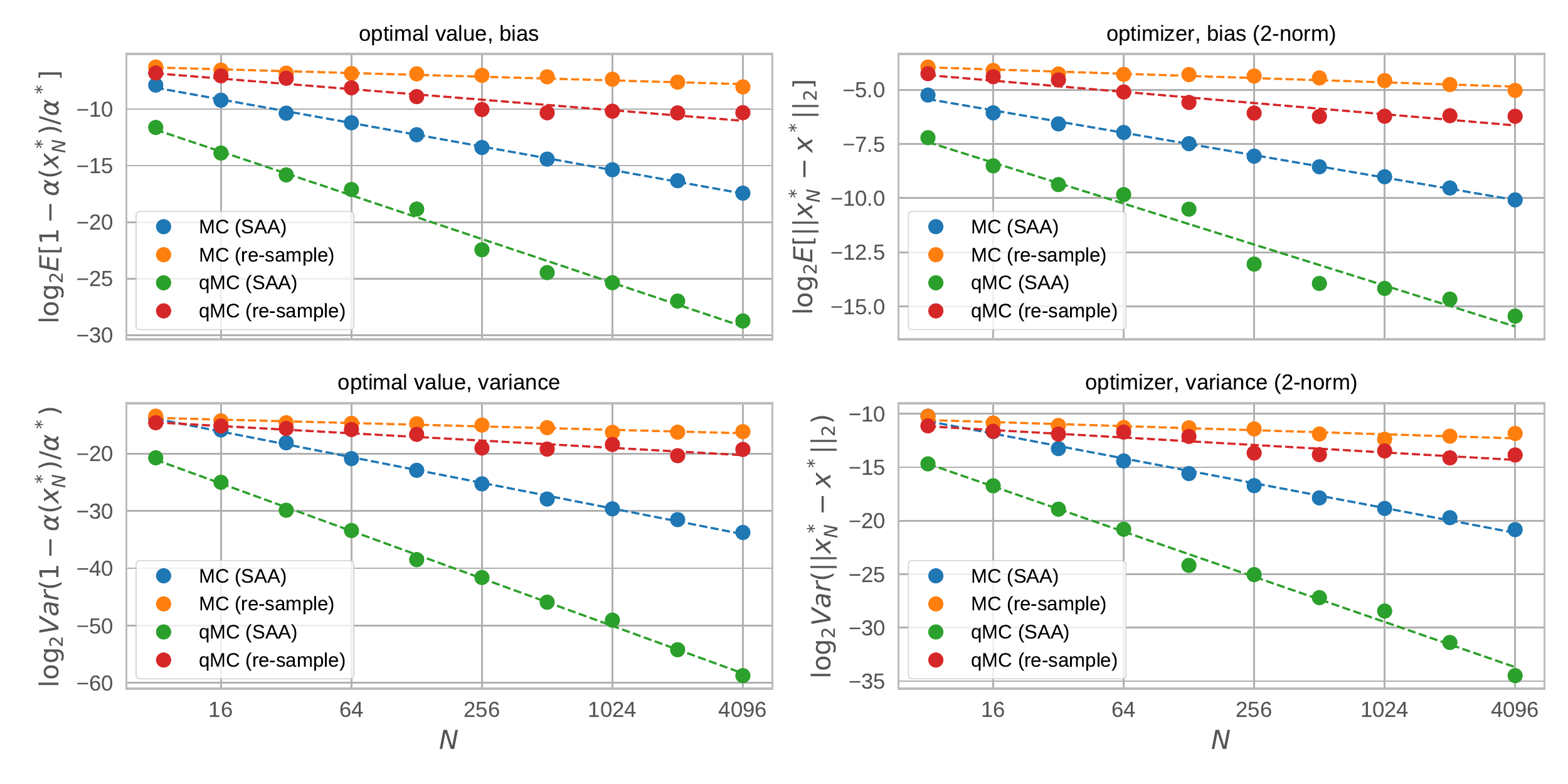}\\[-3ex]
    \caption{Bias and variance of optimizer $x_{\!N}^*$ and true EI value $\text{EI}(x_{\!N}^*)$ evaluated at the optimizer as a function of the number of (Q)MC samples for both SAA and stochastic optimzation (``re-sample'').}
    \label{fig:appdx:NonStochOpt:MC_QMC_convergence_bias_variance}
\end{figure*}

\begin{table}[ht]
    \centering
    \begin{tabular}{lrrrr}
    \toprule
    {} &    MC &  QMC &  MC$^\dagger$ &  QMC$^\dagger$ \\
    \midrule
    $\bbE[1 - \hat{\alpha}_{\!N}^*/\alpha^*]$   & $-0.52$ & $-0.95$ & $-0.10$ & $-0.26$ \\
    $\text{Var}(1 - \hat{\alpha}_{\!N}^*/\alpha^*)$   & $-1.16$ & $-2.11$ &  $-0.19$ & $-0.35$ \\
    $\bbE[\|x_{\!N}^* - x^*\|_2]$   & $-1.04$ & $-1.94$ & $-0.16$ & $-0.47$ \\
    $\text{Var}(\|x_{\!N}^* - x^*\|_2)$   & $-2.24$ & $-4.14$ & $-0.30$ & $-0.63$ \\
    \bottomrule
    \end{tabular}
    \vspace{1ex}
    \caption{Empirical asymptotic convergence rates for the setting in Figure~\ref{fig:appdx:NonStochOpt:MC_QMC_convergence_bias_variance} ($^\dagger$denotes re-sampling + optimization with Adam).}
    \label{tab:appdx:NonStochOpt:MC_QMC_convergence_bias_variance_rates}
\end{table}

A somewhat subtle point is that whether better optimization of the acquisition function results in improved closed-loop BO performance depends on the acquisition function as well as the underlying problem. More exploitative acquisition functions, such as EI, tend to show worse performance for problems with high noise levels. In these settings, not solving the EI maximization exactly adds randomness and thus induces additional exploration, which can improve closed-loop performance. While a general discussion of this point is outside the scope of this paper, \botorch{} does provide a framework for optimizing acquisition functions well, so that these questions can be compartmentalized and acquisition function performance can be investigated independently from the quality of optimization.

Perhaps the most significant advantage of using deterministic optimization algorithms is that, unlike for algorithms such as SGD that require tuning the learning rate, the optimization procedure is essentially hyperparameter-free. Figure~\ref{fig:appdx:NonStochOpt:StochasticLRDep} shows the closed-loop optimization performance of qEI for both deterministic and stochastic optimization for different optimizers and learning rates. While some of the stochastic variants (e.g. ADAM with learning rate 0.01) achieve performance similar to the deterministic optimization, the type of optimizer and learning rate matters. In fact, the rank order of SGD and ADAM w.r.t. to the learning rate is reversed, illustrating that selecting the right hyperparameters for the optimizer is itself a non-trivial problem.

\section{Additional Implementation Details}
\label{appdx:sec:ImplementatonDetails}

\subsection{Batch Initialization for Multi-Start Optimization}
\label{appdx:subsec:ImplementatonDetails:ICHeuristic}

For most acquisition functions, the optimization surface is highly non-convex, multi-modal, and (especially for ``improvement-based'' ones such as EI or KG) often flat (i.e. has zero gradient) in much of the domain~$\bbX$. Therefore, optimizing the acquisition function is  itself a challenging problem.

The simplest approach is to use zeroth-order optimizers that do not require gradient information, such as DIRECT or CMA-ES \citep{jones1993direct, hansen2001cmaes}. These approaches are feasible for lower-dimensional problems, but do not scale to higher dimensions. Note that performing parallel optimization over $q$ candidates in a $d$-dimensional feature space means solving a $qd$-dimensional optimization problem.

A more scalable approach incorporates gradient information into the optimization. As described in Section~\ref{sec:OptimizeAcq}, \botorch{} by default uses quasi-second order methods, such as L-BFGS-B. Because of the complex structure of the objective, the initial conditions for the algorithm are extremely important so as to avoid getting stuck in a potentially highly sub-optimal local optimum. To reduce this risk, one typically employs multi-start optimization (i.e. start the solver from multiple initial conditions and pick the best of the final solutions). To generate a good set of initial conditions,  \botorch{} heavily exploits the fast batch evaluation discussed in the previous section. 
Specifically, \botorch{} by default uses $N_{\text{opt}}$ initialization candidates generated using the following heuristic:
\begin{enumerate}[leftmargin=4ex,topsep=0ex,itemsep=0ex]
    \item Sample $\tilde{N}_0$ quasi-random $q$-tuples of points~$\tilde{\bx}_0 \in \bbR^{\tilde{N}_0 \times q \times d}$ from~$\bbX^q$ using quasi-random Sobol sequences.
    \item Batch-evaluate the acquisition function at these candidate sets: $\tilde v = \alpha(\tilde{\bx}_0; \Phi, \calD)$.
    \item Sample $N_0$ candidate sets $\bx \in \bbR^{N_0 \times q \times d}$ according to the weight vector $p \propto \exp(\eta v)$, where $v = (\tilde v - \hat{\mu}(\tilde v)) / \hat\sigma(\tilde v)$ with $\hat\mu$ and $\hat\sigma$ the empirical mean and standard deviation, respectively, and $\eta >0$ is a temperature parameter. Acquisition functions that are known to be flat in large parts of~$\bbX^q$ are handled with additional care in order to avoid starting in locations with zero gradients.
\end{enumerate}

Sampling initial conditions this way achieves an exploration/exploitation trade-off controlled by the magnitude of~$\eta$. As $\eta \rightarrow 0$ we perform Sobol sampling, while $\eta \rightarrow \infty$ means the initialization is chosen in a purely greedy fashion.  The latter is generally not advisable, since for large $\tilde{N}_0$ the highest-valued points are likely to all be clustered together, which would run counter to the goal of multi-start optimization. 
Fast batch evaluation allows evaluating a large number of samples ($\tilde{N}_0$ in the tens of thousands is feasible even for moderately sized models).

\subsection{Sequential Greedy Batch Optimization}
\label{appdx:subsec:ImplementatonDetails:SeqGreedy}

The pending points approach discussed in Section~\ref{sec:ModularBO} provides a natural way of generating parallel BO candidates using \emph{sequential greedy} optimization, where candidates are chosen sequentially, while in each step conditioning on selected points and integrating over the uncertainty in their outcome (using MC integration). By using a full MC formulation, in which we jointly sample at new and pending points, we avoid constructing an individual ``fantasy'' model for each sampled outcome, a common (and costly) approach in the literature~\citep{snoek2012practical}. In practice, the sequential greedy approach often performs well, and may even outperform the joint optimization approach, since it involves a sequence of small, simpler optimization problems, rather than a larger and complex one that is harder to solve.

\citep{wilson2018maxbo} provide a theoretical justification for why the sequential greedy approach works well with a class of acquisition functions that are submodular.

\section{Active Learning Example}
\label{sec:appdx:ActiveLearning}

Recall from Section~\ref{sec:Abstractions:ImplementationExamples} the negative integrated posterior variance (NIPV) \citep{seo2000activedata,chen2014stochkriging} of the model:
\begin{align}
\label{eq:appdx:ActiveLearning:NIV}
    \text{NIPV}(\bx) = - \int_{\bbX} \bbE \bigl[ \variance(f(x) \!\mid\! \calD_{\bx}) \mid \calD \bigr] \, dx.
\end{align}
We can implement~\eqref{eq:appdx:ActiveLearning:NIV} using standard \botorch{} components, as shown in Code Example~\ref{codex:appdx:ActiveLearning:qNIPV}. Here {\ttm{mc\_points}} is the set of points used for MC-approximating the integral. 
In the most basic case, one can use QMC samples drawn uniformly in~$\bbX$. By allowing for arbitrary {\ttm{mc\_points}}, we permit weighting regions of~$\bbX$ using non-uniform sampling. Using {\ttm{mc\_points}} as samples of the maximizer of the posterior, we recover the recently proposed Posterior Variance Reduction Search \citep{nguyen2017pvrs} for BO.

\begin{python}[
caption=Active Learning (NIPV),
label=codex:appdx:ActiveLearning:qNIPV,
float=ht
]
class qNegativeIntegratedPosteriorVariance(AnalyticAcquisitionFunction):

    @concatenate_pending_points
    @t_batch_mode_transform()
    def forward(self, X: Tensor) -> Tensor:
        fant_model = self.model.fantasize(
            X=X, sampler=self._dummy_sampler,
            observation_noise=True
        )
        sz = [1] * len(X.shape[:-2]) + [-1, X.size(-1)]
        mc_points = self.mc_points.view(*sz)
        with settings.propagate_grads(True):
            posterior = fant_model.posterior(mc_points)
        ivar = posterior.variance.mean(dim=-2)
        return -ivar.view(X.shape[:-2])
\end{python}

This acquisition function supports both parallel selection of points and asynchronous evaluation. Since MC integration requires evaluating the posterior variance at a large number of points, this acquisition function benefits significantly from the fast predictive variance computations in GPyTorch \citep{pleiss2018love, gardner2018gpytorch}.

To illustrate how NIPV may be used in combination with scalable probabilistic modeling, we examine the problem of efficient allocation of surveys across a geographic region. Inspired by~\citet{cutajar2019deep}, we utilize publicly-available data from     \citetalias{malariaatlas2019} dataset, which includes the yearly mean parasite rate (along with standard errors) of \emph{Plasmodium falciparum} at a $4.5 \text{km}^2$ grid spatial resolution across Africa.
In particular, we consider the following active learning problem: given a spatio-temporal probabilistic model fit to data from 2011-2016, which geographic locations in and around Nigeria should one sample in 2017 in order to minimize the model's error for 2017 across all of Nigeria?

We fit a heteroskedastic GP model to 2500 training points prior to 2017 (using a noise model that is itself a GP fit to the provided standard errors). We then select $q=10$ sample locations for 2017 using the NIPV acquisition function, and make predictions across the entirety of Nigeria using this new data. Compared to using no 2017 data, we find that our new dataset reduces MSE by 16.7\% on average (SEM = 0.96\%) across 60 subsampled datasets.
By contrast, sampling the new 2017 points at a regularly spaced grid results only in a 12.4\% reduction in MSE (SEM = 0.99\%). The mean relative improvement in MSE reduction from NIPV optimization is 21.8\% (SEM = 6.64\%). 
Figure~\ref{appdx:fig:ActiveLearning:MalariaSamples} shows the NIPV-selected locations on top of the base model's estimated parasite rate and standard deviation.

\begin{figure}[ht]
    \centering
    \includegraphics[width=0.85\columnwidth]{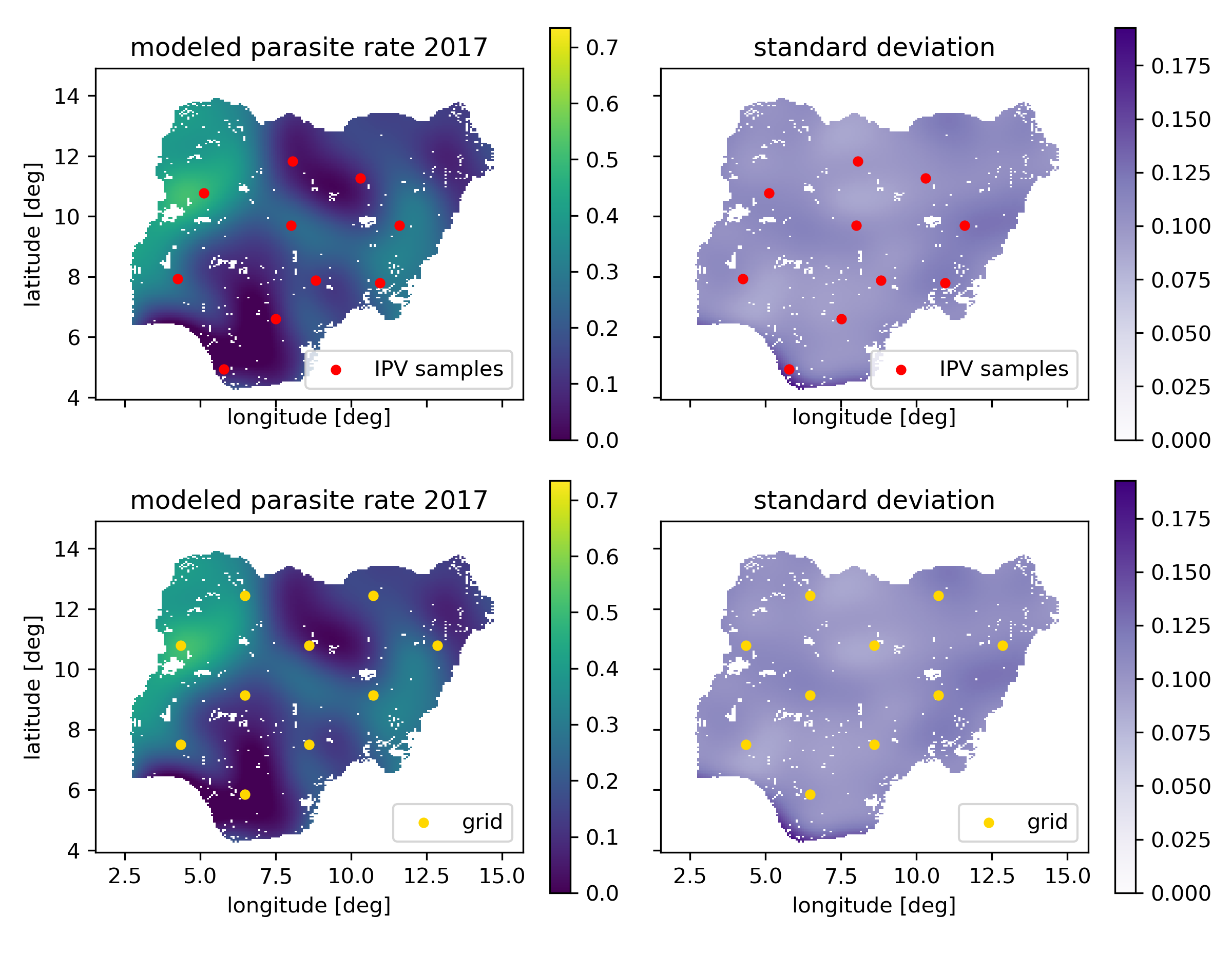}\\[-2.5ex]
    \caption{Locations for 2017 samples from IPV minimization and the base grid. Observe how the NIPV samples cluster in higher variance areas.}
    \label{appdx:fig:ActiveLearning:MalariaSamples}
\end{figure}

\section{Additional Implementation Examples}
\label{appdx:sec:Examples}

\subsection*{Comparing Implementation Complexity}

Many of \botorch{}'s benefits are qualitative, including the simplification and acceleration of implementing new acquisition functions. Quantifying this in a meaningful way is very challenging. Comparisons are often made in terms of Lines of Code (LoC) - while this metric is problematic when comparing across different design philosophies, non-congruent feature sets, or even programming languages, it does provides a general idea of the effort required for developing and implementing new methods. 

MOE’s KG involves thousands of LoC in C++ and python spread across a large number of files,\footnote{\url{https://github.com/wujian16/Cornell-MOE}} while our more efficient implementation is <30 LoC. \citet{astudillo2019composite} is a full paper in last year's installment of this conference,\footnote{Code available at \url{https://github.com/RaulAstudillo06/BOCF}} whose composite function method we implement and significantly extend (e.g to support KG) in 7 LoC using BoTorch's abstractions. The original NEI implementation is >250 LoC, while the one from Code Example~\ref{codex:Abstractions:ImplementationExamples:NEI:qNEI} is 14 LoC.

\subsection{Composite Objectives}
\label{appdx:subsec:Example:Composite}

We consider the Bayesian model calibration of a simulator with multiple outputs from Section 5.3 of \citet{astudillo2019composite}. In this case, the simulator from \citet{bliznyuk2008bayesian} models the concentrations of chemicals at 12 positions in a one-dimensional channel. Instead of modeling the overall loss function (which measures the deviation of the simulator outputs with a set of observations) directly, we follow \citet{astudillo2019composite} and model the underlying concentrations while utilizing a composite objective approach.
A powerful aspect of \botorch{}'s modular design is the ability to easily combine different approaches into one. For the composite function problem in this section this means that we can easily extend the work by \citet{astudillo2019composite} not only to use the Knowledge Gradient, but also to the ``parallel BO'' setting of jointly selecting $q>1$ points.
Figures~\ref{appdx:fig:Example:Composite:qKG_cf_q1} and \ref{appdx:fig:Example:Composite:qKG_cf_q1} show results for this with~$q=1$ and $q=3$, repspectively. The plots show log regret evaluated at the maximizer of the posterior mean averaged over 250 trials.
While the performance of EI-CF is similar for~$q=1$ and~$q=3$, KG-CF reaches lower regret significantly faster for~$q=1$ compared to $q=3$, suggesting that ``looking ahead`` is  beneficial in this context.

\begin{figure*}
    \begin{minipage}[l]{0.5\columnwidth}
    \centering
    \includegraphics[width=\columnwidth]{./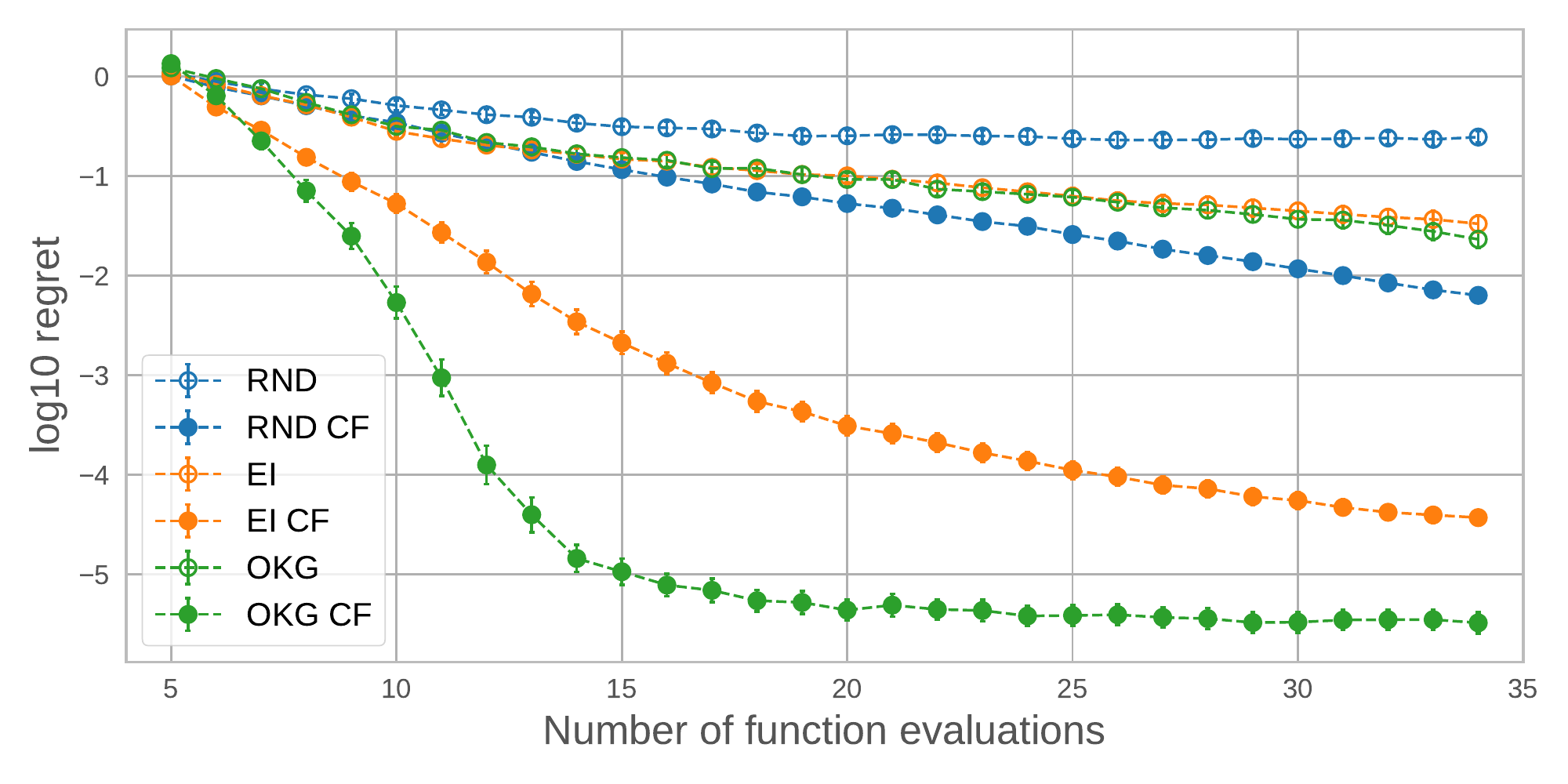}
    \caption{Composite function optimization for $q=1$}
    \label{appdx:fig:Example:Composite:qKG_cf_q1}
    \end{minipage}
    \hfill
    \begin{minipage}[l]{0.5\columnwidth}
    \centering
\includegraphics[width=\columnwidth]{./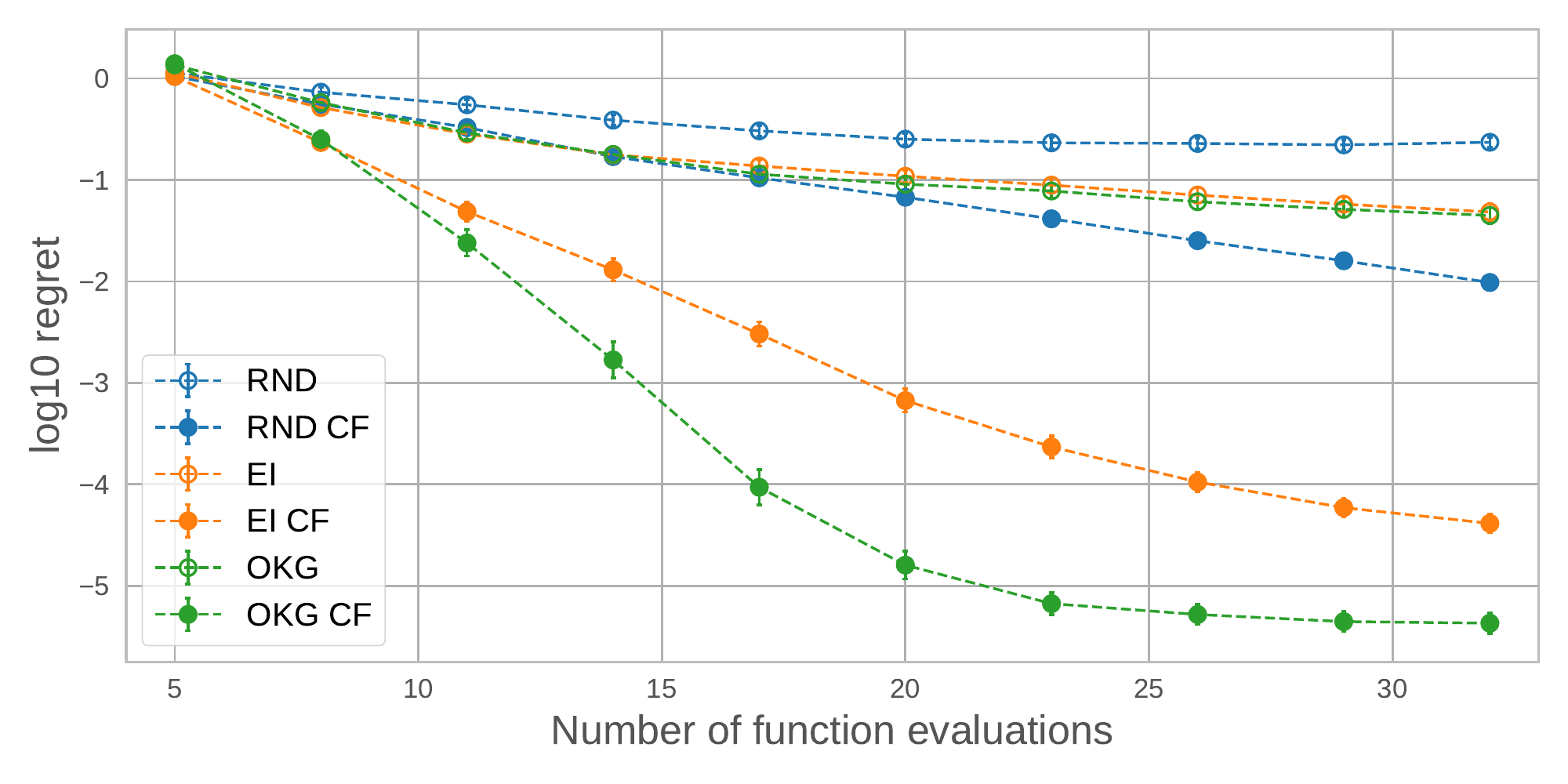}
\caption{Composite function optimization for $q=3$}
    \label{appdx:fig:Example:Composite:qKG_cf_q3}
\end{minipage}
\end{figure*}

\subsection{Generalized UCB}
\label{appdx:subsec:Example:GenUCB}

Code Example~\ref{codex:appdx:Example:GenUCB:qUCB} presents a  generalized version of parallel UCB from \citet{wilson2017reparamacq} supporting pending candidates, generic objectives, and QMC sampling. If no {\ttm{sampler}} is specified, a default QMC sampler is used. Similarly, if no {\ttm{objective}} is specified, the identity objective is assumed.

\begin{python}[
caption=Generalized Parallel UCB,
label=codex:appdx:Example:GenUCB:qUCB,
basicstyle=\small\ttfamily,,
float,
]
class qUpperConfidenceBound(MCAcquisitionFunction):

    def __init__(
        self,
        model: Model,
        beta: float,
        sampler: Optional[MCSampler] = None,
        objective: Optional[MCAcquisitionObjective] = None,
        X_pending: Optional[Tensor] = None,
    ) -> None:
        super().__init__(model, sampler, objective, X_pending)
        self.beta_prime = math.sqrt(beta * math.pi / 2)

    @concatenate_pending_points
    @t_batch_mode_transform()
    def forward(self, X: Tensor) -> Tensor:
        posterior = self.model.posterior(X)
        samples = self.sampler(posterior)
        obj = self.objective(samples)
        mean = obj.mean(dim=0)
        z = mean + self.beta_prime * (obj - mean).abs()
        return z.max(dim=-1).values.mean(dim=0)
\end{python}

\subsection{Full Code Examples}
\label{appdx:subsec:Example:FullImplementations}

In this section we provide full implementations for the code examples. Specifically, we include parallel Noisy EI (Code Example~\ref{codex:appdx:Example:NEI:qNEI}), \OKG{} (Code Example~\ref{codex:appdx:Example:OSKG:qKG}), and (negative) Integrated Posterior Variance (Code Example~\ref{codex:appdx:Example:NIPV:qNIPV}).

\begin{python}[
caption=Parallel Noisy EI (full),
label=codex:appdx:Example:NEI:qNEI,
basicstyle=\small\ttfamily,
float,
]
class qNoisyExpectedImprovement(MCAcquisitionFunction):

    def __init__(
        self,
        model: Model,
        X_baseline: Tensor,
        sampler: Optional[MCSampler] = None,
        objective: Optional[MCAcquisitionObjective] = None,
        X_pending: Optional[Tensor] = None,
    ) -> None:
        super().__init__(model, sampler, objective, X_pending)
        self.register_buffer("X_baseline", X_baseline)

    @concatenate_pending_points
    @t_batch_mode_transform()
    def forward(self, X: Tensor) -> Tensor:
        q = X.shape[-2]
        X_bl = match_shape(self.X_baseline, X)
        X_full = torch.cat([X, X_bl], dim=-2)
        posterior = self.model.posterior(X_full)
        samples = self.sampler(posterior)
        obj = self.objective(samples)
        obj_n = obj[...,:q].max(dim=-1).values
        obj_p = obj[...,q:].max(dim=-1).values
        return (obj_n - obj_p).clamp_min(0).mean(dim=0)
\end{python}

\begin{python}[
caption=One-Shot Knowledge Gradient (full),
label=codex:appdx:Example:OSKG:qKG,
basicstyle=\small\ttfamily,
float,
]
class qKnowledgeGradient(MCAcquisitionFunction):

    def __init__(
        self,
        model: Model,
        sampler: MCSampler,
        objective: Optional[Objective] = None,
        inner_sampler: Optional[MCSampler] = None,
        X_pending: Optional[Tensor] = None,
    ) -> None:
        super().__init__(model, sampler, objective, X_pending)
        self.inner_sampler = inner_sampler

    def forward(self, X: Tensor) -> Tensor:
        splits = [X.size(-2) - self.Nf, self.N_f]
        X, X_fantasies = torch.split(X, splits, dim=-2)
        # [...] some re-shaping for batch evaluation purposes
        if self.X_pending is not None:
            X_p = match_shape(self.X_pending, X)
            X = torch.cat([X, X_p], dim=-2)
        fmodel = self.model.fantasize(
            X=X,
            sampler=self.sampler,
            observation_noise=True,
        )
        obj = self.objective
        if isinstance(obj, MCAcquisitionObjective):
            inner_acqf = SimpleRegret(
                fmodel, sample=self.inner_sampler, objective=obj,
            )
        else:
            inner_acqf = PosteriorMean(fmodel, objective=obj)
        with settings.propagate_grads(True):
            values = inner_acqf(X_fantasies)
        return values.mean(dim=0)
\end{python}

\begin{python}[
caption=Active Learning (full),
label=codex:appdx:Example:NIPV:qNIPV,
basicstyle=\small\ttfamily,
float,
]
class qNegIntegratedPosteriorVariance(AnalyticAcquisitionFunction):
    def __init__(
        self,
        model: Model,
        mc_points: Tensor,
        X_pending: Optional[Tensor] = None,
    ) -> None:
        super().__init__(model=model)
        self._dummy_sampler = IIDNormalSampler(1)
        self.X_pending = X_pending
        self.register_buffer("mc_points", mc_points)

    @concatenate_pending_points
    @t_batch_mode_transform()
    def forward(self, X: Tensor) -> Tensor:
        fant_model = self.model.fantasize(
            X=X,
            sampler=self._dummy_sampler,
            observation_noise=True,
        )
        batch_ones = [1] * len(X.shape[:-2])
        mc_points = self.mc_points.view(*batch_ones, -1, X.size(-1))
        with settings.propagate_grads(True):
            posterior = fant_model.posterior(mc_points)
        ivar = posterior.variance.mean(dim=-2)
        return -ivar.view(X.shape[:-2])
\end{python}

\end{document}